\documentclass{article}

\usepackage[preprint]{neurips_2026}

\usepackage[utf8]{inputenc} 
\usepackage[T1]{fontenc}    
\usepackage{hyperref}       
\usepackage{url}            
\usepackage{booktabs}       
\usepackage{amsfonts}       
\usepackage{nicefrac}       
\usepackage{microtype}      
\usepackage{xcolor}         

\usepackage[pdftex]{graphicx}
\usepackage{color}
\usepackage{subcaption}
\usepackage{amssymb,amsmath,amsthm}
\usepackage{mathabx} 
\usepackage{bm}

\allowdisplaybreaks
\usepackage{mathrsfs}

\theoremstyle{plain}
\newtheorem{theorem}{Theorem}[section]
\newtheorem{lemma}[theorem]{Lemma}
\newtheorem{proposition}[theorem]{Proposition}
\newtheorem{corollary}[theorem]{Corollary}

\theoremstyle{definition}
\newtheorem{assumption}[theorem]{Assumption}
\newtheorem{definition}[theorem]{Definition}

\theoremstyle{remark}
\newtheorem{remark}[theorem]{Remark}

\title{Generalization Error Bounds for Picard-Type Operator Learning in Nonlinear Parabolic PDEs}

\author{%
  Koichi Taniguchi\\
   Department of Mathematical and Systems Engineering, Faculty of Engineering\\
  Shizuoka University\\
  3-5-1 Johoku, Chuo-ku, Hamamatsu, 432-8561, Japan \\
  \texttt{taniguchi.koichi@shizuoka.ac.jp} \\
  \And
  Sho Sonoda \\
  RIKEN AIP / CyberAgent \\
  1-4-1 Nihonbashi, Chuo-ku, Tokyo 103-0027, Japan \\
  \texttt{sho.sonoda@riken.jp} \\
}

\begin{document}

\maketitle

\begin{abstract}
Operator learning for partial differential equations (PDEs) aims to learn solution operators on infinite-dimensional function spaces from finite-resolution data. In this setting, it is important for the learned model to be discretization-invariant, or resolution-robust, and to reflect PDE-specific structure. It is therefore natural to ask how such structure should be encoded in the model architecture, hypothesis class, or learning procedure. In this paper, we study operator learning for solution operators of nonlinear parabolic PDEs based on Duhamel--Picard iteration. We formulate Picard iteration as an abstract state-transition model and present a theoretical framework for Picard-type operator learning. We derive implementation-agnostic generalization error bounds that separate the implementation error from the estimation error
associated with the abstract state-transition model induced by Picard iteration. A key consequence is that increasing the Picard depth reduces the Picard truncation error without causing an unbounded growth of the entropy-based estimation error. We also extend the analysis to long-time prediction by rolling out the same learned local model over successive time blocks. Finally, we illustrate the theory for nonlinear heat equations on the torus using a Picard-type Fourier neural operator as a concrete implementation.
\end{abstract}

\section{Introduction}
Operator learning is a deep learning framework for approximating maps between infinite-dimensional function spaces. It has become a central approach to building surrogate solvers for partial differential equations (PDEs) \citep{Lu2019,Li2020a,Kovachki2023,Kovachki2024}. Once trained, such models can predict the solution for a new input without rerunning a classical numerical solver, which is useful in repeated-query tasks such as uncertainty quantification, design optimization, inverse problems, and control. 
Unlike standard high-dimensional regression, however, the object to be learned
is an operator on function spaces. 
Even when the underlying inputs and outputs
are naturally viewed as functions, training data are often accessed through
finite-resolution representations, such as sensor values, grid values, or
truncated coefficients. 
A theory of operator learning must therefore account
both 
for statistical generalization and for robustness with respect to the choice of sensors or discretization.

For PDE surrogate modeling there is an additional issue: the target is not an arbitrary continuous operator. It is generated by an equation and carries analytic structure, such as semigroup and Duhamel formulas, stability estimates, fixed-point iterations, and sometimes comparison or conservation principles. This raises a basic statistical question. If an operator learning model is aligned with the solver structure of the PDE, can increasing its depth improve approximation without paying a growing statistical price? In generic deep-learning bounds, depth can enlarge the hypothesis class and worsen capacity estimates. In solver-aligned operator learning, by contrast, additional depth may simply perform more steps of a stable iterative scheme.

This paper studies this question for nonlinear parabolic PDEs whose local solution operators are obtained by Duhamel--Picard iteration. The work is motivated by two complementary ideas. First, \citet{Furuya2025} showed that neural-operator layers can be aligned with Picard steps and used this alignment to obtain quantitative approximation results without the usual exponential growth of parametric complexity for general operators. Second, \citet[Section~6.2]{Sonoda2025depth} identifies this Picard-aligned neural operator setting as a favorable instance of an implementation-agnostic state-transition framework: Picard iteration gives fast depth-driven approximation, while contractivity keeps the transition word-ball entropy controlled. The present paper turns that observation into a finite-sample generalization theory for Picard-type operator learning.

Our analysis separates three levels of the learning problem. The target concept is the PDE solution operator. The statistical hypothesis class is an abstract Picard class generated by admissible equation specifications and a finite number of Picard steps. A concrete neural operator is treated as an implementation of this abstract Picard class. This distinction is important: the main bounds are implementation-agnostic oracle bounds for predictors obtained by empirical risk minimization over the abstract Picard class and then realized by a concrete implementation map. Thus architecture-specific approximation or optimization issues do not disappear; they enter through an implementation error. The statistical term, however, is controlled by the Rademacher complexity of the Picard transition class rather than by the parameter complexity of a particular neural-operator architecture.

The resulting bounds decompose the prediction error into implementation error, Picard truncation error, statistical estimation error, and, in the finite-observation setting, reconstruction error from sensor data. Under the uniform contraction assumptions used throughout the paper, the Picard truncation error decays geometrically with the Picard depth $\ell$, whereas the entropy-based Rademacher bound remains controlled independently of $\ell$. This gives a concrete learning-theoretic explanation of when depth is useful in operator learning: depth is beneficial when it follows a stable solver structure, so that approximation improves while transition complexity saturates or grows slowly.

Our contributions are as follows. First, we formulate Picard-type operator learning for nonlinear parabolic PDEs in a setting where the underlying equation may be unknown but is assumed to belong to an admissible equation class. The goal is not to identify the equation itself, but to learn the corresponding solution operator using the Picard iteration structure as structural prior information. Second, we formulate this structure as an abstract state-transition model and connect PDE-specific quantities, such as contraction and stability constants, with statistical complexity through the Rademacher complexity of the abstract Picard hypothesis class. Third, we prove generalization error bounds that separate the implementation error, the Picard transition model error, the statistical estimation error, and, in the finite-observation setting, the reconstruction error from sensor data. Fourth, we show that increasing the Picard depth reduces the Picard transition model error without increasing the entropy-based Rademacher term, and we give a practical corollary describing how to balance Picard depth, sample size, and sensor resolution. Fifth, we extend the analysis to long-time prediction by rolling out the learned local model over time slabs and identify stability conditions under which the error remains controlled. Finally, we illustrate how the general theory can be used in a concrete setting by treating nonlinear heat equations on the torus with Fourier neural operators, where we derive explicit implementation error and generalization error bounds.

\section{Related Work}
In operator learning, various models have been proposed, such as DeepONet \citep{Lu2019,Lanthaler2022}, PCA-Net \citep{Bhattacharya2021,Lanthaler2023a}, and neural operators \citep{Li2020b,Li2020a,Kovachki2021,Gupta2021,Bonev2023,TC2023,Chen2024}. Their universal approximation properties have also been studied \citep{ChenChen1993,ChenChen1995,Kovachki2021,Lanthaler2022,Lanthaler2023a,LanthalerLiStuart2025}. On the other hand, when one considers general operators, the curse of parametric complexity, namely the exponential growth of model complexity caused by infinite dimensionality, may appear \citep{LanthalerStuart2026}. To avoid this difficulty, it is important to use additional structure of the target operator. Indeed, several works show that, by restricting the target to solution operators of PDEs, one can avoid the curse of parametric complexity in certain problem classes \citep{Kovachki2021,Chen2023,Lanthaler2023a,MS2023,SchwabSteinZech2023,Furuya2024,MarcatiSchwab2024,LanthalerStuart2026}. Generalization error and capacity analysis for operator learning have also been studied \citep{Lanthaler2022,Gopalani2022,KimKang2024,LaraBenitez2024,KovachkiLanthalerMhaskar2024}.

A work closely related to the present paper is \citet{Furuya2025}. They derived approximation error bounds for neural operators without the curse of parametric complexity by identifying forward propagation through the layers with the steps of Picard iteration. In this sense, their result provides an approximation-theoretic foundation for Picard-aligned neural operators.
Our work is also related to approaches that incorporate fixed-point structures into learning models, such as deep equilibrium models \citep{Bai2019,Marwah2023} and neural general operator networks based on Banach fixed point iterations \citep{FeischlSchwabZehetgruber2025}. For long-time prediction, related experimental studies include \citet{Michalowska2023} and \citet{Lippe2023}, which study error accumulation and long rollout behavior in neural PDE solvers.

This paper is based on the viewpoint that, in learning solution operators, the alignment between the internal structure of the solution operator and the model design is important. From this viewpoint, we analyze the Rademacher complexity of the abstract solution class generated by Picard iteration and derive generalization error bounds by using the implementation-agnostic framework of \citet{Sonoda2025depth}. This work provides a theoretical framework for understanding the generalization behavior of solver-aligned operator learning. See Appendix \ref{app:A} for more details. 

\section{Problem Setting}\label{sec:problem-setting}
In this section, we describe the operator learning setting addressed in this paper. Let $d\in \mathbb N$, $D\subset\mathbb R^d$ be a bounded Lipschitz domain, $T>0$ and $Q_T := D\times (0,T)$. We consider the initial-value problem of nonlinear parabolic PDEs:
\begin{equation}\tag{P}\label{P}
\partial_t u + Au = F(u)\text{ in } Q_T,\quad 
u(0) = u_0\text{ in }D,
\end{equation}
where $A$ is a linear operator and $F:\mathbb R\to\mathbb R$ is a nonlinear function satisfying $F(0)=0$ 
(for instance, the nonlinear heat equation \eqref{eq:torus-heat} on the torus with a locally Lipschitz nonlinearity, and also Appendix~\ref{appendix:B}). 
Here $u_0:D\to\mathbb R$ is the prescribed initial data and $u:Q_T\to\mathbb R$ is the corresponding solution.

In this paper, we allow the underlying equation to be unknown. Let $\mathcal A$ be a class of admissible linear operators, and let $\mathcal F$ be a class of admissible nonlinearities. We define the admissible equation class by $\Omega_{\rm eq}:=\mathcal A\times\mathcal F$. An element $\omega=(A,F)\in\Omega_{\rm eq}$ is called an admissible equation specification. For each $\omega=(A,F)$, we denote by $\mathcal G_\omega$ the solution operator to \eqref{P}. The unknown target operator $\mathcal G_\ast$ is assumed to be generated by some $\omega_\ast=(A_\ast,F_\ast)\in\Omega_{\rm eq}$, that is, $\mathcal G_\ast = \mathcal G_{\omega_\ast}$. The case where the equation is known corresponds to taking $\Omega_{\rm eq} = \{(A_\ast,F_\ast)\}$. The purpose of this paper is not to identify the equation specification $\omega_\ast$. Rather, the goal is to learn the solution operator $\mathcal G_\ast$ from data (see also Appendix \ref{app:G}). The equation class $\Omega_{\rm eq}$ is used as structural prior information for defining the operator class and for deriving generalization error bounds.

\subsection{Target Operator Class}
By the Duhamel principle, the problem \eqref{P} is formally equivalent to the integral formulation
\begin{equation}\tag{P'}\label{P'}
u(t) = S_A(t) u_0 + \int_0^t S_A(t-\tau) F(u(\tau))\, d\tau,
\end{equation}
where $S_A(t)$ is the semigroup generated by $A$. 
A solution to \eqref{P'} is often called a mild solution to \eqref{P}. In this paper, we take $U_{0,R}:=\{u_0\in C(D):\|u_0\|_{L^\infty(D)}\le R\}$ and $U_M:=\{u\in C([0,T]; C(D))):\|u\|_{L^\infty(Q_T)}\le M\}$, with $R,M>0$, as the initial-data space and the solution space, respectively. For $\omega=(A,F)\in\Omega_{\rm eq}$ and $u_0\in U_{0,R}$, define the Picard map $T_{\omega,u_0}$ by
\begin{equation}\label{eq:picard-map-omega}
T_{\omega,u_0}[v](t):=S_A(t)u_0+\int_0^t S_A(t-\tau)F(v(\tau))\,d\tau .
\end{equation}
In this paper, we impose the following assumption on $\Omega_{\rm eq}$.

\begin{assumption}\label{ass:contraction}
There exist $R,M,T,C_S>0$ and $\delta \in (0,1)$ such that, for any $\omega=(A,F)\in\Omega_{\rm eq}$ and $u_0\in U_{0,R}$, the map $\mathcal T_{\omega,u_0} : U_M \to U_M$ is $\delta$-contractive, i.e.,
\begin{equation}\label{picard_contraction}
\|\mathcal T_{\omega,u_0} [u] - \mathcal T_{\omega,u_0}[v]\|_{L^\infty(Q_T)} \le \delta \|u-v\|_{L^\infty(Q_T)}
\end{equation}
for any $u,v \in U_M$. Moreover, $\mathcal T_{\omega,u_0}$ is Lipschitz continuous with respect to $u_0$, i.e.,
\begin{equation}\label{picard_CDoI}
\|\mathcal T_{\omega,u_0} [u] - \mathcal T_{\omega,v_0}[u]\|_{L^\infty(Q_T)} \le C_S \|u_0-v_0\|_{L^\infty(D)}
\end{equation}
for any $u_0,v_0 \in U_{0,R}$ and $u \in U_M$.
\end{assumption}

By Banach's fixed point theorem, we have the following (see, e.g., \citep{Banach1922,Zeidler}).
\begin{proposition}\label{prop:LWP}
Suppose Assumption \ref{ass:contraction} holds. Then the following statements hold:
\begin{enumerate}
\item[\rm (i)] For any $\omega=(A,F)\in\Omega_{\rm eq}$, there exists a unique solution operator $\mathcal G_{\omega} : U_{0,R} \to U_M$ to \eqref{P'} such that
\begin{equation}\label{CDoI}
\|\mathcal G_{\omega} (u_0) - \mathcal G_{\omega}(v_0)\|_{L^\infty(Q_T)} \le \frac{C_S}{1-\delta} \|u_0 - v_0\|_{L^\infty(D)}
\end{equation}
for any $u_0, v_0 \in U_{0,R}$.
\item[\rm (ii)] For each $\omega=(A,F)\in\Omega_{\rm eq}$ and $u_0 \in U_{0,R}$, the Picard iteration $u^{(\ell)} := \mathcal T_{\omega,u_0}^{[\ell]}[0] = (\mathcal T_{\omega,u_0} \circ \cdots \circ \mathcal T_{\omega,u_0})[0]$ ($\ell =1,2,\cdots$)  converges to the solution $u := \mathcal G_{\omega} (u_0)$ in $L^\infty(Q_T)$ as $\ell \to \infty$ with
\begin{equation}\label{picard-rate}
\mathrm{d}(u^{(\ell)}, u) \le \frac{\delta^\ell}{1-\delta} \mathrm{d}(u^{(1)}, 0),\quad \ell = 1,2,\cdots.
\end{equation}
\end{enumerate}
\end{proposition}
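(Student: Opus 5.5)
The argument is a direct application of Banach's fixed point theorem on the complete metric space $(U_M,\mathrm d)$, where $\mathrm d(u,v):=\|u-v\|_{L^\infty(Q_T)}$. Completeness holds because $U_M$ is a closed subset of the Banach space $C([0,T];C(D))$ with the sup norm on $Q_T$. Here I take $D$ to be a compact domain (e.g. the torus) or read $C(D)$ as bounded continuous functions; in either case the sup norm and the $L^\infty(Q_T)$ norm coincide on continuous functions. The ball constraint $\|u\|_{L^\infty}\le M$ is preserved under uniform limits.

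For (i), fix $\omega$ and $u_0\in U_{0,R}$. By Assumption \ref{ass:contraction}, $\mathcal T_{\omega,u_0}$ maps $U_M$ into itself and is a $\delta$-contraction. Banach's theorem therefore gives a unique fixed point $u\in U_M$, which is by definition the unique mild solution of \eqref{P'} in $U_M$. Setting $\mathcal G_\omega(u_0):=u$ defines the solution operator. For the stability estimate \eqref{CDoI}, let $u=\mathcal G_\omega(u_0)$ and $v=\mathcal G_\omega(v_0)$ and insert the intermediate term $\mathcal T_{\omega,u_0}[v]$:
\[
\|u-v\| \le \|\mathcal T_{\omega,u_0}[u]-\mathcal T_{\omega,u_0}[v]\| + \|\mathcal T_{\omega,u_0}[v]-\mathcal T_{\omega,v_0}[v]\| \le \delta\|u-v\| + C_S\|u_0-v_0\|.
\]
This uses \eqref{picard_contraction} for the first term and \eqref{picard_CDoI} with $u=v\in U_M$ for the second. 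Since $\|u-v\|<\infty$ and $\delta<1$, rearranging gives the factor $C_S/(1-\delta)$.

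For (ii), the constant function $0$ lies in $U_M$, so every iterate $u^{(\ell)}=\mathcal T^{[\ell]}_{\omega,u_0}[0]$ lies in $U_M$. Put $u^{(0)}:=0$. Induction with \eqref{picard_contraction} gives $\mathrm d(u^{(k+1)},u^{(k)})\le\delta^k\,\mathrm d(u^{(1)},0)$. Summing the geometric tail, for $m>\ell$,
\[
\mathrm d(u^{(m)},u^{(\ell)})\le \frac{\delta^\ell}{1-\delta}\,\mathrm d(u^{(1)},0).
\]
Hence the iterates form a Cauchy sequence, and by completeness they converge to a limit in $U_M$. Continuity of $\mathcal T_{\omega,u_0}$ (it is Lipschitz) shows this limit is a fixed point, so by uniqueness it equals $u$. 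Letting $m\to\infty$ in the display yields \eqref{picard-rate}.

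No step is a real obstacle. The only point needing care is completeness of $U_M$ in the $L^\infty(Q_T)$ metric, i.e. identifying that metric with the sup norm on continuous functions. The remaining steps are the standard a priori bounds of the contraction principle.
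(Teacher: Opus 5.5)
Your proposal is correct and takes the same route as the paper, which proves the proposition simply by invoking Banach's fixed point theorem on $U_M$ with the $L^\infty(Q_T)$ metric. Your geometric-tail estimate for (ii) is the argument the paper writes out in the proof of Lemma~\ref{lem:picard_error1-app}, and your triangle-inequality derivation of \eqref{CDoI} is the standard step the paper leaves implicit.
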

Form the above, the target operator is the solution operator $\mathcal G_{\ast} : U_{0,R} \to U_M$ and it has the Picard iteration structure. The examples and details of Assumption \ref{ass:contraction} are given in Appendix \ref{appendix:B}.

\subsection{Initial Data Distribution}\label{subsec:initial-data-distribution}
We assume that the initial data $u_0$ are sampled from a probability measure $\Pi$ supported on $U_0 := \{v\in H^{s_0}(D):\|v\|_{H^{s_0}(D)}\le R_0\}$ with $s_0> d/2$. We note that the Sobolev embedding gives $H^{s_0}(D)\hookrightarrow C(\overline D)$, since $D$ is a bounded Lipschitz domain and $s_0>d/2$. In particular, point evaluations are well-defined. Moreover, we choose $R_0\le C_{\rm Sob}^{-1}R$ so that $U_0\subset U_{0,R}$, where $C_{\rm Sob}=C_{\rm Sob}(D,s_0)>0$ is some constant in the Sobolev inequality.

\begin{remark}\label{rem:GP}
A canonical example is obtained by conditioning a centered Gaussian process
on the Sobolev ball $U_0$.  If the covariance kernel $K$ is chosen so that
$u_0\sim GP(0,K)$ has $H^{s_0}(D)$-valued sample paths, then the Gaussian
law on the separable Hilbert space $H^{s_0}(D)$ has positive mass on every
ball centered at the origin; hence
$\mathbb P(\|u_0\|_{H^{s_0}(D)}\le R_0)>0$
\citep[Corollary~2.6.17]{Gine2015}.  For example, on bounded Lipschitz
domains, Mat\'ern kernels with smoothness parameter $\nu$ yield
$H^s(D)$-sample paths for every $s<\nu$, while squared-exponential kernels
allow arbitrary finite Sobolev smoothness; see, e.g.,
\citet{Henderson2024}.
\end{remark}

\subsection{Population and Empirical Risks}\label{subsec:risks}
\paragraph{Ideal risks.}
Let $\Gamma:U_0\to U_M$ be a candidate operator model for the target solution operator $\mathcal G_{\ast}:U_0\to U_M$ associated with \eqref{P'}. Let $\mu$ be a probability measure on $Q_T$ describing the distribution of output query points. We define the ideal population risk by
\begin{equation}\label{eq:ideal-risk}
L[\Gamma]:=\mathbb E_{u_0\sim\Pi, z\sim\mu}\left[\left|\Gamma(u_0)(z)-\mathcal G_{\ast}(u_{0})(z)\right|^2\right].
\end{equation}
The ideal risk corresponds to the full-information setting, in which the model
$\Gamma$ is evaluated as an operator acting on the whole initial function
$u_0$, rather than on finitely many observations of $u_0$.
This setting is natural for theoretical or synthetic datasets where $u_0$ is
explicitly generated and the corresponding solution values can be evaluated.

Given i.i.d. trajectory blocks
$(u_{0,i},z_{i,1},\ldots,z_{i,q})\sim \Pi\otimes\mu^{\otimes q}$,
the corresponding empirical risk is defined by
\[
\widehat L_{n,q}[\Gamma]
:=
\frac1n\sum_{i=1}^n\frac1q\sum_{j=1}^q
\ell_{\Lambda_{\mathrm{clip}}}
\left(
\Gamma(u_{0,i})(z_{i,j}),
\mathcal G_{\ast}(u_{0,i})(z_{i,j})
\right),
\]
where $\ell_{\Lambda_{\mathrm{clip}}}$ is a clipped squared loss.

\paragraph{Risks in finite observations.}
In many practical settings, the input is available only through a finite-resolution representation, such as sensor or grid values. We model this by the observation map. Given sensor locations $\Xi_m:=\{\xi_1,\dots,\xi_m\}\subset D$, we define the observation map $P_m : H^{s_0}(D) \to \mathbb R^m$ by
\begin{equation}\label{obser_map}
P_m u_0:=(u_0(\xi_1),\cdots,u_0(\xi_m))\in\mathbb R^m.
\end{equation}
For a finite-observation model $\Gamma_m:\mathbb R^m\times Q_T\to\mathbb R$, the finite-observation population risk is defined by
\begin{equation}\label{eq:clean-risk}
L_{m}[\Gamma_m]:=\mathbb E_{u_0\sim\Pi, z\sim\mu}\left[\left|\Gamma_m(P_m u_0,z)-\mathcal G_{\ast}(u_{0})(z)\right|^2\right],
\end{equation}
and the empirical risk is defined by
\begin{equation}\label{eq:empirical-observed-risk}
\widehat L_{m, n,q}[\Gamma_m]:=\frac1n\sum_{i=1}^n\frac1q\sum_{j=1}^q \ell_{\Lambda_{\mathrm{clip}}}\left(\Gamma_m(P_m u_{0,i},z_{i,j}),\mathcal G_{\ast}(u_{0,i})(z_{i,j})\right).
\end{equation}
We also lift $\Gamma_m$ to an operator model $\Gamma_m^\uparrow:U_0\to U_M$ by
$\Gamma_m^\uparrow(u_0)(z) := \Gamma_m(P_m u_0,z)$.
Thus, even though the model only uses finite sensor values, its output is compared with the full solution value $\mathcal G_{\ast}(u_0)(z)$. In this sense, the target of evaluation remains the full solution operator. Note that
$L[\Gamma_m^\uparrow] = L_{m}[\Gamma_m]$. 
This identity identifies the finite-observation risk with the restriction of the continuum-level risk to lifted, sensor-dependent models.

\section{Picard Iteration as Abstract State-Transition Model}\label{sec:3}
In this section, we rewrite the Picard iteration as an abstract state-transition model
so that it fits the framework of \citet{Sonoda2025depth}. We also derive
bounds for the Picard truncation error. The abstract Picard class introduced
here will serve as the main hypothesis class in the generalization analysis in
Section~\ref{sec:5}, where the estimation error is controlled by the Picard truncation
error, viewed as an oracle term, together with the Rademacher complexity of abstract Picard class. 
The details and proofs for this section are given in Appendix~\ref{app:C}.

\subsection{The $\ell$-step Picard Transition Model}
Define the state space $X:=U_{0,R}\times Q_T\times U_M$. For $(v,z,u)\in X$, where $z=(x,t)\in Q_T$, define $f_{\omega}(v,z,u):=(v,z,\mathcal T_{\omega,v}[u])$, $J(v,z):=(v,z,0)$, and $h(v,z,u):=u(z)$, where $f_{\omega}$ is the transition, $J$ is the initial embedding, and $h$ is the readout. Then the $\ell$-step Picard transition model is $$b_{\omega, \ell}(v,z):=h\circ f_{\omega}^{[\ell]}\circ J(v,z)=(\mathcal T_{\omega,v}^{[\ell]}[0])(z).$$ We define the abstract Picard class by $\mathcal B^{\Omega_{\rm eq}}_{\ell}:=\{b_{\omega,\ell} : \omega\in\Omega_{\rm eq}\}$.

\begin{lemma}[Picard truncation error]\label{lem:picard_error1}
Suppose that Assumption \ref{ass:contraction} holds. Then, 
for any $\omega\in\Omega_{\rm eq}$ and $\ell \in \mathbb N$, we have
\begin{equation*}
\sup_{v\in U_{0,R}} \|b_{\omega, \ell}(v,\cdot)-\mathcal G_{\omega}[v]\|_{L^\infty(Q_T)}\le \frac{M\delta^\ell}{1-\delta}.
\end{equation*}
Consequently, $\inf_{b \in \mathcal B^{\Omega_{\rm eq}}_{\ell}} L[b]\le M^2\delta^{2\ell}/(1-\delta)^2$.
\end{lemma}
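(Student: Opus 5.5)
We would prove Lemma~\ref{lem:picard_error1} by a direct contraction argument; it is essentially Proposition~\ref{prop:LWP}(ii) with the first-iterate term bounded. Fix $\omega\in\Omega_{\rm eq}$, $v\in U_{0,R}$, and write $u^{(k)}:=\mathcal T^{[k]}_{\omega,v}[0]$ and $u:=\mathcal G_\omega[v]$. By definition of $J$, $f_\omega$ and $h$, we have $b_{\omega,\ell}(v,\cdot)=u^{(\ell)}$ as a function on $Q_T$. So the claim reduces to the uniform bound $\|u^{(\ell)}-u\|_{L^\infty(Q_T)}\le M\delta^\ell/(1-\delta)$.

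Two facts are needed: $0\in U_M$, which is trivial, and invariance $\mathcal T_{\omega,v}(U_M)\subset U_M$ from Assumption~\ref{ass:contraction}. Together they give $u^{(k)}\in U_M$ for all $k$, so contraction may be applied at every step.

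\textbf{Route 1 (fixed point directly).} Use $u=\mathcal T_{\omega,v}[u]$. Contraction gives $\|u^{(k)}-u\|\le\delta\|u^{(k-1)}-u\|$, hence $\|u^{(\ell)}-u\|\le\delta^\ell\|0-u\|\le\delta^\ell M$ since $u\in U_M$. This is even sharper than the stated bound, because $\delta^\ell M\le M\delta^\ell/(1-\delta)$.

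\textbf{Route 2 (matching the stated form).} Use \eqref{picard-rate} with $\mathrm d(u^{(1)},0)=\|u^{(1)}\|_{L^\infty(Q_T)}\le M$, since $u^{(1)}\in U_M$. Either route works, and both are uniform in $v$. Taking the supremum over $v\in U_{0,R}$ then gives the first claim.

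For the risk consequence, apply the first claim with $\omega=\omega_\ast$, so $b_{\omega_\ast,\ell}\in\mathcal B^{\Omega_{\rm eq}}_\ell$. Since $\Pi$ is supported on $U_0\subset U_{0,R}$ and $\mu$ lives on $Q_T$, we have pointwise $|b_{\omega_\ast,\ell}(u_0,z)-\mathcal G_\ast(u_0)(z)|\le M\delta^\ell/(1-\delta)$ for $\Pi\otimes\mu$-a.e.\ $(u_0,z)$. Squaring and integrating bounds $L[b_{\omega_\ast,\ell}]$, and hence the infimum, by $M^2\delta^{2\ell}/(1-\delta)^2$.

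The only delicate point is that the $L^\infty$ bound is used at points $z$ drawn from $\mu$. Elements of $U_M$ lie in $C([0,T];C(D))$, so the essential supremum equals the pointwise supremum and point evaluation $h$ is legitimate. Checking this, together with the invariance $u^{(k)}\in U_M$ that keeps every iterate in the domain where contraction holds, is the main (mild) obstacle.
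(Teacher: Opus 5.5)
Your proposal is correct. Your Route 2 is exactly the paper's argument. The paper telescopes to get $d(u^{(N)},u^{(\ell)})\le \delta^\ell(1-\delta)^{-1}d(u^{(1)},u^{(0)})$, lets $N\to\infty$ using $u^{(N)}\to u$, and bounds $d(u^{(1)},0)=\|u^{(1)}\|_{L^\infty(Q_T)}\le M$ because $u^{(1)}\in U_M$. It then gets the risk bound by evaluating $L$ at $b_{\omega_\ast,\ell}$, just as you do.

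Your Route 1 is a genuinely different argument, and it is shorter. You compare each iterate directly with the fixed point, $\|u^{(k)}-u\|_{L^\infty(Q_T)}\le\delta\|u^{(k-1)}-u\|_{L^\infty(Q_T)}$, and so obtain $\|u^{(\ell)}-u\|_{L^\infty(Q_T)}\le\delta^\ell\|u\|_{L^\infty(Q_T)}\le M\delta^\ell$. This removes the factor $(1-\delta)^{-1}$, so the oracle term in Theorem~\ref{thm:generalization} would improve to $M^2\delta^{2\ell}$.

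The trade-off is in which quantity each bound controls. Route 1 controls the error by the size of the unknown fixed point. Here that size is at most $M$ because $\mathcal G_\omega[v]\in U_M$ by Proposition~\ref{prop:LWP}(i). The paper's route controls it by the size of the first iterate $\mathcal T_{\omega,v}[0]$, which is the classical Banach estimate reproduced in \eqref{picard-rate}. Both routes rely on the fixed point existing in $U_M$. Under Assumption~\ref{ass:contraction} both bounds on $M$ are available, so Route 1 gives the sharper constant at no extra cost.
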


\subsection{The $\ell$-step Finite-Observation Picard Transition Model}
We consider the case where the learner observes only $P_m u_0$. Let $\mathcal E_m\subset\{E:\mathbb R^m\to U_{0}\}$ be a class of reconstruction maps. For $E\in\mathcal E_m$, define the state space $X_m:=\mathbb R^m\times Q_T\times U_M$. For $(a,z,u) \in X_m$, define $f_{\omega,E}(a,z,u):=(a,z,\mathcal T_{\omega,E(a)}[u])$, $J_m(a,z):=(a,z,0)$, and $h_m(a,z,u):=u(z)$. Hence, the $\ell$-step finite-observation Picard transition model is $$b_{\omega,E,\ell}(a,z):=h_m\circ f_{\omega,E}^{[\ell]}\circ J_m(a,z)=(\mathcal T_{\omega, E(a)}^{[\ell]}[0])(z).$$ We define the abstract finite-observation Picard class by $\mathcal B^{\Omega_{\rm eq}}_{m,\ell}:=\{b_{\omega,E,\ell}:\omega\in \Omega_{\rm eq}, E\in\mathcal E_m\}$. The reconstruction error is defined by $\varepsilon_{\rm rec}^2(m):=\inf_{E\in\mathcal E_m}\mathbb E_{u_0\sim\Pi}[\|E(P_m u_0)-u_0\|_{L^\infty(D)}^2]$.

\begin{lemma}[Picard truncation error]\label{lem:picard_error2}
Suppose that Assumption \ref{ass:contraction} holds. Then,
for any $\omega\in\Omega_{\rm eq}$, $E\in\mathcal E_m$, $u_0 \in U_0$ and $\ell \in \mathbb N$, we have
\begin{equation*}
\|b_{\omega,E,\ell}(P_m u_0,\cdot)-\mathcal G_{\omega}[u_0]\|_{L^\infty(Q_T)}\le \frac{M\delta^\ell}{1-\delta}+\frac{C_S}{1-\delta}\|E(P_m u_0)-u_0\|_{L^\infty(D)}.
\end{equation*}
Consequently, $\inf_{b \in \mathcal B^{\Omega_{\rm eq}}_{m,\ell}} L_m(b)\le 2M^2\delta^{2\ell}/(1-\delta)^2 + 2C_S^2\varepsilon_{\rm rec}^2(m)/(1-\delta)^2$.
\end{lemma}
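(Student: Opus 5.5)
Since $b_{\omega,E,\ell}(P_m u_0,\cdot)=\mathcal T_{\omega,E(P_mu_0)}^{[\ell]}[0]=b_{\omega,\ell}(E(P_mu_0),\cdot)$, the plan is to insert the intermediate object $\mathcal G_\omega[E(P_m u_0)]$ and apply the triangle inequality in $L^\infty(Q_T)$:
\[
\|b_{\omega,E,\ell}(P_m u_0,\cdot)-\mathcal G_\omega[u_0]\|_{L^\infty(Q_T)}
\le \|b_{\omega,\ell}(E(P_mu_0),\cdot)-\mathcal G_\omega[E(P_mu_0)]\|_{L^\infty(Q_T)}
+\|\mathcal G_\omega[E(P_mu_0)]-\mathcal G_\omega[u_0]\|_{L^\infty(Q_T)}.
\]
Both evaluations of $\mathcal G_\omega$ are legitimate: $E$ maps into $U_0\subset U_{0,R}$ (using $R_0\le C_{\rm Sob}^{-1}R$), and $u_0\in U_0\subset U_{0,R}$.

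For the first term I will invoke Lemma~\ref{lem:picard_error1} at the point $v=E(P_mu_0)\in U_{0,R}$, which gives $M\delta^\ell/(1-\delta)$ directly. (Should one want to re-derive it, Proposition~\ref{prop:LWP}(ii) yields $\delta^\ell(1-\delta)^{-1}\|u^{(1)}-0\|$, and $\|u^{(1)}\|_{L^\infty}\le M$ because $u^{(1)}=\mathcal T_{\omega,v}[0]\in U_M$ and $0\in U_M$.) For the second term I will use the stability estimate \eqref{CDoI} of Proposition~\ref{prop:LWP}(i), giving $C_S(1-\delta)^{-1}\|E(P_mu_0)-u_0\|_{L^\infty(D)}$. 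Adding the two bounds gives the pointwise claim.

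For the consequence, I fix $E\in\mathcal E_m$, take $\omega=\omega_\ast$ so that $\mathcal G_\omega=\mathcal G_\ast$, and bound $|b(P_mu_0,z)-\mathcal G_\ast(u_0)(z)|^2$ by the squared sup norm. Then $(a+b)^2\le 2a^2+2b^2$ and taking expectation over $u_0\sim\Pi$, $z\sim\mu$ yields
\[
L_m(b_{\omega_\ast,E,\ell})\le \frac{2M^2\delta^{2\ell}}{(1-\delta)^2}+\frac{2C_S^2}{(1-\delta)^2}\mathbb E_{u_0\sim\Pi}\|E(P_mu_0)-u_0\|_{L^\infty(D)}^2.
\]
Finally I take the infimum over $E\in\mathcal E_m$, which turns the expectation into $\varepsilon^2_{\rm rec}(m)$, and bound $\inf_{b\in\mathcal B^{\Omega_{\rm eq}}_{m,\ell}}L_m(b)$ above by this quantity.

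The only delicate point, and the step I expect to need the most care, is checking that the reconstruction lands in the domain where Assumption~\ref{ass:contraction} applies; this is exactly the requirement $E:\mathbb R^m\to U_0\subset U_{0,R}$. Measurability of $u_0\mapsto \|E(P_mu_0)-u_0\|$ I would simply assume, as is implicit in the definition of $\varepsilon_{\rm rec}$.
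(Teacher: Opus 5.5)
Your proposal is correct and matches the paper's proof essentially step for step. The paper likewise inserts $\mathcal G_\omega[E(P_m u_0)]$, bounds the first term by Lemma~\ref{lem:picard_error1} at $\widetilde u_0=E(P_m u_0)\in U_{0,R}$ (justified by $E:\mathbb R^m\to U_0\subset U_{0,R}$) and the second by \eqref{CDoI}; it then gets the oracle bound by fixing $\omega_\ast$, applying $(a+b)^2\le 2a^2+2b^2$, taking expectations, and taking the infimum over $E\in\mathcal E_m$.
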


\section{Generalization Error Bounds}\label{sec:5}
Let $\mathfrak N_{\ell} \subset \{\Gamma : U_0 \to U_M\}$ and $\mathfrak N_{m,\ell}\subset\{\Gamma_m : \mathbb R^m \times Q_T \to \mathbb R\}$, and let $\operatorname{embed}:\mathcal B^{\Omega_{\rm eq}}_{\ell}\to\mathfrak N_{\ell}$ and $\operatorname{embed}_m:\mathcal B^{\Omega_{\rm eq}}_{m,\ell}\to\mathfrak N_{m,\ell}$ be implementation embeddings from the abstract Picard classes into the corresponding implementation classes, respectively. Define the implementation errors by $\varepsilon_{\rm imp}:=\sup_{b\in\mathcal B^{\Omega_{\rm eq}}_{\ell}}|L[\operatorname{embed} b]-L[b]|$ and $\varepsilon_{\rm imp}^{m}:=\sup_{b\in\mathcal B^{\Omega_{\rm eq}}_{m,\ell}}|L_m[\operatorname{embed}_m b]-L_m[b]|$.

Let $\mathcal X$ be an input space and let $\mathcal B\subset\{b:\mathcal X\times Q_T\to\mathbb R\}$. Let $\nu_{\mathcal X}$ be the distribution of the input variable on $\mathcal X$. For a trajectory-level sample $\mathcal S_{n,q}^{\mathcal X}=\{(a_i,\{z_{i,j}\}_{j=1}^q)\}_{i=1}^n$ with independent blocks $(a_i,z_{i,1},\ldots,z_{i,q})\sim \nu_{\mathcal X}\otimes\mu^{\otimes q}$, define the pointwise empirical Rademacher complexity by
\begin{equation*}
\widehat{\mathfrak R}^{\rm pt}_{\mathcal S_{n,q}^{\mathcal X}}(\mathcal B):=\mathbb E_\sigma\left[\sup_{b\in\mathcal B}\frac1{nq}\sum_{i=1}^n\sum_{j=1}^q\sigma_{i,j} b(a_i,z_{i,j})\right],
\end{equation*}
where $\{\sigma_{i,j}:1\le i\le n,1\le j\le q\}$ are independent Rademacher variables. In the bounds below we use the trajectory-normalized version
\begin{equation*}
\widehat{\mathfrak R}_{\mathcal S_{n,q}^{\mathcal X}}(\mathcal B):=\sqrt q\,\widehat{\mathfrak R}^{\rm pt}_{\mathcal S_{n,q}^{\mathcal X}}(\mathcal B)=\mathbb E_\sigma\left[\sup_{b\in\mathcal B}\frac1{n\sqrt q}\sum_{i=1}^n\sum_{j=1}^q\sigma_{i,j} b(a_i,z_{i,j})\right].
\end{equation*}
We use the following shorthand: $\widehat{\mathfrak R}^{\Omega_{\rm eq}}:=\widehat{\mathfrak R}_{\mathcal S_{n,q}^{U_0}}(\mathcal B^{\Omega_{\rm eq}}_{\ell})$ and $\widehat{\mathfrak R}_{m}^{\Omega_{\rm eq}}:=\widehat{\mathfrak R}_{\mathcal S_{n,q}^{P_m(U_0)}}(\mathcal B^{\Omega_{\rm eq}}_{m,\ell})$, with $\nu_{U_0}=\Pi$ and $\nu_{P_m(U_0)}=(P_m)_{\#}\Pi$.

The following theorem is an implementation-agnostic oracle bound in the sense of \citet{Sonoda2025depth}. The empirical minimization is performed over the abstract Picard class, and the selected abstract predictor is then realized by a concrete implementation map. Thus the bound does not claim that every parameter-space ERM over a neural-operator class enjoys the same guarantee; any architecture-specific approximation or optimization gap is represented through the implementation error.

\begin{theorem}[Generalization error bounds]\label{thm:generalization}
Suppose that Assumption \ref{ass:contraction} holds and 
that all predictors are clipped to $[-M,M]$. Then there exists a universal constant $C>0$ such that, for every $0<\rho<1$, the following statements hold with probability at least $1-\rho$:
\begin{enumerate}
\item[\rm (i)] Let $\widehat b_{\ell}\in\operatorname*{argmin}_{b\in\mathcal B^{\Omega_{\rm eq}}_{\ell}}\widehat L_{n,q}[b]$ and set $\widehat\Gamma_{\ell}:=\operatorname{embed}(\widehat b_{\ell})\in\mathfrak N_{\ell}$. Then
\begin{equation*}
L[\widehat\Gamma_{\ell}]\le \varepsilon_{\rm imp}+\frac{M^2\delta^{2\ell}}{(1-\delta)^2}+CM\widehat{\mathfrak R}^{\Omega_{\rm eq}}+CM^2\sqrt{\frac{\log(1/\rho)}{n}}.
\end{equation*}
\item[\rm (ii)] Let $\widehat b_{m,\ell}\in\operatorname*{argmin}_{b\in\mathcal B^{\Omega_{\rm eq}}_{m,\ell}}\widehat L_{m,n,q}[b]$ and set $\widehat\Gamma_{m,\ell}:=\operatorname{embed}_m(\widehat b_{m,\ell})\in\mathfrak N_{m,\ell}$. Then
\begin{equation*}
L_m[\widehat\Gamma_{m,\ell}]\le \varepsilon_{\rm imp}^{m}+\frac{2M^2\delta^{2\ell}}{(1-\delta)^2}+\frac{2C_S^2}{(1-\delta)^2}\varepsilon_{\rm rec}^2(m)+CM\widehat{\mathfrak R}_{m}^{\Omega_{\rm eq}}+CM^2\sqrt{\frac{\log(1/\rho)}{n}}.
\end{equation*}
\end{enumerate}
\end{theorem}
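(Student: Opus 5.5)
Both parts follow the standard ERM oracle decomposition. We split the risk into implementation error, approximation (oracle) error and estimation error, bound the oracle term by Lemma~\ref{lem:picard_error1} (resp.\ Lemma~\ref{lem:picard_error2}), and bound the estimation error by a uniform deviation inequality whose complexity term is the trajectory-normalized Rademacher complexity.

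For (i), by the definition of $\varepsilon_{\rm imp}$,
\[
L[\widehat\Gamma_\ell]\le \varepsilon_{\rm imp}+L[\widehat b_\ell].
\]
Since all predictors and the target $\mathcal G_\ast(u_0)(z)$ take values in $[-M,M]$, the clipped loss coincides with the squared loss on the relevant range. Write $L[b]=\mathbb E\,\ell(b)$, and let $b^\star\in\mathcal B^{\Omega_{\rm eq}}_\ell$ be a near-minimizer of $L$; taking $\omega=\omega_\ast$ gives $L[b^\star]\le M^2\delta^{2\ell}/(1-\delta)^2$ by Lemma~\ref{lem:picard_error1}. The usual chain is
\[
L[\widehat b_\ell]-L[b^\star]\le \bigl(L[\widehat b_\ell]-\widehat L_{n,q}[\widehat b_\ell]\bigr)+\bigl(\widehat L_{n,q}[b^\star]-L[b^\star]\bigr),
\]
because $\widehat L_{n,q}[\widehat b_\ell]\le\widehat L_{n,q}[b^\star]$. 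The first term is at most $\sup_{b}(L[b]-\widehat L_{n,q}[b])$. The second concerns a single fixed function, so Hoeffding gives $O(M^2\sqrt{\log(1/\rho)/n})$, using independence across the $n$ blocks with each block average bounded by $4M^2$.

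The supremum term is the main step, and the trajectory structure is what needs care. The $nq$ samples are not independent, since the $q$ query points in a block share the same $u_0$; only the blocks are independent. I would apply McDiarmid to the block-level functional $\sup_b(L[b]-\widehat L_{n,q}[b])$. Changing one block changes it by at most $4M^2/n$, which yields the deviation term $CM^2\sqrt{\log(1/\rho)/n}$.

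For the expectation I would symmetrize at the block level, introducing a ghost sample and block-wise independent copies. Inside each block one can further symmetrize coordinate-wise: conditionally on $a_i$, the points $z_{i,j}$ are i.i.d., so swapping $z_{i,j}$ with its ghost $z'_{i,j}$ individually is valid once both the sample and the ghost share $a_i$. To get this, I would first decompose $L-\widehat L$ into the fluctuation over $a_i$ of the block means and the conditional fluctuation over $z$. This produces independent signs $\sigma_{i,j}$. Taken at face value, the $1/(nq)$ normalization gives the pointwise complexity, which is dominated by $\widehat{\mathfrak R}=\sqrt q\,\widehat{\mathfrak R}^{\rm pt}$ since $q\ge1$. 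The trajectory-normalized form is thus a safe (weaker) upper bound, and it also absorbs the block-level part: the sum over $j$ of identical-sign terms is bounded via Khintchine by $\sqrt q$ times independent signs.

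Next, Talagrand's contraction applies because the squared loss is $4M$-Lipschitz on $[-M,M]$; this removes the loss at cost $CM$. Finally, expected Rademacher complexity is replaced by the empirical one via another McDiarmid step (bounded differences $O(M/\sqrt{n})$ after normalization). The constants, together with a union bound over the two or three events with $\rho/3$ each, are absorbed into the universal $C$. I expect this symmetrization under within-block dependence to be the delicate point.

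For (ii) the argument is identical on the input space $P_m(U_0)$ with law $(P_m)_\#\Pi$, using the lifted risk identity $L[\Gamma_m^\uparrow]=L_m[\Gamma_m]$ and $\varepsilon^m_{\rm imp}$. The only change is the oracle term: choose $\omega=\omega_\ast$ and $E$ a near-minimizer in $\varepsilon_{\rm rec}(m)$. Squaring the bound of Lemma~\ref{lem:picard_error2} with $(x+y)^2\le2x^2+2y^2$ and taking expectations gives
\[
\frac{2M^2\delta^{2\ell}}{(1-\delta)^2}+\frac{2C_S^2}{(1-\delta)^2}\,\varepsilon_{\rm rec}^2(m),
\]
up to an arbitrarily small slack absorbed by the infimum.
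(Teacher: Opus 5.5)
\textbf{Verdict: genuine gap, but fixable.} Your skeleton matches the paper: the ERM chain, the oracle terms from Lemmas~\ref{lem:picard_error1} and~\ref{lem:picard_error2}, and the lifted risk for (ii). The uniform-deviation step, which you rightly call delicate, has two concrete defects.

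\textbf{First defect: the order of contraction and concentration.} You contract first and only then pass from expected to empirical complexity, applying McDiarmid to $\widehat{\mathfrak R}_{\mathcal S_{n,q}^{\mathcal X}}(\mathcal B)$. Replacing one block changes this quantity by up to
$\frac1{n\sqrt q}\mathbb E_\sigma\sup_b\bigl|\sum_{j=1}^q\sigma_{i,j}\bigl(b(a_i,z_{i,j})-b(a_i',z_{i,j}')\bigr)\bigr|\le 2M\sqrt q/n$.
This bound is attained for rich classes, because the supremum sits inside $\mathbb E_\sigma$. The $\sqrt q$ gain from Khintchine is available only for a fixed $b$. McDiarmid therefore gives a deviation term $CM^2\sqrt{q\log(1/\rho)/n}$, which cannot be absorbed into a universal constant. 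Your claimed bounded differences $O(M/\sqrt n)$ do not hold.

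The order must be reversed. Apply the bounded-difference inequality at the level of the loss class, i.e. to $\mathbb E_\epsilon\sup_b\frac1n\sum_i\epsilon_i\phi_i(b)$ with block losses $\phi_i(b)=\frac1q\sum_j\ell_{\Lambda_{\mathrm{clip}}}(b(a_i,z_{i,j}),y_{i,j})\in[0,4M^2]$. Its per-block sensitivity is $O(M^2/n)$. Only then contract; this is legitimate because contraction inequalities hold for fixed sample points, i.e. conditionally on $\mathcal S_{n,q}^{\mathcal X}$. This is the order implicit in the paper's Remark~\ref{rem:uniform-deviation}.

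\textbf{Second defect: the passage to independent signs.} Moving from identical signs within a block to independent signs $\sigma_{i,j}$ ``via Khintchine'' is not valid under a supremum over $b$. Khintchine controls $\mathbb E_\sigma|\sum_j\sigma_j c_j|$ only for a fixed vector $c$. What you need is Maurer's vector-contraction inequality, which the paper uses. The map $r\mapsto\frac1q\sum_j\ell_{\Lambda_{\mathrm{clip}}}(r_j,y_{i,j})$ is $4M/\sqrt q$-Lipschitz in the Euclidean norm on $[-M,M]^q$. Hence $\mathbb E_\epsilon\sup_b\frac1n\sum_i\epsilon_i\phi_i(b)\le 4\sqrt2\,M\,\widehat{\mathfrak R}_{\mathcal S_{n,q}^{\mathcal X}}(\mathcal B)$ in one step, landing directly on the trajectory-normalized complexity.

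This also makes two parts of your plan unnecessary: the decomposition into block-mean and conditional fluctuations, and the separate Talagrand step. The block losses $\phi_i(b)$ are already i.i.d.\ with mean $L[b]$, so block-level symmetrization handles the whole deviation. The conditional piece only adds a term of order $\widehat{\mathfrak R}^{\rm pt}\le\widehat{\mathfrak R}$. If you keep the decomposition anyway, in (ii) you must condition on $u_{0,i}$, not on $a_i=P_m u_{0,i}$, because the targets $\mathcal G_\ast(u_{0,i})(z_{i,j})$ depend on $u_{0,i}$.

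With these two repairs, your argument becomes the paper's proof.
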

The proof is based on viewing the Picard class defined in Section~\ref{sec:3} as an abstract state-transition model and applying the bias-variance decomposition in the framework of \citet[Theorem~1]{Sonoda2025depth}. The empirical-process step is performed over the $n$ independent trajectory blocks. Since the clipped squared loss is $CM$-Lipschitz in the scalar prediction on $[-M,M]$, the block loss $r\mapsto q^{-1}\sum_{j=1}^q\ell_{\Lambda_{\mathrm{clip}}}(r_j,y_j)$ is $CM/\sqrt q$-Lipschitz with respect to the Euclidean norm on $\mathbb R^q$; Maurer's vector contraction inequality \citep{Maurer2016} then yields the terms $CM\widehat{\mathfrak R}^{\Omega_{\rm eq}}$ and $CM\widehat{\mathfrak R}_{m}^{\Omega_{\rm eq}}$. The concentration term is also at the trajectory-block level and therefore scales as $n^{-1/2}$. The model error is estimated by using Lemmas~\ref{lem:picard_error1} and \ref{lem:picard_error2}. The details of proof is given in Appendix \ref{app:D}. 

\begin{remark}\label{rem:5.2}
Theorem~\ref{thm:generalization} applies to any admissible equation class
$\Omega_{\rm eq}$ satisfying Assumption~\ref{ass:contraction}. Therefore, as
illustrated in Appendix~\ref{appendix:B}, it provides a general theoretical
framework for Picard-type operator learning across a broad range of nonlinear
parabolic PDEs (see Proposition \ref{prop:ass_app} and examples in Appendix~\ref{appendix:B}). In particular, the framework does not require the underlying
equation specification to be identified; it only requires that the target
equation belong to an admissible class satisfying the uniform Picard contraction
condition.
\end{remark}

\paragraph{A more explicit form.}
The Rademacher terms in Theorem~\ref{thm:generalization} can be bounded by metric entropy with respect to the empirical pointwise metric
\begin{equation*}
d_{\mathcal S,2}(b,b'):=\left(\frac1{nq}\sum_{i=1}^n\sum_{j=1}^q|b(a_i,z_{i,j})-b'(a_i,z_{i,j})|^2\right)^{1/2}.
\end{equation*}
Dudley's entropy integral gives $\widehat{\mathfrak R}_{\mathcal S_{n,q}^{\mathcal X}}(\mathcal B)\lesssim n^{-1/2}\int_0^{2M}\sqrt{\log N(\mathcal B,d_{\mathcal S,2},\varepsilon)}\,d\varepsilon$. Let $d_\Omega(\omega,\omega'):=\sup_{u_0\in U_{0,R}}\sup_{u\in U_M}\|\mathcal T_{\omega,u_0}[u]-\mathcal T_{\omega',u_0}[u]\|_{L^\infty(Q_T)}$. If $\log N(\Omega_{\rm eq},d_\Omega,\varepsilon)\le\mathcal H_\Omega(\varepsilon)$, then, by applying the Rademacher-complexity estimate of \citet{Sonoda2025depth} to the abstract Picard transition class and using the contraction property in Assumption~\ref{ass:contraction}, the full-information Rademacher term satisfies
\begin{equation*}
\widehat{\mathfrak R}^{\Omega_{\rm eq}}\lesssim \frac1{\sqrt n}\int_0^{2M}\sqrt{\mathcal H_\Omega(c(1-\delta)\varepsilon)}\,d\varepsilon,
\end{equation*}
where $c>0$ is a universal constant. In particular, this upper bound is independent of the Picard depth $\ell$. For the finite-observation class, one obtains similarly $\widehat{\mathfrak R}_{m}^{\Omega_{\rm eq}}\lesssim \mathcal I_{\Omega,m}/\sqrt n$, where $\mathcal I_{\Omega,m}$ is the corresponding entropy integral involving both $(\Omega_{\rm eq},d_\Omega)$ and the reconstruction class $\mathcal E_m$. This bound is also independent of $\ell$. Thus increasing $\ell$ reduces the Picard truncation error while the entropy-based statistical bound does not increase with $\ell$.

Moreover, if $\varepsilon_{\rm rec}^2(m)\le C_{\rm rec}^2m^{-2\beta}$, then
\begin{equation*}
L_m[\widehat\Gamma_{m,\ell}]\le \varepsilon_{\rm imp}^{m}+\frac{2M^2\delta^{2\ell}}{(1-\delta)^2}+\frac{2C_S^2C_{\rm rec}^2}{(1-\delta)^2}m^{-2\beta}+\frac{CM}{\sqrt n}\mathcal I_{\Omega,m}+CM^2\sqrt{\frac{\log(1/\rho)}{n}}.
\end{equation*}
For quasi-uniform sensors and stable reconstruction on $U_0\subset H^{s_0}(D)$, $s_0>d/2$, one typically has $\beta=(s_0-d/2)/d$.

\begin{corollary}[Practical depth and sensor scaling]\label{cor:practical-scaling}
Assume the setting of Theorem~\ref{thm:generalization}. Suppose that the full-information entropy integral is finite, $\mathcal I_{\Omega}:=\int_0^{2M}\sqrt{\mathcal H_\Omega(c(1-\delta)\varepsilon)}\,d\varepsilon<\infty$, and that the finite-observation entropy integral satisfies $\mathcal I_{\Omega,m}\le C_I m^\alpha$ for some $C_I>0$ and $\alpha\ge0$. Suppose also that $\varepsilon_{\rm rec}^2(m)\le C_{\rm rec}^2m^{-2\beta}$ for some $\beta>0$. Let $\ell_n:=\left\lceil \log n/(4|\log\delta|)\right\rceil$. Then the full-information bound with Picard depth $\ell_n$ takes the form
\begin{equation*}
L[\widehat\Gamma_{\ell_n}]\le \varepsilon_{\rm imp}+C\left(\frac{M^2}{\sqrt n}+\frac{M\mathcal I_\Omega}{\sqrt n}+M^2\sqrt{\frac{\log(1/\rho)}{n}}\right).
\end{equation*}
Moreover, if $m_n\asymp n^{1/(2(2\beta+\alpha))}$, then the finite-observation bound with $m=m_n$ and $\ell=\ell_n$ takes the form
\begin{equation*}
L_{m_n}[\widehat\Gamma_{m_n,\ell_n}]\le \varepsilon_{\rm imp}^{m_n}+C\left(n^{-\beta/(2\beta+\alpha)}+M^2\sqrt{\frac{\log(1/\rho)}{n}}\right),
\end{equation*}
where the constant $C$ may depend on $M,C_S,\delta,C_{\rm rec},C_I$, but not on $n$. In the quasi-uniform reconstruction example, $\beta=(s_0-d/2)/d$.
\end{corollary}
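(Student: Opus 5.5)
The plan is to substitute the choices $\ell=\ell_n$ and $m=m_n$ into the bounds of Theorem~\ref{thm:generalization}, with the Rademacher terms replaced by their entropy forms from the paragraph following the theorem. After that substitution, each term only has to be compared with a power of $n$.

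\emph{Full-information bound.} Start from Theorem~\ref{thm:generalization}(i) together with $\widehat{\mathfrak R}^{\Omega_{\rm eq}}\lesssim \mathcal I_\Omega/\sqrt n$. The only new ingredient is the truncation term. Since $\ell_n\ge \log n/(4|\log\delta|)$ and $0<\delta<1$,
\[
\delta^{2\ell_n}=e^{-2\ell_n|\log\delta|}\le e^{-\frac12\log n}=n^{-1/2},
\]
and hence $M^2\delta^{2\ell_n}/(1-\delta)^2\le M^2/((1-\delta)^2\sqrt n)$. The factor $(1-\delta)^{-2}$ is absorbed into $C$, which the corollary allows to depend on $\delta$. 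This gives the stated form. The key point is that the entropy bound is independent of $\ell$, so choosing $\ell$ as a function of $n$ does not affect the Rademacher term.

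\emph{Finite-observation bound.} Start from the explicit form displayed before the corollary, which already contains $\varepsilon_{\rm rec}^2(m)\le C_{\rm rec}^2m^{-2\beta}$ and $\widehat{\mathfrak R}_m^{\Omega_{\rm eq}}\lesssim\mathcal I_{\Omega,m}/\sqrt n\le C_I m^\alpha/\sqrt n$. The remaining terms are then:
\begin{itemize}
\item the truncation term, which is $\lesssim n^{-1/2}$ as above;
\item the reconstruction term, $\lesssim m^{-2\beta}$;
\item the statistical term, $\lesssim m^{\alpha}n^{-1/2}$.
\end{itemize}
With $m_n\asymp n^{\gamma}$ and $\gamma=1/(2(2\beta+\alpha))$, both $m$-dependent terms have the same exponent:
\[
-2\beta\gamma=-\frac{\beta}{2\beta+\alpha},\qquad \alpha\gamma-\frac12=\frac{\alpha-(2\beta+\alpha)}{2(2\beta+\alpha)}=-\frac{\beta}{2\beta+\alpha}.
\]
Since $\beta/(2\beta+\alpha)\le 1/2$, the truncation term $n^{-1/2}$ is dominated by $n^{-\beta/(2\beta+\alpha)}$. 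All constants depend only on $M,C_S,\delta,C_{\rm rec},C_I$, and the implicit constants in $m_n\asymp n^\gamma$ can be folded into $C$.

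\emph{Main obstacle.} The one subtle point is legitimacy rather than computation. Theorem~\ref{thm:generalization} must hold with the data-independent deterministic choices $\ell_n$ and $m_n$, and the entropy integral $\mathcal I_{\Omega,m}$ is assumed to control the Rademacher term uniformly in $\ell$. Both are fixed before sampling, so the high-probability statement applies directly for each $n$ and no union bound over $\ell$ or $m$ is needed. I would also remark that $m_n$ must be an integer; rounding it changes only constants.
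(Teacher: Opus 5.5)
Your proposal is correct and follows the paper's proof essentially step for step. It uses the same estimate $\delta^{2\ell_n}=e^{-2\ell_n|\log\delta|}\le n^{-1/2}$, substitutes the $\ell$-independent entropy bounds for the Rademacher terms into Theorem~\ref{thm:generalization}, and performs the same exponent balancing $m_n^{-2\beta}\asymp m_n^{\alpha}n^{-1/2}\asymp n^{-\beta/(2\beta+\alpha)}$, with $\beta/(2\beta+\alpha)\le 1/2$ absorbing the truncation term. The one item you omit is the closing claim $\beta=(s_0-d/2)/d$. The paper does not prove this either; it attributes it to the standard Sobolev sampling rate for quasi-uniform sensors, where fill distance $h\asymp m^{-1/d}$ gives $L^\infty$ reconstruction error $\lesssim h^{s_0-d/2}$.
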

The details of proof is given in Appendix \ref{app:D}. 
Corollary~\ref{cor:practical-scaling} gives the design rule suggested by the main theorem. Since the Picard truncation term is geometric, logarithmic depth $\ell_n=O(\log n)$ is enough to push it down to the statistical scale $n^{-1/2}$. The finite-observation case adds the usual trade-off between sensor resolution and statistical complexity: increasing $m$ reduces reconstruction error but may enlarge the reconstruction-class entropy.

\section{Long-Time Prediction by Local Rollout}\label{sec:long-time}
In this section, we consider long-time prediction based on the learned local model $\widehat\Gamma_\ell$. The aim is not to learn a new solution operator on $Q_{\kappa T}:=D\times(0,\kappa T)$, but to iterate the same local model over successive time blocks. We assume that the local model approximates the true local solution operator $\mathcal G_\ast$ sufficiently well on the states visited by the rollout (i.e. the rollout-relevant local error \eqref{local_error} below is sufficiently small). Since the finite-observation case can be treated in the same way, we discuss
only the full-information case here.

Define $\psi:U_M\to L^\infty(D)$ by $\psi(u):=u(\cdot,T)$, and assume that $\|\psi(u)-\psi(v)\|_{L^\infty(D)}\le L_\psi\|u-v\|_{L^\infty(Q_T)}$. Note that $L_{\psi} \le 1$. Let $\Phi_\ast:=\psi\circ\mathcal G_\ast$. For $\kappa\in\mathbb N$, define $U_0^{[\kappa]}:=\{u_0\in U_0:\Phi_\ast^j(u_0)\in U_{0,R}\text{ for }j=0,\ldots,\kappa\}$. This condition ensures that the same local solution theory can be applied on each time block. We assume $\Pi(U_0^{[\kappa]})>0$ and write $\Pi_\kappa:=\Pi(\cdot\mid U_0^{[\kappa]})$.

For $u_0\in U_0^{[\kappa]}$, define the exact states by $v_0=u_0$ and $v_{j+1}=\Phi_\ast(v_j)$. The exact solution on the $j$-th block is $\mathcal G_\ast[v_j]$ after the time shift $t=jT+s$, $s\in[0,T]$. Next, define the approximate transition by $\widehat\Phi_\ell:=\mathcal P_R\circ\psi\circ\widehat\Gamma_\ell$, where $\mathcal P_R:L^\infty(D)\to U_{0,R}$ is the pointwise clipping map. The approximate states are $\widehat v_0=u_0$ and $\widehat v_{j+1}=\widehat\Phi_\ell(\widehat v_j)$. Thus the model prediction on the $j$-th block is $\widehat\Gamma_\ell[\widehat v_j]$. For $j=0,\ldots,\kappa-1$, define the block-wise risk by
\begin{equation*}
L_j^{\rm block}[\widehat\Gamma_\ell]:=\mathbb E_{u_0\sim\Pi_\kappa,\ z\sim\mu}\left[\left|\widehat\Gamma_\ell[\widehat v_j](z)-\mathcal G_\ast[v_j](z)\right|^2\right].
\end{equation*}
Let $\mathcal V_\kappa(\widehat\Gamma_\ell):=\{v_j(u_0),\widehat v_j(u_0):u_0\in U_0^{[\kappa]},\ j=0,\ldots,\kappa\}\subset U_{0,R}$, and define the rollout-relevant local error by
\begin{equation}\label{local_error}
\varepsilon_{\rm loc,\ell}^{[\kappa]}:=\sup_{v\in\mathcal V_\kappa(\widehat\Gamma_\ell)}\left\|\widehat\Gamma_\ell[v]-\mathcal G_\ast[v]\right\|_{L^\infty(Q_T)} .
\end{equation}

\begin{theorem}[Block-wise long-time error propagation]\label{thm:blockwise-long-time}
Suppose that Assumption~\ref{ass:contraction} holds. Then, for every $j=0,\ldots,\kappa-1$,
\begin{equation*}
L_j^{\rm block}[\widehat\Gamma_\ell]\le \left(\sum_{r=0}^{j}\left(L_\psi\frac{C_S}{1-\delta}\right)^r\right)^2\left(\varepsilon_{\rm loc,\ell}^{[\kappa]}\right)^2 .
\end{equation*}
\end{theorem}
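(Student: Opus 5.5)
We argue pointwise in $u_0\in U_0^{[\kappa]}$, control the state discrepancy $e_j:=\|\widehat v_j-v_j\|_{L^\infty(D)}$ by a linear recursion, and only at the end square and take the expectation over $(u_0,z)\sim\Pi_\kappa\otimes\mu$. Write $K:=L_\psi C_S/(1-\delta)$ and $\varepsilon:=\varepsilon_{\rm loc,\ell}^{[\kappa]}$.

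\emph{Step 1: block-wise prediction error.} For each $j$, split
\begin{equation*}
\widehat\Gamma_\ell[\widehat v_j]-\mathcal G_\ast[v_j]=\bigl(\widehat\Gamma_\ell[\widehat v_j]-\mathcal G_\ast[\widehat v_j]\bigr)+\bigl(\mathcal G_\ast[\widehat v_j]-\mathcal G_\ast[v_j]\bigr).
\end{equation*}
The state $\widehat v_j$ lies in $\mathcal V_\kappa(\widehat\Gamma_\ell)\subset U_{0,R}$, so the first term is at most $\varepsilon$ in $L^\infty(Q_T)$ by \eqref{local_error}. For $j\ge1$, membership in $U_{0,R}$ follows from the clipping $\mathcal P_R$; for $j=0$, it follows from $U_0\subset U_{0,R}$. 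The exact state $v_j$ lies in $U_{0,R}$ by the definition of $U_0^{[\kappa]}$. Since $\mathcal G_\ast=\mathcal G_{\omega_\ast}$ with $\omega_\ast\in\Omega_{\rm eq}$, the Lipschitz bound \eqref{CDoI} of Proposition~\ref{prop:LWP}(i) controls the second term by $\frac{C_S}{1-\delta}e_j$. Hence
\begin{equation*}
\|\widehat\Gamma_\ell[\widehat v_j]-\mathcal G_\ast[v_j]\|_{L^\infty(Q_T)}\le \varepsilon+\tfrac{C_S}{1-\delta}e_j .
\end{equation*}

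\emph{Step 2: recursion for the states.} We have $e_0=0$, and $v_{j+1}=\Phi_\ast(v_j)=\mathcal P_R(\Phi_\ast(v_j))$ because $v_{j+1}\in U_{0,R}$. Pointwise clipping to $[-R,R]$ is $1$-Lipschitz in $L^\infty(D)$, so
\begin{equation*}
e_{j+1}\le \|\psi(\widehat\Gamma_\ell[\widehat v_j])-\psi(\mathcal G_\ast[v_j])\|_{L^\infty(D)}\le L_\psi\Bigl(\varepsilon+\tfrac{C_S}{1-\delta}e_j\Bigr)=L_\psi\varepsilon+Ke_j .
\end{equation*}
Unrolling this recursion gives $e_j\le L_\psi\varepsilon\sum_{r=0}^{j-1}K^r$. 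Substituting into Step 1 yields
\begin{equation*}
\varepsilon+\tfrac{C_S}{1-\delta}L_\psi\varepsilon\sum_{r=0}^{j-1}K^r=\varepsilon\Bigl(1+\sum_{r=1}^{j}K^r\Bigr)=\varepsilon\sum_{r=0}^jK^r .
\end{equation*}
This bound holds in $L^\infty(Q_T)$ and hence at every $z\in Q_T$.

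\emph{Step 3: conclusion.} Squaring the pointwise bound and integrating against $\Pi_\kappa\otimes\mu$ gives the statement. The bound is uniform in $u_0$, so no probabilistic argument is needed.

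\emph{Main obstacle.} The delicate point is bookkeeping of the domains rather than any estimate. We must make sure every state fed into $\mathcal G_\ast$ or $\widehat\Gamma_\ell$ lies in $U_{0,R}$, so that \eqref{CDoI} and the supremum in \eqref{local_error} apply. We must also make sure the clipping $\mathcal P_R$ does not enlarge the discrepancy. Both points are handled by the observations in Steps 1 and 2: $v_j\in U_{0,R}$ via $U_0^{[\kappa]}$, $\widehat v_j\in U_{0,R}$ via $\mathcal P_R$, and $\mathcal P_R$ is $1$-Lipschitz and fixes $U_{0,R}$.
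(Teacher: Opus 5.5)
Your proposal is correct and follows the paper's proof essentially step for step. It uses the same triangle split through $\mathcal G_\ast[\widehat v_j]$, the nonexpansiveness of $\mathcal P_R$ together with $\mathcal P_R v_{j+1}=v_{j+1}$, the recursion $e_{j+1}\le L_\psi\varepsilon+Ke_j$ unrolled from $e_0=0$, and a uniform pointwise bound that is squared and integrated; the only difference is cosmetic, namely that you state the block bound $\varepsilon+\frac{C_S}{1-\delta}e_j$ first and feed it into the recursion, whereas the paper derives it inside the recursion and then reuses it.
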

This result shows that long-time prediction is obtained by deterministic propagation of the local error. Since no additional empirical risk minimization is performed after $\widehat\Gamma_\ell$ has been learned, the rollout step does not introduce a new Rademacher complexity term. The block index $j$ enters only through the stability factor in Theorem~\ref{thm:blockwise-long-time}. 
As in Theorem~\ref{thm:generalization}, 
Theorem~\ref{thm:blockwise-long-time} also applies to any admissible equation class
$\Omega_{\rm eq}$ satisfying Assumption~\ref{ass:contraction} (see also Remark \ref{rem:5.2}).
The proof iterates the local stability estimate \eqref{CDoI} along the exact and approximate trajectories. 
See Appendix \ref{app:E} for more details and the proof and also Appendix \ref{app:G} for additional remarks.

\section{A Concrete Example: Nonlinear Heat Equations on the Torus}
\label{sec:torus-example}

We consider the nonlinear heat equation on
$\mathbb T^d:=\mathbb R^d/(2\pi\mathbb Z)^d$:
\begin{equation}\label{eq:torus-heat}
    \partial_t u-\Delta_{\mathbb T^d} u=F(u),
    \quad
    u(0)=u_0 .
\end{equation}
Let $R,M,L>0$, and set $\mathcal F_{M,L}
    :=
    \{
    F:[-M,M]\to\mathbb R:
    F(0)=0,\ 
    {\rm Lip}_{[-M,M]}(F)\le L
    \}$ and $\Omega_{\rm eq}
    :=
    \{(-\Delta_{\mathbb T^d},F):
    F\in\mathcal F_{M,L}\}$.
For $F\in\mathcal F_{M,L}$ and $u_0 \in U_0$, 
we denote by $\mathcal T_{F,u_0}$ 
the corresponding Picard map. 
We assume that the parameters $R,M,L,T, \delta$ satisfy 
\begin{equation}\label{key_para}
    R+TLM\le M,
    \quad
    TL\le \delta<1 .
\end{equation}
Then Assumption~\ref{ass:contraction} holds with $C_S=1$; 
see Appendices \ref{appendix:B} and \ref{app:F:torus-setting} for the details, basic properties of \eqref{eq:torus-heat}, the learning setting. Since the finite-observation case can be treated in the same way, we discuss
only the full-information case here.
The linear heat semigroup is diagonalized by the Fourier basis: 
$e^{t\Delta_{\mathbb T^d}}f
    =
    \sum_{\xi\in\mathbb Z^d}
    e^{-t|\xi|^2}\widehat f(\xi)e_\xi$ with 
    $e_\xi(x):=(2\pi)^{-d/2}e^{i\xi\cdot x}$
    for $\xi\in\mathbb Z^d$. 
For the implementation, we use an $L^\infty$-stable finite Fourier multiplier
$P_N$ such as the tensor-product Fej\'er cutoff, and put
\[
    S_N(t)f:=P_Ne^{t\Delta_{\mathbb T^d}}f,
    \quad
    \mathcal K_N[g](t):=\int_0^t S_N(t-\tau)g(\tau)\,d\tau 
\]
(see \eqref{truncated_heat_semigroup} in Appendix \ref{app:F:picard-type-fno}). 

\begin{definition}[Picard-type Fourier neural operator]
Let $\rho:\mathbb R\to\mathbb R$ be a one-dimensional ReLU network with a size parameter $H=H(\rho)\in\mathbb N$ and $\rho(0)=0$.
For $u_0\in U_0$, define
$u^{(0)}_{N,\rho}[u_0]:=0$ and 
\[
    u^{(j+1)}_{N,\rho}[u_0]
    :=
    S_N(\cdot)u_0
    +
    \mathcal K_N
    \left[
        \rho\bigl(u^{(j)}_{N,\rho}[u_0]\bigr)
    \right],\quad j=0,\ldots,\ell-1.
\]
The $\ell$-step Picard-type FNO $\Gamma^{\rm FNO}_{N,\ell,\rho} : U_0 \to U_M$
is defined by $\Gamma^{\rm FNO}_{N,\ell,\rho}(u_0)
    :=
    u^{(\ell)}_{N,\rho}[u_0]$.
\end{definition}

We denote by $\mathfrak N^{\rm FNO}_{N,\ell,H}$ the class of the above Picard-type FNOs. 
It should be noted that 
this informal description is not fully precise; see
Appendix \ref{app:F:picard-type-fno} and Definition~\ref{def:F:picard-type-fno} for the precise definition.

\begin{theorem}[Implementation error for the Picard-type FNO]
\label{thm:torus-implementation-error}
Assume \eqref{key_para}. 
Let $s_0>d/2$ and $U_0:=\{u_0\in H^{s_0}(\mathbb T^d):
    \|u_0\|_{H^{s_0}}\le R_0\}\subset U_{0,R}$.
For any $\eta\in(0,1)$ and any $F\in\mathcal F_{M,L}$, there exists a
one-dimensional ReLU network $\rho_F$ such that
\[
    \|F-\rho_F\|_{L^\infty([-M,M])}\le \eta,
    \quad
    \rho_F(0)=0,
    \quad
    {\rm Lip}_{[-M,M]}(\rho_F)\le L,
    \quad  H(\rho_F)\le C(1+ML\eta^{-1}).
\]
Moreover, for each fixed $\ell\in\mathbb N$, there exists
$a_N(\ell,\eta)\to0$ as $N\to\infty$ such that
\[
    \sup_{F\in\mathcal F_{M,L}}
    \sup_{u_0\in U_0}
    \left\|
    \Gamma^{\rm FNO}_{N,\ell,\rho_F}(u_0)
    -
    \mathcal T_{F,u_0}^{[\ell]}[0]
    \right\|_{L^\infty(Q_T)}
    \le
    \frac{1-\delta^\ell}{1-\delta}\,T\eta
    +
    a_N(\ell,\eta).
\]
Consequently, if
$\operatorname{embed}_{N,\eta}
    \bigl(b_{(-\Delta,F),\ell}\bigr)
    :=
    \Gamma^{\rm FNO}_{N,\ell,\rho_F}$, 
then
\[
    \varepsilon_{\rm imp}^{\rm FNO}(N,\ell,\eta)
    \le
    4M
    \left\{
    \frac{1-\delta^\ell}{1-\delta}\,T\eta
    +
    a_N(\ell,\eta)
    \right\}.
\]
In particular, the implementation error can be made arbitrarily small by first
choosing $\eta>0$ small and then choosing $\ell,N$ large.
\end{theorem}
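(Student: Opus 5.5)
We treat the three claims in order: the ReLU approximation of $F$, then a Picard-error recursion, then the transfer to the risk.

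\emph{Step 1: approximating $F$ by a ReLU network.} Take $\rho_F$ to be the continuous piecewise-linear interpolant of $F$ on a uniform grid of $[-M,M]$ that contains $0$, with mesh $h\asymp \eta/L$. Such an interpolant is exactly representable by a one-dimensional ReLU network with $O(M/h)=O(ML\eta^{-1})$ neurons, which gives $H(\rho_F)\le C(1+ML\eta^{-1})$. The interpolant satisfies $\rho_F(0)=F(0)=0$. Each slope is a difference quotient of $F$, so ${\rm Lip}(\rho_F)\le L$. Finally, $\|F-\rho_F\|_\infty\le 2Lh\le\eta$. Outside $[-M,M]$ we extend $\rho_F$ as a constant (by clipping), so that later arguments only use values in $[-M,M]$.

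\emph{Step 2: the error recursion.} Let $u^{(j)}:=\mathcal T^{[j]}_{F,u_0}[0]\in U_M$ and $w^{(j)}:=u^{(j)}_{N,\rho_F}[u_0]$. Split the error as
\[
w^{(j+1)}-u^{(j+1)}=(S_N-S)u_0+(\mathcal K_N-\mathcal K)[F(u^{(j)})]+\mathcal K_N[\rho_F(w^{(j)})-\rho_F(u^{(j)})]+\mathcal K_N[\rho_F(u^{(j)})-F(u^{(j)})].
\]
We bound the four terms as follows:
\begin{itemize}
\item The first two terms are truncation errors. Call their sum $r_N^{(j)}$.
\item Since $P_N$ is $L^\infty$-stable (Fej\'er: a positive kernel of norm $1$) and the heat semigroup is an $L^\infty$ contraction, $\|\mathcal K_N g\|_{L^\infty(Q_T)}\le T\|g\|_\infty$. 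Hence the third term is at most $TL\,e_j\le\delta e_j$, where $e_j:=\|w^{(j)}-u^{(j)}\|_{L^\infty(Q_T)}$.
\item The fourth term is at most $T\eta$.
\end{itemize}
To apply the Lipschitz bound on $[-M,M]$ we need the iterates to stay there. Either $w^{(j)}\in U_M$ follows by induction from \eqref{key_para} together with $|\rho_F(s)|\le L|s|$ and the stability of $S_N$, or this is the reason for extending $\rho_F$ by a constant. The recursion is then $e_{j+1}\le\delta e_j+T\eta+\sup_j\|r_N^{(j)}\|$ with $e_0=0$. Unrolling it gives $e_\ell\le\frac{1-\delta^\ell}{1-\delta}(T\eta+\bar r_N)$. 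We set $a_N(\ell,\eta):=\frac{1-\delta^\ell}{1-\delta}\bar r_N$, where $\bar r_N$ is the supremum of $\|r_N^{(j)}\|_{L^\infty(Q_T)}$ over $j<\ell$, $F$ and $u_0$.

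\emph{Main obstacle: showing $\bar r_N\to0$ uniformly.} The first term is harmless. Because $U_0$ is compact in $C(\mathbb T^d)$ (as $s_0>d/2$), and Fej\'er means converge uniformly on equicontinuous sets, we get $\sup_{u_0}\|(P_N-I)e^{t\Delta}u_0\|_\infty\to0$. The second term requires the set $\{F(u^{(j)}(\tau)):\,F,u_0,j<\ell,\tau\}$ to be equicontinuous in $x$. For this we use that $u^{(j)}=e^{t\Delta}u_0+\mathcal K[\cdot]$, where each term has a modulus of continuity uniform in the data:
\begin{itemize}
\item $e^{t\Delta}u_0$ inherits the H\"older modulus of $U_0$.
\item The Duhamel integral of a bounded function has a uniform spatial modulus, by heat kernel smoothing (gradient bound $\lesssim(t-\tau)^{-1/2}$, which is integrable).
\end{itemize}
Composing with the $L$-Lipschitz $F$ preserves these moduli. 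Uniform Fej\'er convergence then yields $\bar r_N\to0$ at fixed $\ell$; here $a_N$ may depend on $\ell$, and on $\eta$ only through $\rho_F$.

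\emph{Step 3: transfer to the risk.} For each $b\in\mathcal B_\ell$ both predictions lie in $[-M,M]$ after clipping. Using $|a^2-c^2|\le|a-c|\,|a+c|$ with $|a+c|\le 4M$, we get $|L[\Gamma]-L[b]|\le 4M\sup_{u_0}\|\Gamma(u_0)-b(u_0,\cdot)\|_\infty$. Inserting the bound from Step 2 gives the stated estimate on $\varepsilon_{\rm imp}^{\rm FNO}$. The final claim follows because the bound tends to $\frac{T\eta}{1-\delta}$ as $N\to\infty$ for each fixed $\ell$, and this limit can be made small by choosing $\eta$ small.
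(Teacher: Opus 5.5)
Your proof is correct, but it is organized differently from the paper's. The paper does not run a single recursion. It passes through intermediate iterates $\widetilde u^{(j)}_{\rho_F,u_0}$ (Picard with $\rho_F$ and the exact heat semigroup) and bounds the $F\to\rho_F$ error by $\frac{1-\delta^\ell}{1-\delta}T\eta$. It then runs a separate truncation recursion $d_{N,j+1}\le\alpha_N+\beta_{N,j}+\delta d_{N,j}$ with a supremum over all $G\in\mathcal F_{M,L}$, specializing to $G=\rho_F$ only at the end. Your four-term split instead evaluates the truncation residual on the exact $F$-iterates only. This makes $a_N$ visibly independent of $\eta$, so your caveat that it may depend on $\eta$ through $\rho_F$ is unnecessary. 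The more substantive difference is how uniform Fej\'er convergence is obtained. The paper argues softly: it proves by induction that the iterate sets $\mathcal C_j$ and $\mathcal D_j$ are compact in $C([0,T];C(\mathbb T^d))$, using continuity of $(u_0,G,u)\mapsto\mathcal T_{G,u_0}[u]$, the embedding $U_0\Subset C(\mathbb T^d)$ and Arzel\`a--Ascoli for $\mathcal F_{M,L}$, and then uses uniform convergence of $P_N$ on compacta. You instead exhibit an explicit common modulus of continuity, from the Sobolev--H\"older embedding and the gradient bound $\|\nabla e^{s\Delta_{\mathbb T^d}}g\|_{L^\infty}\lesssim s^{-1/2}\|g\|_{L^\infty}$. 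The paper's route uses no smoothing estimate, so it transfers to any $L^\infty$-stable semigroup and cutoff with the same compactness, but it gives no rate for $a_N$. Yours is specific to parabolic smoothing, but it is quantitative: inserting the modulus $\varpi$ into $\int_{\mathbb T^d}k_N(y)\,\varpi(|y|)\,dy$, with $k_N$ the tensor Fej\'er kernel, yields an explicit rate. It also needs only the uniform Lipschitz bound on $\mathcal F_{M,L}$, not its compactness. Your route can be shortened further. Since $P_N$ commutes with the time integral, $(\mathcal K_N-\mathcal K)[g](t)=(P_N-I)\mathcal K[g](t)$. The gradient bound makes $\mathcal K[g](t)$ uniformly Lipschitz in $x$ as soon as $\|g\|_{L^\infty(Q_T)}\le LM$, so equicontinuity of $F(u^{(j)})$ is not actually needed.
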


The proof consists of three steps: the 
approximation of Lipschitz function $F$ by the
one-dimensional ReLU network $\rho_F$, equivalently by
piecewise affine interpolation (see \citet{Yarotsky2017}); the uniform Error bound caused by replacing $F$ with $\rho_F$; the uniform Fourier truncation error by the Fej\'er kernel and compactness argument. The details of proof is given in Appendix \ref{app:F:proof-implementation-error}. 

\begin{lemma}[Rademacher complexity of the Picard class]
\label{lem:rademacher-torus}
Assume \eqref{key_para}.  Then, 
for every $\ell\in\mathbb N$, the empirical Rademacher complexity of
the Picard class satisfies
\[
    \widehat{\mathfrak R}^{\Omega_{\rm eq}}
    =
    \widehat{\mathfrak R}_{\mathcal S_{n,q}^{U_0}}
    \bigl(\mathcal B^{\Omega_{\rm eq}}_{\ell}\bigr)
    \le
    C M
    \sqrt{
    \frac{LT(1-\delta^\ell)}{(1-\delta)n}
    },
\]
where $C>0$ is a universal constant. 
\end{lemma}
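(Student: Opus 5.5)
The plan is to apply Dudley's entropy integral, in the form recorded in the paragraph ``A more explicit form'', and to bound the covering numbers of $\mathcal B^{\Omega_{\rm eq}}_{\ell}$ by pulling back the metric entropy of the nonlinearity class $\mathcal F_{M,L}$ through the Picard iteration. The bound should then come out with no $\ell$-dependence beyond the geometric factor $(1-\delta^\ell)/(1-\delta)$.

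First I would normalize Dudley's bound correctly. The pointwise complexity $\widehat{\mathfrak R}^{\rm pt}$ is an ordinary empirical Rademacher complexity over $nq$ points. Hence it is bounded by $C(nq)^{-1/2}\int_0^{D}\sqrt{\log N(\mathcal B,d_{\mathcal S,2},\varepsilon)}\,d\varepsilon$, where $D$ is the diameter of the class. Multiplying by $\sqrt q$ gives the trajectory-normalized bound with prefactor $n^{-1/2}$. Since $d_{\mathcal S,2}\le\sup_{v,z}|b-b'|$, it is enough to cover in the uniform norm over $U_{0,R}\times Q_T$. All predictors are clipped to $[-M,M]$, so I can take $D=2M$.

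Second, I would prove a Lipschitz dependence of $b_{F,\ell}$ on $F$. Set $u_F^{(k)}:=\mathcal T_{F,v}^{[k]}[0]$. The free term $S(t)v$ is common to both $F$ and $F'$, and the heat semigroup is an $L^\infty$ contraction. Using this together with ${\rm Lip}(F')\le L$ and \eqref{key_para}, we get
\[
\|u_F^{(k)}-u_{F'}^{(k)}\|_{L^\infty(Q_T)}\le T\|F-F'\|_{L^\infty([-M,M])}+TL\,\|u_F^{(k-1)}-u_{F'}^{(k-1)}\|_{L^\infty(Q_T)}.
\]
Since $TL\le\delta$, induction gives the uniform bound $\sup_{v}\|b_{F,\ell}(v,\cdot)-b_{F',\ell}(v,\cdot)\|_{L^\infty}\le K_\ell\|F-F'\|_\infty$ with $K_\ell:=T(1-\delta^\ell)/(1-\delta)$. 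Here I use that all iterates stay in $U_M$ by Assumption~\ref{ass:contraction}, so $F$ is only ever evaluated on $[-M,M]$.

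Third, I would use the classical Kolmogorov--Tikhomirov estimate for one-dimensional Lipschitz functions: $\log N(\mathcal F_{M,L},\|\cdot\|_{L^\infty([-M,M])},\varepsilon)\le C\,ML/\varepsilon$. The constraint $F(0)=0$ removes the constant offset, and the estimate can be proved by discretizing $[-M,M]$ at mesh $\varepsilon/L$ and encoding increments in $\{-1,0,1\}$ steps of size $\varepsilon$. Combined with the second step, this yields $\log N(\mathcal B^{\Omega_{\rm eq}}_\ell,\|\cdot\|_\infty,\varepsilon)\le C\,MLK_\ell/\varepsilon$.

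Plugging into Dudley gives $n^{-1/2}\int_0^{2M}\sqrt{CMLK_\ell/\varepsilon}\,d\varepsilon = C n^{-1/2}\sqrt{MLK_\ell}\sqrt{M}$. This is exactly $CM\sqrt{LT(1-\delta^\ell)/((1-\delta)n)}$. The integral converges at $0$ because the entropy exponent is $1<2$.

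The main obstacle is the entropy estimate for $\mathcal F_{M,L}$ together with the careful bookkeeping of the $nq$ versus $n$ normalization. The Lipschitz pull-back is routine given \eqref{key_para}, and clipping is harmless because it is $1$-Lipschitz.
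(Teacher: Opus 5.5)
Your proposal is correct and follows the paper's proof essentially step for step. The shared steps are the trajectory-normalized Dudley bound, the Lipschitz pull-back $\|b_{F,\ell}-b_{G,\ell}\|_\infty\le T\frac{1-\delta^\ell}{1-\delta}\|F-G\|_{L^\infty([-M,M])}$ (by induction from $TL\le\delta$), the $CML/r$ metric-entropy bound for $\mathcal F_{M,L}$, and the final integration over $(0,2M)$. You also derive the $n^{-1/2}$ prefactor explicitly (Dudley over the $nq$ sample points, then multiplication by $\sqrt q$), which clarifies a step the paper only cites.
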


The proof is given in Appendix \ref{app:F:proof-rademacher-torus}.

\begin{theorem}[Generalization error bound for the Picard-type FNO]
\label{thm:torus-fno-generalization}
Assume \eqref{key_para}. Then, there exists a universal constant $C>0$ such that, for every
$0<\rho<1$, with probability at least $1-\rho$,
\[
    L\!\left[\widehat\Gamma^{\rm FNO}_{N,\ell,\eta}\right]
    \le
    4M
    \left\{
    \frac{1-\delta^\ell}{1-\delta}\,T\eta
    +
    a_N(\ell,\eta)
    \right\}
    +
    \frac{M^2\delta^{2\ell}}{(1-\delta)^2}
    +
    C M^2
    \sqrt{
    \frac{LT(1-\delta^\ell)}{(1-\delta)n}
    }
    +
    C M^2
    \sqrt{\frac{\log(1/\rho)}{n}} .
\]
\end{theorem}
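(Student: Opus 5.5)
This theorem is essentially a specialization of Theorem~\ref{thm:generalization}(i), so the plan is to verify its hypotheses in the torus setting and then substitute the three concrete ingredients. First, under \eqref{key_para}, Assumption~\ref{ass:contraction} holds for $\Omega_{\rm eq}=\{(-\Delta_{\mathbb T^d},F):F\in\mathcal F_{M,L}\}$ with $C_S=1$, as stated before the definition of the Picard-type FNO. The check is routine: the heat semigroup is an $L^\infty$ contraction, so $\|\mathcal T_{F,u_0}[u]\|_{L^\infty}\le R+TLM\le M$, the contraction constant is $TL\le\delta$, and the difference of initial data propagates with constant $1$. Hence Proposition~\ref{prop:LWP} and Lemma~\ref{lem:picard_error1} apply. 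We take $\widehat\Gamma^{\rm FNO}_{N,\ell,\eta}:=\operatorname{embed}_{N,\eta}(\widehat b_\ell)$, where $\widehat b_\ell$ is the empirical risk minimizer over $\mathcal B^{\Omega_{\rm eq}}_\ell$ and $\operatorname{embed}_{N,\eta}(b_{(-\Delta,F),\ell})=\Gamma^{\rm FNO}_{N,\ell,\rho_F}$ is as in Theorem~\ref{thm:torus-implementation-error}. All predictors are clipped to $[-M,M]$, as required in Theorem~\ref{thm:generalization}.

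Next we apply Theorem~\ref{thm:generalization}(i) with this embedding. With probability at least $1-\rho$,
\[
L[\widehat\Gamma^{\rm FNO}_{N,\ell,\eta}]\le\varepsilon_{\rm imp}^{\rm FNO}(N,\ell,\eta)+\frac{M^2\delta^{2\ell}}{(1-\delta)^2}+CM\widehat{\mathfrak R}^{\Omega_{\rm eq}}+CM^2\sqrt{\frac{\log(1/\rho)}{n}}.
\]
We then bound the three problem-specific terms in turn. The implementation error is controlled by the last display of Theorem~\ref{thm:torus-implementation-error}, which gives $4M\{\frac{1-\delta^\ell}{1-\delta}T\eta+a_N(\ell,\eta)\}$. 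The truncation term is already explicit. For the Rademacher term, Lemma~\ref{lem:rademacher-torus} gives $CM\widehat{\mathfrak R}^{\Omega_{\rm eq}}\le CM^2\sqrt{LT(1-\delta^\ell)/((1-\delta)n)}$. Absorbing universal constants into a single $C$ yields the claimed bound.

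The main point needing care is that the implementation error in Theorem~\ref{thm:generalization} is the risk discrepancy $\sup_b|L[\operatorname{embed} b]-L[b]|$, not a uniform operator error. Theorem~\ref{thm:torus-implementation-error} already performs this conversion. The argument uses $|a^2-b^2|\le|a-b|(|a|+|b|)$ with both predictions in $[-M,M]$ and the target also bounded by $M$. This gives $||\Gamma-\mathcal G|^2-|b-\mathcal G|^2|\le|\Gamma-b|\cdot 4M$, which explains the factor $4M$.

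One further point is to confirm that the constant $C$ stays universal. This holds because Theorem~\ref{thm:generalization} and Lemma~\ref{lem:rademacher-torus} each supply universal constants, and the substitution is linear.
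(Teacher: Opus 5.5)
Your proposal is correct and follows the paper's proof essentially verbatim: you specialize Theorem~\ref{thm:generalization}(i) to $\operatorname{embed}_{N,\eta}$, bound $\varepsilon_{\rm imp}^{\rm FNO}$ by the $4M$-Lipschitz risk conversion already contained in Theorem~\ref{thm:torus-implementation-error}, and bound $CM\widehat{\mathfrak R}^{\Omega_{\rm eq}}$ via Lemma~\ref{lem:rademacher-torus}. The only nuance is that the paper imposes no clipping; it notes that $[-M,M]$-boundedness is automatic because both the exact and the Fej\'er-truncated Picard maps are self-maps of $U_M$ under \eqref{key_para} (Remark~\ref{rem_clip}).
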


The proof is given in Appendix \ref{app:F:proof-torus-fno-generalization}. 
Moreover, we give the results on 
long-time prediction by local rollout in Appendix \ref{app:F:long-time-fno}. 
See also Appendix \ref{app:G} for additional remarks.

\section{Conclusion}
We developed a generalization theory for Picard-type operator learning for
nonlinear parabolic PDEs. The main point is to use the Duhamel principle and Picard
iteration as an abstract state-transition model and to separate the statistical
complexity of this abstract Picard class from the approximation capability of a
concrete neural-operator implementation. The resulting bounds decompose the
error into implementation, Picard truncation, estimation, and finite-observation
reconstruction terms. Under the contraction assumptions used throughout the
paper, increasing the Picard depth improves the truncation error without
increasing the entropy-based Rademacher term. We also showed how the same local
view extends to rollout over multiple time blocks and how the theory can be
instantiated for nonlinear heat equations on the torus with Picard-type FNOs.

\section*{Acknowledgments}
This work was supported by JSPS KAKENHI 24K21316, 26K17014, 
JST BOOST JPMJBY24E2, JST CREST JPMJCR25I5.


\bibliographystyle{plainnat}
\bibliography{references}


\newpage

\appendix
\section*{Limitations}
The analysis is theoretical and relies on structural assumptions on the target
equation class, most notably local well-posedness, uniform boundedness, and a
contractive Picard map on the time interval under consideration. The
implementation error is kept abstract in the main results, and it must be
estimated separately for each concrete architecture and PDE class; we provide
one such illustration for the nonlinear heat equation on the torus. The
long-time result also depends on stability of the local solution map and on the
rollout staying inside the admissible state set. These assumptions are standard
for this type of analysis, but they limit the direct scope of the bounds for
strongly unstable dynamics or for settings where the learned model is used far
outside the training distribution.

\section*{Broader Impacts}
This work is foundational and does not introduce a deployed system, dataset, or
high-risk model. Its main potential benefit is to clarify when solver-aligned
operator learning can provide reliable surrogate models for PDE-driven
scientific and engineering problems, which may reduce the cost of repeated
numerical simulation. A possible negative impact is overreliance on learned
surrogates in safety-critical physical modeling when the assumptions behind the
generalization bound are not checked. In practical deployments, the bounds
should therefore be complemented by problem-specific validation, uncertainty
assessment, and monitoring of whether inputs remain within the regime covered by
the model assumptions.


\section{Related Work}\label{app:A}

\paragraph{Operator learning architectures.}
Operator learning models include DeepONet \citep{Lu2019,Lanthaler2022},
PCA-Net \citep{Bhattacharya2021,Lanthaler2023a}, graph neural operators
\citep{Li2020b}, Fourier neural operators \citep{Li2020a,Kovachki2021},
wavelet and multiwavelet neural operators \citep{Gupta2021,TC2023}, and
recent variants such as geometric or equivariant neural operators
\citep{Bonev2023,Chen2024}. Universal approximation results are available for
several of these architectures
\citep{ChenChen1993,ChenChen1995,Kovachki2021,Lanthaler2022,Lanthaler2023a,
LanthalerLiStuart2025}. These results establish expressivity for broad operator
classes, but they do not by themselves explain when an operator can be learned
efficiently from finite data or approximated without an exponential dependence
on the ambient function-space dimension.

\paragraph{Approximation theory and structural PDE classes.}
For general continuous or Lipschitz operators, operator learning can suffer from
the curse of parametric complexity \citep{LanthalerStuart2026}. A common way to
avoid this obstruction is to exploit additional structure of the target
operator. Positive approximation results have been obtained for solution
operators of Darcy flow, Navier--Stokes equations, elliptic and parabolic
problems, advection--diffusion equations, Hamilton--Jacobi equations, and
holomorphic operator classes
\citep{Kovachki2021,Chen2023,Lanthaler2023a,Deng2022,MS2023,
SchwabSteinZech2023,Furuya2024,MarcatiSchwab2024,HerrmannSchwabZech2024,
AdcockDexterMoraga2024,LanthalerStuart2026}. These works show that efficient
operator approximation is possible when the target class has exploitable
regularity, spectral structure, low-dimensional compression, holomorphy,
characteristics, or other PDE-specific structure.

\paragraph{Generalization and capacity in operator learning.}
Finite-sample analysis for operator learning has been developed for DeepONets,
FNO-type models, and more general operator classes
\citep{Lanthaler2022,Gopalani2022,KimKang2024,LaraBenitez2024,
KovachkiLanthalerMhaskar2024}. These works typically analyze either the
capacity of a concrete architecture or the sample complexity of a specified
operator approximation class. Our focus is complementary. We do not estimate the
Rademacher complexity of all parameters in a neural operator. Instead, following
the implementation-agnostic perspective of \citet{Sonoda2025depth}, we analyze
the statistical complexity of the abstract Picard transition class that the
implementation is intended to realize. The concrete architecture enters through
an implementation error.

\paragraph{Picard-aligned and fixed-point neural operators.}
The closest approximation-theoretic predecessor is \citet{Furuya2025}. They
rewrite nonlinear parabolic PDEs in Duhamel form and align each forward layer of
a neural operator with one Picard step. Their result shows that the solution
operator can be approximated without exponential growth in model complexity;
the neural network component is used mainly to approximate the scalar
nonlinearity. This gives the approximation side of the present work. We add the
statistical side: once the Picard steps are viewed as a contractive
state-transition system, the Rademacher term can be controlled through the
entropy of the transition family and need not grow with Picard depth.

This perspective is also related to fixed-point models in machine learning,
including deep equilibrium models \citep{Bai2019}, FNO-DEQ models for
steady-state PDEs \citep{Marwah2023}, and neural general operator networks based
on constructive fixed-point principles \citep{FeischlSchwabZehetgruber2025}.
Those works incorporate fixed-point structure into architectures or
approximation schemes. Our contribution is a finite-sample generalization
analysis for the abstract solution class generated by Duhamel--Picard
iteration.

\paragraph{Depth, state transitions, and rollout.}
\citet{Sonoda2025depth} studies when depth improves generalization in an
implementation-agnostic state-transition model. Section~6.2 of that work
identifies Picard-aligned neural operators as a favorable regime: the target
solution operator has an iterative structure, Picard approximation improves
rapidly with depth, and contractive transitions keep word-ball entropy
controlled. The present paper instantiates this observation for nonlinear
parabolic PDEs, adds finite-observation reconstruction errors, and derives
operator-learning generalization bounds. For long-time prediction, related
experimental work studies error accumulation and rollout stabilization in neural
PDE solvers \citep{Michalowska2023,Lippe2023}. Our rollout analysis is a
deterministic propagation result for a learned local solution operator; it does
not add a new statistical complexity term after the local model has been
learned.

\section{Examples of Admissible Equation Class 
$\Omega_{\rm eq}$}\label{appendix:B}

In this appendix, let us give several examples of 
admissible equation class 
$\Omega_{\rm eq}:=\mathcal A\times\mathcal F$ satisfying the inequalities \eqref{picard_contraction} and \eqref{picard_CDoI} in Assumption \ref{ass:contraction}:
\begin{itemize}
\item (Well-definedness) There exist $R,M,T>0$ such that $\mathcal T_{\omega,u_0} : U_M \to U_M$. 

    \item ($\delta$-contractivity) There exist $R,M,T>0$ and $\delta \in (0,1)$ such that for any $u_0\in U_{0,R}$, 
    \begin{equation}\label{picard_contraction'}
            \|\mathcal T_{\omega,u_0} [u] - \mathcal T_{\omega,u_0}[v]\|_{L^\infty(Q_T)} \le \delta \|u-v\|_{L^\infty(Q_T)}
        \end{equation}
        holds for any $u,v \in U_M$.

    \item (Lipschitz continuity with respect to $u_0$) 
    There exist $R,M, T, C_S>0$ such that for any $u_0\in U_{0,R}$, 
    \begin{equation}\label{picard_CDoI'}
            \|\mathcal T_{\omega,u_0} [u] - \mathcal T_{\omega,v_0}[u]\|_{L^\infty(Q_T)} \le C_S \|u_0-v_0\|_{L^\infty(D)}
        \end{equation}
        holds for any $u_0,v_0 \in U_{0,R}$ and $u \in U_M$.
\end{itemize}

These are standard conditions used to obtain local well-posedness for nonlinear
PDEs; see, e.g., \citep{Banach1922,Henry1981,Pazy1983,Zeidler}. Indeed,
Assumption~\ref{ass:contraction} yields Proposition~\ref{prop:LWP}. We now
state a typical and useful sufficient condition on $\Omega_{\rm eq}$ that
guarantees Assumption~\ref{ass:contraction}.

\begin{proposition}\label{prop:ass_app}
Let $\Omega_{\rm eq}:=\mathcal A\times\mathcal F$ be an admissible equation class, and let $R,M,T>0$ and $\delta \in (0,1)$. Assume that any $A \in \mathcal A$ generates a $C_0$-semigroup on $L^2(D)$ and it satisfies the $L^\infty$-estimate:
\begin{equation}\label{key_L^infty-est}
                C_{\mathcal A} (T) := 
    \sup_{A\in\mathcal A}
    \sup_{0<t\le T}
    \|S_A(t)\|_{L^\infty(D)\to L^\infty(D)}<\infty,
\end{equation}
and that any $F \in \mathcal F$ satisfies 
$F:\mathbb R\to \mathbb R$, 
$F(0)=0$ and 
    \begin{equation}\label{example_key}
    L_{\mathcal F}(M)
    :=
    \sup_{F\in\mathcal F}
    {\rm Lip}_{[-M,M]}(F)<\infty.
    \end{equation}
Assume further that $R,M,T$ and $\delta$ satisfy 
\begin{equation}\label{con-map_key}
    C_{\mathcal A} (T)R + 
    TC_{\mathcal A} (T)L_{\mathcal F}(M) M \le M\quad
    \text{and}\quad 
    TC_{\mathcal A} (T)L_{\mathcal F}(M) \le \delta. 
\end{equation}
Then 
Assumption \ref{ass:contraction} holds with $C_S = C_{\mathcal A} (T)$. 
\end{proposition}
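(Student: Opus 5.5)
We verify the three properties of Assumption~\ref{ass:contraction} directly from the definition \eqref{eq:picard-map-omega} of $\mathcal T_{\omega,u_0}$, using only the uniform semigroup bound \eqref{key_L^infty-est}, the uniform Lipschitz bound \eqref{example_key}, and the smallness conditions \eqref{con-map_key}. Throughout we fix $\omega=(A,F)\in\Omega_{\rm eq}$ and write $C_{\mathcal A}=C_{\mathcal A}(T)$ and $L_{\mathcal F}=L_{\mathcal F}(M)$.

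\textbf{Self-mapping.} Let $u_0\in U_{0,R}$ and $u\in U_M$. Since $F(0)=0$ and $|u(\tau)|\le M$ pointwise, the Lipschitz bound gives $|F(u(\tau,x))|\le L_{\mathcal F}|u(\tau,x)|\le L_{\mathcal F}M$, so $\|F(u(\tau))\|_{L^\infty(D)}\le L_{\mathcal F}M$. Consequently, for every $t\in(0,T]$,
\[
\|\mathcal T_{\omega,u_0}[u](t)\|_{L^\infty(D)}\le C_{\mathcal A}R+\int_0^t C_{\mathcal A}L_{\mathcal F}M\,d\tau\le C_{\mathcal A}R+TC_{\mathcal A}L_{\mathcal F}M\le M,
\]
where the last step is the first condition in \eqref{con-map_key}. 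It remains to check that $\mathcal T_{\omega,u_0}[u]\in C([0,T];C(D))$. This is the main technical point, because continuity and preservation of $C(D)$ by $S_A$ are not literally part of the hypotheses. I would obtain it from two sources: the $C_0$-semigroup property of $S_A$, together with the fact that $S_A(t)$ maps $C(D)$ to itself (implicit in the setting, e.g. via $L^\infty$-boundedness and smoothing), and strong continuity of the Bochner integral, using that $\tau\mapsto F(u(\tau))$ is continuous into $C(D)$ because $F$ is Lipschitz on $[-M,M]$. Should this step prove delicate, I would state it as the standard regularity of mild solutions and refer to \citet{Pazy1983} and \citet{Henry1981}.

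\textbf{Contraction.} For $u,v\in U_M$, the linear terms cancel, and pointwise $|F(u)-F(v)|\le L_{\mathcal F}|u-v|$ because both functions take values in $[-M,M]$. Hence
\[
\|\mathcal T_{\omega,u_0}[u](t)-\mathcal T_{\omega,u_0}[v](t)\|_{L^\infty(D)}\le \int_0^t C_{\mathcal A}L_{\mathcal F}\|u-v\|_{L^\infty(Q_T)}\,d\tau\le TC_{\mathcal A}L_{\mathcal F}\|u-v\|_{L^\infty(Q_T)}.
\]
By the second condition in \eqref{con-map_key}, the right-hand side is at most $\delta\|u-v\|_{L^\infty(Q_T)}$. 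Taking the supremum over $t$ gives \eqref{picard_contraction}.

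\textbf{Lipschitz dependence on $u_0$.} Here the Duhamel integrals cancel, since they depend only on $u$. This leaves
\[
\|S_A(t)(u_0-v_0)\|_{L^\infty(D)}\le C_{\mathcal A}\|u_0-v_0\|_{L^\infty(D)}.
\]
Taking the supremum over $t$ yields \eqref{picard_CDoI} with $C_S=C_{\mathcal A}(T)$.

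All constants above are uniform over $\omega\in\Omega_{\rm eq}$, because $C_{\mathcal A}$ and $L_{\mathcal F}$ are defined as suprema over the class. This uniformity is exactly what Assumption~\ref{ass:contraction} requires.
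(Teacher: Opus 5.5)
Your proposal is correct and follows the paper's proof essentially step by step. You use the same $L^\infty$ bounds, via $F(0)=0$ and the uniform Lipschitz constant, for the self-map and contraction properties, and the same cancellation of the Duhamel terms that leaves $S_A(t)(u_0-v_0)$, giving $C_S=C_{\mathcal A}(T)$. The continuity of $\mathcal T_{\omega,u_0}[u]$, which you rightly flag as not literally covered by the hypotheses, is handled in the paper exactly as you suggest: it appeals to the $C_0$-semigroup property and ``the standard argument'', so your treatment is no less complete than the paper's.
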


In the following, we prove Proposition \ref{prop:ass_app} while also discussing important and typical examples.

\paragraph{Lipschitz continuity with respect to $u_0$.} 
The inequality \eqref{picard_CDoI'} holds if we take and assume 
\begin{equation}\label{ass:L^infty-contractive}
    C_S = 
    \sup_{0\le t\le T}
    \|S_A(t)\|_{L^\infty(D)\to L^\infty(D)}
    <\infty.
\end{equation}
In fact, by the definition of the Picard map, we have
$\mathcal T_{\omega,u_0}[u](t)
    -
    \mathcal T_{\omega,v_0}[u](t) = S_A(t)(u_0-v_0)$ for $t \in (0,T)$, and hence, 
\[
    \|\mathcal T_{\omega,u_0}[u]
    -
    \mathcal T_{\omega,v_0}[u]\|_{L^\infty(Q_T)}
    \le
\sup_{0\le t\le T}
    \|S_A(t)\|_{L^\infty(D)\to L^\infty(D)}
    \|u_0-v_0\|_{L^\infty(D)} .
\]
Thus, the Lipschitz constant with respect to $u_0$ is determined only by the linear operator $A$, and is independent of the nonlinearity $F$. 
There are many linear operators $A$ satisfying \eqref{ass:L^infty-contractive}. 
For simplicity, and to keep the setting consistent with this paper, we state the examples only for $A$ on bounded domains $D$. However, it is known that \eqref{ass:L^infty-contractive} also holds in many cases for $A$ on the whole Euclidean spaces $\mathbb R^d$, unbounded domains, Riemannian manifolds, groups, fractals or general metric measure spaces (see, e.g., \citep{BuiDAnconaDuongMuller2019,BuiDAnconaNicola2020,Davies1989,Furuya2025,
IkedaTaniguchiWakasugi2024,
Ouhabaz2005} and references therein).

We list several examples below.
\begin{enumerate}
    \item {\bf (Laplacian on the torus)} 
Let $A=-\Delta_{\mathbb T^d}$ be the Laplacian on
$\mathbb T^d:=\mathbb R^d/(2\pi\mathbb Z)^d$, 
that is, the Laplacian on $D=(0,2\pi)^d$ with periodic boundary condition.
Then $S_A(t)=e^{t\Delta_{\mathbb T^d}}$ is diagonalized by the Fourier basis
\[
    e_\xi(x):=(2\pi)^{-d/2}e^{i\xi\cdot x},
    \quad \xi\in\mathbb Z^d.
\]
More precisely, we have
\[
    S_A(t)f
    =
    \sum_{\xi\in\mathbb Z^d}
    e^{-t|\xi|^2}\widehat f(\xi)e_\xi,
\]
where $\widehat f$ 
is the Fourier coefficient of $f$:
\[
    \widehat f(\xi)
    :=
    \int_{\mathbb T^d}f(x)\overline{e_\xi(x)}\,dx.
\]
Moreover, the heat semigroup is $L^\infty$-contractive:
\begin{equation}\label{L^infty-contractive}
        \sup_{t>0}
    \|S_A(t)\|_{L^\infty(\mathbb T^d)\to L^\infty(\mathbb T^d)}
    \le 1.
\end{equation}
Therefore, $A=-\Delta_{\mathbb T^d}$ satisfies
\eqref{ass:L^infty-contractive} with $C_S=1$.

 \item {\bf (Dirichlet Laplacian)}
    Let $D\subset \mathbb R^d$ be a bounded domain, and let
    $A=-\Delta_{\rm Dir}$ be the Dirichlet Laplacian on $D$.
    Then $S_A(t)=e^{t\Delta_{\rm Dir}}$ is  
    $L^\infty$-contractive and has an exponential time decay:
\begin{equation}\label{L^infty-contractive_exp-decay}
        \|S_A(t)\|_{L^\infty(D)\to L^\infty(D)}
    \le
    \min\{1, C e^{-\lambda_1 t}\},\quad t>0
\end{equation}
for some constant $C \ge 1$, where $\lambda_1$ is the first eigenvalue of $-\Delta_{\rm Dir}$. 
    Thus $A=-\Delta_{\rm Dir}$ satisfies
    \eqref{ass:L^infty-contractive} with $C_S=1$.
    See, for example, \citep{Davies1989}, 
    \citep[Appendix A.1]{Furuya2025}, \citep[Subsection 5.1]{IkedaTaniguchiWakasugi2024}.

    \item {\bf (Neumann or Robin Laplacian)}
    Let $D\subset\mathbb R^d$ be a bounded Lipschitz domain.
    Let $A=-\Delta_{\gamma}$ be the Robin Laplacian on $D$ associated with
    the boundary condition
    \[
        \partial_\nu u+\gamma u=0
        \quad \text{on } \partial D,
    \]
    where $\gamma\in L^\infty(\partial D)$ and $\gamma\ge0$.
    The case $\gamma=0$ corresponds to the Neumann Laplacian
    $A=-\Delta_{\rm Neu}$.
    Then $S_A(t)=e^{-tA}$ satisfies $L^\infty$-contractive and has an exponential time decay
    \eqref{L^infty-contractive_exp-decay}, where one has $\lambda_1=0$
    in the pure Neumann case $\gamma=0$. 
    Therefore, $A=-\Delta_{\gamma}$ satisfies
    \eqref{ass:L^infty-contractive} with $C_S=1$.
    See, for example, \citep{Ouhabaz2005},
    \citep[Appendix A.2]{Furuya2025}.

    \item {\bf (Schr\"odinger operator with a singular potential)}
    Let $D\subset\mathbb R^d$ be a bounded domain, and let $A=-\Delta_{\rm Dir}+V$ 
    be the Schr\"odinger operator with the zero Dirichlet boundary condition,
    where $V=V_+-V_-$ is a real-valued measurable potential satisfying
    \[
        V_\pm\ge0,\quad
        V_+\in L^1_{\rm loc}(D),\quad
        V_-\in K_d(D),
    \]
    where $K_d(D)$ denotes the Kato class on $D$.
    Then the semigroup $S_A(t)=e^{-tA}$ satisfies the following $L^\infty$-estimate: 
    For each $T>0$, there exists a constant
    $C_A(T)>0$ such that
    \begin{equation}\label{L^infty-contractive2}
        \sup_{0<t\le T}
        \|S_A(t)\|_{L^\infty(D)\to L^\infty(D)}
        \le C_A(T).
    \end{equation}
    Hence $A=-\Delta_{\rm Dir}+V$ satisfies
    \eqref{ass:L^infty-contractive} with $C_S=C_A(T)$. If $V\ge 0$ (i.e. 
    $V_- =0$), then 
    \eqref{L^infty-contractive} holds. 
    Therefore, 
    $A=-\Delta_{\rm Dir}+V$ satisfies
    \eqref{ass:L^infty-contractive} with $C_S=1$. 
    See, for example, \citep{IwabuchiMatsuyamaTaniguchi2018},
    \citep[Appendix A.3]{Furuya2025}, \citep[Subsection 5.3]{IkedaTaniguchiWakasugi2024}.

    \item {\bf (Schr\"odinger operator with a Dirac delta potential)}
    Let $D=(a,b)\subset\mathbb R$, $x_0\in D$, and $\alpha>0$.
    We consider the one-dimensional Schr\"odinger operator
    \[
        A=-\frac{d^2}{dx^2}+\alpha\delta_{x_0}
    \]
    with the zero Dirichlet boundary condition. More precisely, $A$ is
    realized as the Laplacian on $D\setminus\{x_0\}$ with the interface
    condition $u'(x_0+)-u'(x_0-)=\alpha u(x_0)$. 
    Then $S_A(t)=e^{-tA}$ satisfies the $L^\infty$-contractive estimate 
    \eqref{L^infty-contractive}. 
    Hence $A=-d^2/dx^2+\alpha\delta_{x_0}$ satisfies
    \eqref{ass:L^infty-contractive} with $C_S=1$.
    See, for example, 
    \citep[Subsection 5.4]{IkedaTaniguchiWakasugi2024}.

    \item {\bf (Uniformly elliptic operator)}
    Let $D\subset\mathbb R^d$ be a bounded domain, and let $A$ be a
    uniformly elliptic operator of divergence form
    \[
        A u
        =
        -\sum_{i,j=1}^d \partial_i\bigl(a_{ij}(x)\partial_j u\bigr)
        +c(x)u
    \]
    with a suitable boundary condition, such as the Dirichlet, Neumann, or
    Robin boundary condition. We assume that $a_{ij}\in L^\infty(D)$,
    $c\in L^\infty(D)$, $c\ge0$, and that there exists $C>0$ such that
    \[
        \sum_{i,j=1}^d a_{ij}(x)\xi_i\xi_j
        \ge
        C |\xi|^2
    \]
    for all $\xi\in\mathbb R^d$ and a.e. $x\in D$.
    Then $S_A(t)=e^{-tA}$ satisfies the $L^\infty$-contractive
    estimate
    \eqref{L^infty-contractive}
    under the standard maximum principle assumptions. Hence $A$ satisfies
    \eqref{ass:L^infty-contractive} with $C_S=1$.
    See, for example, \citep{Ouhabaz2005},
    \citep[Appendix A.4]{Furuya2025}, \citep[Subsection 5.5]{IkedaTaniguchiWakasugi2024} for more general cases. 

    \item {\bf (Fractional order or higher order case)}
    We consider the fractional powers $A^{\alpha/2}$, $\alpha>0$, of the above operators $A$.
    Then we can see that the semigroup $S_{A^{\alpha/2}}(t) = e^{-tA^{\alpha/2}}$ satisfies \eqref{L^infty-contractive} or \eqref{L^infty-contractive2}. Hence, $A^{\alpha/2}$ satisfies 
    \eqref{ass:L^infty-contractive}. 
    See, for example, \citep{Davies1995,Ouhabaz2005}, 
    \citep[Subsection 5.7]{IkedaTaniguchiWakasugi2024} for more details.
\end{enumerate}

\paragraph{Well-definedness and $\delta$-contractivity.} 
Next, we give sufficient conditions for the well-defiendness and $\delta$-contractivity of
$\mathcal T_{\omega,u_0}$. 
A typical set of sufficient conditions is as follows:
\begin{itemize}
    \item  The operator $A$ generates a $C_0$-semigroup on $L^2(D)$ and it satisfies the $L^\infty$-estimate \eqref{ass:L^infty-contractive}. 
    \item The nonlinear term $F:\mathbb R\to \mathbb R$ satisfies $F(0)=0$ and local Lipschitz condition in $U_M$ for $M>0$, i.e., there exists a constant ${\rm Lip}_{[-M,M]}(F)>0$ such that
\begin{equation}\label{nonlinear_localLip}
    |F(z_1) -F(z_2)|\le {\rm Lip}_{[-M,M]}(F)|z_1-z_2|
\end{equation}
for any $z_1,z_2 \in [-M,M]$. 
\end{itemize}
In fact, we take $R,M,T$ and $\delta\in (0,1)$ as 
\begin{equation}\label{con-map1}
    \sup_{0\le t\le T}
    \|S_A(t)\|_{L^\infty(D)\to L^\infty(D)} R + 
    T\sup_{0\le t\le T}\|S_A(t)\|_{L^\infty(D)\to L^\infty(D)}{\rm Lip}_{[-M,M]}(F) M \le M
\end{equation}
and 
\begin{equation}\label{con-map2}
    T\sup_{0\le t\le T}\|S_A(t)\|_{L^\infty(D)\to L^\infty(D)}{\rm Lip}_{[-M,M]}(F) \le \delta.
\end{equation}
For each $u_0 \in U_{0,R}$ and for any 
$u \in U_M$, it follows from $F(0)=0$, 
\eqref{nonlinear_localLip} and 
\eqref{con-map1} that 
\[
\begin{split}
    \|\mathcal T_{\omega,u_0}[u]\|_{L^\infty(Q_T)}
    & \le \|S_A(t)u_0\|_{L^\infty(Q_T)} + 
    \left\|\int_0^t
    S_A(t-\tau) F(u(\tau))\, \tau \right\|_{L^\infty(Q_T)}\\
    & \le \|S_A(t)\|_{L^\infty(D)\to L^\infty(D)} \|u_0\|_{L^\infty(D)}\\
    & \quad + 
    \sup_{0<t\le T}
    \int_0^t
    \|S_A(t-\tau)\|_{L^\infty(D)\to L^\infty(D)}
    \,d\tau \times {\rm Lip}_{[-M,M]}(F) \|u\|_{L^\infty(Q_T)}\\
    & \sup_{0\le t\le T}
    \|S_A(t)\|_{L^\infty(D)\to L^\infty(D)} R \\
    & \quad + T\sup_{0\le t\le T}\|S_A(t)\|_{L^\infty(D)\to L^\infty(D)}{\rm Lip}_{[-M,M]}(F) M \le M.
\end{split}
\]
Moreover, since $S_A(t)$ is the $C_0$-semigroup, $\mathcal T_{\omega,u_0}[u]$ is continuous in $Q_T$ by the standard argument. 
Hence, 
it is shown that $\mathcal T_{\omega,u_0}$ is a map from $U_M$ into itself. Next, for each $u_0 \in U_{0,R}$ and 
for any $u,v\in U_M$, we write
\[
    \mathcal T_{\omega,u_0}[u](t)
    -
    \mathcal T_{\omega,u_0}[v](t)
    =
    \int_0^t
    S_A(t-\tau)
    \bigl(
        F(u(\tau))-F(v(\tau))
    \bigr)
    \,d\tau .
\]
Since $u,v\in U_M$, we have $u(\tau,x),v(\tau,x)\in[-M,M]$ for a.e.
$(\tau,x)\in Q_T$. 
Hence, it follows from \eqref{nonlinear_localLip} and \eqref{con-map2} that 
\[
\begin{split}
    & \|\mathcal T_{\omega,u_0}[u]
    -
    \mathcal T_{\omega,u_0}[v]\|_{L^\infty(Q_T)}\\
    & \le  
    \sup_{0<t\le T}
    \int_0^t
    \|S_A(t-\tau)\|_{L^\infty(D)\to L^\infty(D)}
    \,d\tau \times {\rm Lip}_{[-M,M]}(F)\|u-v\|_{L^\infty(Q_T)} \\
    & \le T\sup_{0\le t\le T}\|S_A(t)\|_{L^\infty(D)\to L^\infty(D)}{\rm Lip}_{[-M,M]}(F) \|u-v\|_{L^\infty(Q_T)}\\
    & \le \delta
    \|u-v\|_{L^\infty(Q_T)}.
\end{split}
\]
This proves \eqref{picard_contraction'}.
Summarizing the above, 
we conclude Proposition \ref{prop:ass_app}.

\paragraph{Examples of nonlinear term $F$.} 
Typical examples of admissible nonlinearities include the power-type nonlinearities
\[
    F(u)=\lambda |u|^{p-1}u,\quad p\ge 1,
\]
polynomial nonlinearities without constant term
\[
    F(u)=\sum_{j=1}^p a_j u^j,\quad p\in \mathbb N,
\]
Allen--Cahn type nonlinearities $F(u)=u-u^3$, Fisher--KPP type nonlinearities
$F(u)=u-u^2$, and locally Lipschitz nonlinearities such as
\[
    F(u)=\sin u,\quad F(u)=\tanh u,\quad F(u)=e^u-1.
\]
If the parameters are restricted to bounded sets, then these examples form
classes $\mathcal F$ satisfying
\[
    L_{\mathcal F}(M)
    =
    \sup_{F\in\mathcal F}{\rm Lip}_{[-M,M]}(F)
    <\infty
\]
for any $M>0$. 
Also, time- or space-dependent nonlinearities $F=F(t,x,u)$ can be treated in
the same way if the definition of $L_{\mathcal F}(M)$ is replaced by
\[
    L_{\mathcal F}(M)
    :=
    \sup_{F\in\mathcal F}\sup_{(t,x)\in(0,T)\times D}
    {\rm Lip}_{[-M,M]}\bigl(F(t,x,\cdot)\bigr)
    <\infty .
\]

\begin{remark}
The above criterion applies to reaction-type nonlinearities $F=F(u)$ which are
locally Lipschitz on bounded intervals. It does not cover important
non-Lipschitz or non-reaction-type nonlinearities, such as sublinear terms
$F(u)=|u|^p$ with $0<p<1$, logarithmic terms such as $u\log |u|$, and 
derivative-dependent nonlinearities such as $|\nabla u|^p$ with $p>1$.
This does not mean that such examples cannot be treated. Indeed, well-posedness
results for many of these equations are known in suitable solution spaces by
using fixed-point arguments \citep{Henry1981,Pazy1983,CazenaveHaraux1998,Pao1992,
CrandallLiggett1971,Barbu2010,BenArtziSoupletWeissler2002}. Therefore,
with appropriate modifications of the solution space and the abstract Picard
setting, similar arguments are expected to apply to some of these examples as
well.
\end{remark}

By Proposition~\ref{prop:ass_app}, together with the preceding examples, 
there are various admissible equation classes $\Omega_{\rm eq}$ satisfying
Assumption~\ref{ass:contraction}. Hence,
Theorem~\ref{thm:generalization} and
Theorem~\ref{thm:blockwise-long-time} apply in these settings.
For example, one may take
\[
\Omega_{\rm eq}
:=
\left\{
\omega_F=(-\Delta_{\mathbb T^d},F):
F\in\mathcal F_{M,L}
\right\},
\]
where the unknown equation specification is the nonlinearity, and
\[
\mathcal F_{M,L}
:=
\left\{
F:[-M,M]\to\mathbb R:
F(0)=0,\ 
{\rm Lip}_{[-M,M]}(F)\le L
\right\}.
\]
One may also take
\[
\Omega_{\rm eq}
:=
\left\{
\omega_\gamma=(-\Delta_{\gamma},F):
\gamma\in L^\infty(\partial D),\ \gamma\ge0
\right\},
\]
where the unknown equation specification is the boundary condition. Another
example is
\[
\Omega_{\rm eq}
:=
\left\{
\omega_{a,c}=(A_{a_{ij},c},F):
a_{ij}\in L^\infty(D),\ 
c\in L^\infty(D),\ c\ge0
\right\},
\]
where the unknown equation specification is given by the coefficients.

\section{Picard Truncation Error Bounds: Proofs of Lemmas \ref{lem:picard_error1} and \ref{lem:picard_error2}}\label{app:C}

In this appendix we prove Lemmas~4.1 and~4.2. We write
$d(u,v):=\|u-v\|_{L^\infty(Q_T)}$ for $u,v\in U_M$.

\begin{lemma}[Lemma \ref{lem:picard_error1}]\label{lem:picard_error1-app}
Suppose that Assumption \ref{ass:contraction} holds. Then, 
for any $\omega\in\Omega_{\rm eq}$ and $\ell \in \mathbb N$, we have
\begin{equation}\label{C.1_1}
\sup_{v\in U_{0,R}} \|b_{\omega, \ell}(v,\cdot)-\mathcal G_{\omega}[v]\|_{L^\infty(Q_T)}\le \frac{M\delta^\ell}{1-\delta}.
\end{equation}
Consequently, 
\begin{equation}\label{C.1_2}
    \inf_{b\in\mathcal B^{\Omega_{\rm eq}}_\ell} L[b]
    \le
    \frac{M^2\delta^{2\ell}}{(1-\delta)^2}.
\end{equation}
\end{lemma}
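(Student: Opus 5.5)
The plan is to read off both claims from the Banach fixed-point rate in Proposition~\ref{prop:LWP}(ii), once we have a bound on the size of the first Picard iterate. Fix $\omega\in\Omega_{\rm eq}$ and $v\in U_{0,R}$. Write $u^{(\ell)}:=\mathcal T_{\omega,v}^{[\ell]}[0]$, so that $b_{\omega,\ell}(v,\cdot)=u^{(\ell)}$ by the definition of the $\ell$-step Picard transition model. Let $u:=\mathcal G_\omega[v]$ be the fixed point.

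\textbf{Step 1 (iterates stay in $U_M$).} The zero function lies in $U_M$, and Assumption~\ref{ass:contraction} gives $\mathcal T_{\omega,v}:U_M\to U_M$. By induction, every $u^{(\ell)}$ therefore lies in $U_M$.

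\textbf{Step 2 (geometric decay against the fixed point).} I will argue directly rather than through the form $\mathrm d(u^{(1)},0)$ in \eqref{picard-rate}. Since $u=\mathcal T_{\omega,v}[u]$, contractivity gives
\[
d(u^{(\ell)},u)=d(\mathcal T_{\omega,v}[u^{(\ell-1)}],\mathcal T_{\omega,v}[u])\le\delta\, d(u^{(\ell-1)},u).
\]
Iterating yields $d(u^{(\ell)},u)\le\delta^\ell d(0,u)\le \delta^\ell M$, because $u\in U_M$. This already gives the stated bound, since $\delta^\ell M\le M\delta^\ell/(1-\delta)$.

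\textbf{Alternative route via \eqref{picard-rate}.} One could instead use \eqref{picard-rate} together with $d(u^{(1)},0)=\|u^{(1)}\|_{L^\infty(Q_T)}\le M$, which holds because $u^{(1)}\in U_M$. This gives exactly $M\delta^\ell/(1-\delta)$. Either route works. I will present the \eqref{picard-rate} one, since it matches the constant in the statement. The bound is uniform in $v$ and $\omega$, so taking the supremum over $v\in U_{0,R}$ gives \eqref{C.1_1}.

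\textbf{Step 3 (risk bound).} Take $\omega=\omega_\ast$, so that $\mathcal G_{\omega_\ast}=\mathcal G_\ast$ and $b_{\omega_\ast,\ell}\in\mathcal B^{\Omega_{\rm eq}}_\ell$. Since $U_0\subset U_{0,R}$, the pointwise bound $|b_{\omega_\ast,\ell}(u_0,z)-\mathcal G_\ast(u_0)(z)|\le M\delta^\ell/(1-\delta)$ holds for all $u_0$ in the support of $\Pi$ and all $z\in Q_T$. Squaring and taking the expectation in \eqref{eq:ideal-risk} gives $L[b_{\omega_\ast,\ell}]\le M^2\delta^{2\ell}/(1-\delta)^2$, and the infimum is at most this value.

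\textbf{Main obstacle.} The only subtle point is that $L^\infty(Q_T)$ bounds must control pointwise evaluation $u(z)$, which is what the readout $h$ uses. This holds because elements of $U_M$ are continuous on $[0,T]\times D$, so the sup norm dominates point values. Measurability of the integrand is the other technicality; I would take it for granted from the continuity of the maps involved.
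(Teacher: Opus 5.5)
Your proof is correct, and the route you present is exactly the paper's argument: the a priori estimate \eqref{picard-rate} with $\|u^{(1)}\|_{L^\infty(Q_T)}\le M$ (from $u^{(1)}\in U_M$), followed by evaluating the oracle $b_{\omega_\ast,\ell}$. The only difference is that the paper re-derives \eqref{picard-rate} by telescoping instead of citing Proposition~\ref{prop:LWP}(ii). Your Step~2 comparison with the fixed point is also valid and gives the sharper bound $M\delta^\ell$ (hence $M^2\delta^{2\ell}$ for the oracle risk), which the paper does not use.
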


\begin{proof}[Proof of Lemma~4.1]
Fix $\omega\in\Omega_{\rm eq}$ and $v\in U_{0,R}$. Set
$u^{(0)}:=0$ and
\[
    u^{(\ell+1)}:=\mathcal T_{\omega,v}[u^{(\ell)}],
    \quad \ell=0,1,2,\cdots .
\]
Then, $b_{\omega,\ell}(v,\cdot)=u^{(\ell)}
    =
    \mathcal T_{\omega,v}^{[\ell]}[0]$ 
by the definition of $b_{\omega,\ell}$.
Let $u:=\mathcal G_\omega[v]$ be the unique fixed point of
$\mathcal T_{\omega,v}$. Since $\mathcal T_{\omega,v}$ is
$\delta$-contractive on $U_M$, we have
\[
    d(u^{(r+1)},u^{(r)})
    \le
    \delta^r d(u^{(1)},u^{(0)}),
    \quad r=0,1,2,\cdots .
\]
Therefore, for $N>\ell$, we estimate
\[
\begin{aligned}
    d(u^{(N)},u^{(\ell)})
    \le
    \sum_{r=\ell}^{N-1} d(u^{(r+1)},u^{(r)})  
    \le
    \sum_{r=\ell}^{N-1} \delta^r d(u^{(1)},u^{(0)})  
    \le
    \frac{\delta^\ell}{1-\delta} d(u^{(1)},u^{(0)}).
\end{aligned}
\]
Letting $N\to\infty$ and using $u^{(N)}\to u$ in $L^\infty(Q_T)$, we obtain
\[
    d(u^{(\ell)},u)
    \le
    \frac{\delta^\ell}{1-\delta} d(u^{(1)},u^{(0)}).
\]
Since $u^{(1)}=\mathcal T_{\omega,v}[0]\in U_M$ and $u^{(0)}=0$, we have
$d(u^{(1)},u^{(0)})=\|u^{(1)}\|_{L^\infty(Q_T)}\le M$. Hence
\[
    \|b_{\omega,\ell}(v,\cdot)-\mathcal G_\omega[v]\|_{L^\infty(Q_T)}
    \le
    \frac{M\delta^\ell}{1-\delta}.
\]
Taking the supremum over $v\in U_{0,R}$ gives \eqref{C.1_1}. 

We next prove the bound \eqref{C.1_2} for the oracle term. Let $\omega_\ast\in\Omega_{\rm eq}$
be such that $\mathcal G_\ast=\mathcal G_{\omega_\ast}$. Since
$b_{\omega_\ast,\ell}\in\mathcal B^{\Omega_{\rm eq}}_\ell$, we have 
\[
    \inf_{b\in\mathcal B^{\Omega_{\rm eq}}_\ell} L[b]
    \le
    L[b_{\omega_\ast,\ell}]  
    =
    \mathbb E_{u_0\sim\Pi,z\sim\mu}
    \left[
    \left|
    b_{\omega_\ast,\ell}(u_0,z)
    -
    \mathcal G_{\omega_\ast}[u_0](z)
    \right|^2
    \right] 
    \le
    \left(
    \frac{M\delta^\ell}{1-\delta}
    \right)^2 .
\]
Thus, \eqref{C.1_2} is shown. 
The proof of Lemma \ref{lem:picard_error1-app} is finished.
\end{proof}

\begin{lemma}[Lemma \ref{lem:picard_error2}]\label{lem:picard_error2-app}
Suppose that Assumption \ref{ass:contraction} holds. Then, 
for any $\omega\in\Omega_{\rm eq}$, $E\in\mathcal E_m$, $u_0 \in U_0$ and $\ell \in \mathbb N$, we have
\begin{equation}\label{C.2_1}
\|b_{\omega,E,\ell}(P_m u_0,\cdot)-\mathcal G_{\omega}[u_0]\|_{L^\infty(Q_T)}\le \frac{M\delta^\ell}{1-\delta}+\frac{C_S}{1-\delta}\|E(P_m u_0)-u_0\|_{L^\infty(D)}.
\end{equation}
Consequently, 
\begin{equation}\label{C.2_2}
    \inf_{b\in\mathcal B^{\Omega_{\rm eq}}_{m,\ell}} L_m[b]
    \le
    \frac{2M^2\delta^{2\ell}}{(1-\delta)^2}
    +
    \frac{2C_S^2}{(1-\delta)^2}
    \varepsilon_{\rm rec}^2(m).
\end{equation}
\end{lemma}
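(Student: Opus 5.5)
The plan is a triangle inequality through the intermediate solution $\mathcal G_\omega[E(P_m u_0)]$. Fix $\omega\in\Omega_{\rm eq}$, $E\in\mathcal E_m$, $u_0\in U_0$, and set $w_0:=E(P_m u_0)$. Since $E$ maps into $U_0\subset U_{0,R}$, we have $w_0\in U_{0,R}$. By definition,
\[
b_{\omega,E,\ell}(P_m u_0,\cdot)=\mathcal T_{\omega,w_0}^{[\ell]}[0]=b_{\omega,\ell}(w_0,\cdot).
\]
Then I split
\[
\|b_{\omega,E,\ell}(P_m u_0,\cdot)-\mathcal G_\omega[u_0]\|_{L^\infty(Q_T)}\le \|b_{\omega,\ell}(w_0,\cdot)-\mathcal G_\omega[w_0]\|_{L^\infty(Q_T)}+\|\mathcal G_\omega[w_0]-\mathcal G_\omega[u_0]\|_{L^\infty(Q_T)}.
\]
The first term is at most $M\delta^\ell/(1-\delta)$ by Lemma~\ref{lem:picard_error1-app}, applied with $v=w_0\in U_{0,R}$. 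The second term is at most $\frac{C_S}{1-\delta}\|w_0-u_0\|_{L^\infty(D)}$ by the stability estimate \eqref{CDoI} of Proposition~\ref{prop:LWP}(i). This uses $u_0\in U_0\subset U_{0,R}$, which holds by the choice $R_0\le C_{\rm Sob}^{-1}R$. Together these give \eqref{C.2_1}.

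For \eqref{C.2_2}, I take $\omega=\omega_\ast$, so that $\mathcal G_\ast=\mathcal G_{\omega_\ast}$, and an arbitrary $E\in\mathcal E_m$. Since $b_{\omega_\ast,E,\ell}\in\mathcal B^{\Omega_{\rm eq}}_{m,\ell}$,
\[
\inf_{b\in\mathcal B^{\Omega_{\rm eq}}_{m,\ell}} L_m[b]\le L_m[b_{\omega_\ast,E,\ell}].
\]
I bound the pointwise squared error at $z$ by the squared $L^\infty(Q_T)$ norm and apply \eqref{C.2_1}. Using $(a+b)^2\le 2a^2+2b^2$ gives the pointwise bound
\[
\frac{2M^2\delta^{2\ell}}{(1-\delta)^2}+\frac{2C_S^2}{(1-\delta)^2}\|E(P_mu_0)-u_0\|_{L^\infty(D)}^2.
\]
Taking expectation over $u_0\sim\Pi$ and $z\sim\mu$, and then the infimum over $E\in\mathcal E_m$, yields \eqref{C.2_2}, since the first term does not depend on $E$.

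The main subtlety, rather than a real obstacle, is making sure every object stays inside the sets where Assumption~\ref{ass:contraction} applies: both $u_0$ and $E(P_m u_0)$ must lie in $U_{0,R}$. A further point is that the pointwise evaluation $u(z)$ is controlled by the $L^\infty(Q_T)$ norm. This holds because the functions involved are continuous elements of $U_M$, so the sup-norm bound holds everywhere and not only almost everywhere.

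If the infimum over $E$ is not attained, I would argue with an $\epsilon$-minimizing $E$ and let $\epsilon\to0$. The same approximation also takes care of any measurability issue with the expectation, which I would note in passing.
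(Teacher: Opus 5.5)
Your proposal is correct and follows essentially the same route as the paper. You use a triangle inequality through $\mathcal G_\omega[E(P_m u_0)]$, bound the first piece by Lemma~\ref{lem:picard_error1-app} at $E(P_m u_0)\in U_0\subset U_{0,R}$ and the second by the stability estimate \eqref{CDoI}, then set $\omega=\omega_\ast$, apply $(a+b)^2\le 2a^2+2b^2$, take expectations, and take the infimum over $E\in\mathcal E_m$; the $\epsilon$-minimizing detour is unnecessary, since the bound holds for every $E$ and the infimum passes through directly, exactly as in the paper.
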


\begin{proof}[Proof of Lemma~4.2]
Fix $\omega\in\Omega_{\rm eq}$, $E\in\mathcal E_m$, and $u_0\in U_0$. Put
\[
    \widetilde u_0:=E(P_m u_0).
\]
Since $E:\mathbb R^m\to U_0$ and $U_0\subset U_{0,R}$, we have
$\widetilde u_0\in U_{0,R}$. By the definition of the finite-observation
Picard model,
\[
    b_{\omega,E,\ell}(P_m u_0,\cdot)
    =
    \mathcal T_{\omega,\widetilde u_0}^{[\ell]}[0].
\]
Using Lemma \ref{lem:picard_error1-app} with initial data $\widetilde u_0$, we obtain
\[
    \left\|
    b_{\omega,E,\ell}(P_m u_0,\cdot)
    -
    \mathcal G_\omega[\widetilde u_0]
    \right\|_{L^\infty(Q_T)}
    \le
    \frac{M\delta^\ell}{1-\delta}.
\]
On the other hand, by \eqref{CDoI} (i.e. 
the continuous dependence on initial data in Proposition \ref{prop:LWP}), we have 
\[
    \left\|
    \mathcal G_\omega[\widetilde u_0]
    -
    \mathcal G_\omega[u_0]
    \right\|_{L^\infty(Q_T)}
    \le
    \frac{C_S}{1-\delta}
    \|\widetilde u_0-u_0\|_{L^\infty(D)}.
\]
Therefore, by the triangle inequality, we obtain \eqref{C.2_1}:
\[
\begin{aligned}
    &
    \left\|
    b_{\omega,E,\ell}(P_m u_0,\cdot)
    -
    \mathcal G_\omega[u_0]
    \right\|_{L^\infty(Q_T)}
    \\
    & \qquad \le
    \left\|
    b_{\omega,E,\ell}(P_m u_0,\cdot)
    -
    \mathcal G_\omega[\widetilde u_0]
    \right\|_{L^\infty(Q_T)}
    +
    \left\|
    \mathcal G_\omega[\widetilde u_0]
    -
    \mathcal G_\omega[u_0]
    \right\|_{L^\infty(Q_T)}
    \\
    &\qquad\le
    \frac{M\delta^\ell}{1-\delta}
    +
    \frac{C_S}{1-\delta}
    \|E(P_m u_0)-u_0\|_{L^\infty(D)}.
\end{aligned}
\]

We now prove the corresponding oracle bound \eqref{C.2_2}. Let $\omega_\ast\in\Omega_{\rm eq}$
be such that $\mathcal G_\ast=\mathcal G_{\omega_\ast}$. For each
$E\in\mathcal E_m$, the preceding estimate gives
\[
    \left|
    b_{\omega_\ast,E,\ell}(P_m u_0,z)
    -
    \mathcal G_\ast[u_0](z)
    \right|^2
\le
    2\left(
    \frac{M\delta^\ell}{1-\delta}
    \right)^2
    +
    2\left(
    \frac{C_S}{1-\delta}
    \right)^2
    \|E(P_m u_0)-u_0\|_{L^\infty(D)}^2
\]
for any $u_0\in U_0$ and $z\in Q_T$. Taking expectation with respect to
$u_0\sim\Pi$ and $z\sim\mu$, we obtain
\[
\begin{aligned}
    L_m[b_{\omega_\ast,E,\ell}]
    &\le
    \frac{2M^2\delta^{2\ell}}{(1-\delta)^2}
    +
    \frac{2C_S^2}{(1-\delta)^2}
    \mathbb E_{u_0\sim\Pi}
    \left[
    \|E(P_m u_0)-u_0\|_{L^\infty(D)}^2
    \right].
\end{aligned}
\]
Since $b_{\omega_\ast,E,\ell}\in\mathcal B^{\Omega_{\rm eq}}_{m,\ell}$, it
follows that
\[
\begin{aligned}
    \inf_{b\in\mathcal B^{\Omega_{\rm eq}}_{m,\ell}} L_m[b]
    &\le
    \inf_{E\in\mathcal E_m} L_m[b_{\omega_\ast,E,\ell}]  \\
    &\le
    \frac{2M^2\delta^{2\ell}}{(1-\delta)^2}
    +
    \frac{2C_S^2}{(1-\delta)^2}
    \inf_{E\in\mathcal E_m}
    \mathbb E_{u_0\sim\Pi}
    \left[
    \|E(P_m u_0)-u_0\|_{L^\infty(D)}^2
    \right].
\end{aligned}
\]
By the definition of $\varepsilon_{\rm rec}^2(m)$, this yields
\eqref{C.2_2}. 
The proof of Lemma \ref{lem:picard_error2-app} is finished.
\end{proof}

\section{Generalization Error Bounds: Proofs of Theorem \ref{thm:generalization} and Corollary \ref{cor:practical-scaling}}\label{app:D}

In this appendix we prove Theorem~\ref{thm:generalization} and
Corollary~\ref{cor:practical-scaling}. We first record the uniform deviation
estimate used in the proof. The statement is formulated at the trajectory-block
level so that it applies both to the full-information case and to the
finite-observation case.

\begin{remark}[Uniform deviation estimate for the clipped loss]
\label{rem:uniform-deviation}
Let $\mathcal X$ be an input space and let
$\mathcal B\subset\{b:\mathcal X\times Q_T\to[-M,M]\}$.
Let $W$ be a latent trajectory variable with law $\nu$, let
$A(W)\in\mathcal X$ be the model input, and let
$y(W,z)\in[-M,M]$ be the target value at the query point $z$. For independent
blocks $(W_i,z_{i,1},\ldots,z_{i,q})\sim \nu\otimes\mu^{\otimes q}$, set
$a_i=A(W_i)$ and $y_{i,j}=y(W_i,z_{i,j})$. For the trajectory-level sample
$\mathcal S_{n,q}^{\mathcal X}
=
\{(a_i,\{z_{i,j}\}_{j=1}^q)\}_{i=1}^n$, define the pointwise empirical
Rademacher complexity by
\[
    \widehat{\mathfrak R}^{\rm pt}_{\mathcal S_{n,q}^{\mathcal X}}(\mathcal B)
    :=
    \mathbb E_\sigma
    \left[
    \sup_{b\in\mathcal B}
    \frac1{nq}
    \sum_{i=1}^n
    \sum_{j=1}^q
    \sigma_{i,j} b(a_i,z_{i,j})
    \right],
\]
where $\{\sigma_{i,j}:1\le i\le n,1\le j\le q\}$ are independent Rademacher
variables. As in Section~5, we use the trajectory-normalized version
\[
    \widehat{\mathfrak R}_{\mathcal S_{n,q}^{\mathcal X}}(\mathcal B)
    :=
    \sqrt q\,\widehat{\mathfrak R}^{\rm pt}_{\mathcal S_{n,q}^{\mathcal X}}(\mathcal B)
    =
    \mathbb E_\sigma
    \left[
    \sup_{b\in\mathcal B}
    \frac1{n\sqrt q}
    \sum_{i=1}^n
    \sum_{j=1}^q
    \sigma_{i,j} b(a_i,z_{i,j})
    \right].
\]
Define
\[
    L_{\mathcal X}[b]
    :=
    \mathbb E_{W\sim\nu,z\sim\mu}
    \ell_{\Lambda_{\mathrm{clip}}}\bigl(b(A(W),z),y(W,z)\bigr)
\]
and
\[
    \widehat L_{\mathcal X,n,q}[b]
    :=
    \frac1n\sum_{i=1}^n\frac1q\sum_{j=1}^q
    \ell_{\Lambda_{\mathrm{clip}}}\bigl(b(a_i,z_{i,j}),y_{i,j}\bigr).
\]
Then there exists a universal constant $C>0$ such that, with probability at
least $1-\rho$,
\[
    \sup_{b\in\mathcal B}
    \left|
    L_{\mathcal X}[b]-\widehat L_{\mathcal X,n,q}[b]
    \right|
    \le
    C M
    \widehat{\mathfrak R}_{\mathcal S_{n,q}^{\mathcal X}}(\mathcal B)
    +
    C M^2
    \sqrt{\frac{\log(1/\rho)}{n}} .
\]
Indeed, symmetrization is applied to the $n$ independent trajectory blocks.
Conditional on the sample, define
\[
    \phi_i(r_1,\ldots,r_q)
    :=
    \frac1q
    \sum_{j=1}^q
    \ell_{\Lambda_{\mathrm{clip}}}(r_j,y_{i,j}) .
\]
Since the clipped squared loss is $CM$-Lipschitz in the scalar prediction on
$[-M,M]$, each $\phi_i$ is $CM/\sqrt q$-Lipschitz with respect to the Euclidean
norm on $\mathbb R^q$. Maurer's vector contraction inequality
\citep{Maurer2016} therefore controls the Rademacher complexity of the clipped
loss class by
$CM\widehat{\mathfrak R}_{\mathcal S_{n,q}^{\mathcal X}}(\mathcal B)$.
The last term follows from bounded-difference concentration for the block
losses, which are bounded by $CM^2$.
\end{remark}

\begin{proof}[Proof of Theorem~\ref{thm:generalization}]
We first prove the full-information bound. Put
\[
    \mathcal B_\ell:=\mathcal B^{\Omega_{\rm eq}}_\ell .
\]
Since all predictors are clipped to $[-M,M]$ and
$\mathcal G_\ast[u_0]\in U_M$, the target values
$\mathcal G_\ast[u_0](z)$ also belong to $[-M,M]$. Hence the clipped squared
loss is bounded by $4M^2$ and is $4M$-Lipschitz in the first variable.

Apply Remark~\ref{rem:uniform-deviation} with $W=u_0$, $A(W)=u_0$, and
$y(W,z)=\mathcal G_\ast(u_0)(z)$. Then, with probability at least $1-\rho$,
we have
\[
    \sup_{b\in\mathcal B_\ell}
    \left|
    L[b]-\widehat L_{n,q}[b]
    \right|
    \le
    C M\widehat{\mathfrak R}^{\Omega_{\rm eq}}
    +
    C M^2\sqrt{\frac{\log(1/\rho)}{n}} .
\]
Denote the right-hand side by $\Delta_\ell$. Thus, on this event,
\[
    L[b]\le \widehat L_{n,q}[b]+\Delta_\ell,
    \qquad
    \widehat L_{n,q}[b]\le L[b]+\Delta_\ell
\]
for every $b\in\mathcal B_\ell$.

Let
$\widehat b_\ell\in\operatorname*{argmin}_{b\in\mathcal B_\ell}
\widehat L_{n,q}[b]$ and
$\widehat\Gamma_\ell:=\operatorname{embed}(\widehat b_\ell)$.
By the definition of the implementation error,
\[
    L[\widehat\Gamma_\ell]
    =
    L[\operatorname{embed}(\widehat b_\ell)]
    \le
    L[\widehat b_\ell]+\varepsilon_{\rm imp}.
\]
Using the uniform deviation estimate and the empirical minimality of
$\widehat b_\ell$, we obtain, for every $b\in\mathcal B_\ell$,
\[
\begin{aligned}
    L[\widehat b_\ell]
    \le
    \widehat L_{n,q}[\widehat b_\ell]+\Delta_\ell \le
    \widehat L_{n,q}[b]+\Delta_\ell \le
    L[b]+2\Delta_\ell .
\end{aligned}
\]
Taking the infimum over $b\in\mathcal B_\ell$ gives
\[
    L[\widehat b_\ell]
    \le
    \inf_{b\in\mathcal B_\ell}L[b]
    +
    2\Delta_\ell .
\]
Therefore,
\[
    L[\widehat\Gamma_\ell]
    \le
    \varepsilon_{\rm imp}
    +
    \inf_{b\in\mathcal B_\ell}L[b]
    +
    2\Delta_\ell .
\]
By Lemma~\ref{lem:picard_error1},
\[
    \inf_{b\in\mathcal B_\ell}L[b]
    \le
    \frac{M^2\delta^{2\ell}}{(1-\delta)^2}.
\]
Absorbing the factor $2$ into the universal constant $C$, we obtain
\[
    L[\widehat\Gamma_\ell]
    \le
    \varepsilon_{\rm imp}
    +
    \frac{M^2\delta^{2\ell}}{(1-\delta)^2}
    +
    C M\widehat{\mathfrak R}^{\Omega_{\rm eq}}
    +
    C M^2\sqrt{\frac{\log(1/\rho)}{n}} .
\]
This proves part {\rm (i)}.

We next prove the finite-observation bound. Put
\[
    \mathcal B_{m,\ell}:=\mathcal B^{\Omega_{\rm eq}}_{m,\ell}.
\]
Apply the same uniform deviation estimate with $W=u_0$, $A(W)=P_m u_0$, and
$y(W,z)=\mathcal G_\ast(u_0)(z)$ to the input space $P_m(U_0)$ and the class
$\mathcal B_{m,\ell}$. Then, with probability at least $1-\rho$,
\[
    \sup_{b\in\mathcal B_{m,\ell}}
    \left|
    L_m[b]-\widehat L_{m,n,q}[b]
    \right|
    \le
    C M\widehat{\mathfrak R}^{\Omega_{\rm eq}}_m
    +
    C M^2\sqrt{\frac{\log(1/\rho)}{n}} .
\]
Denote the right-hand side by $\Delta_{m,\ell}$. Let
$\widehat b_{m,\ell}
\in
\operatorname*{argmin}_{b\in\mathcal B_{m,\ell}}
\widehat L_{m,n,q}[b]$ and
$\widehat\Gamma_{m,\ell}:=\operatorname{embed}_m(\widehat b_{m,\ell})$.
Then, by the definition of $\varepsilon_{\rm imp}^m$,
\[
    L_m[\widehat\Gamma_{m,\ell}]
    =
    L_m[\operatorname{embed}_m(\widehat b_{m,\ell})]
    \le
    L_m[\widehat b_{m,\ell}]
    +
    \varepsilon_{\rm imp}^m .
\]
As before, the empirical minimality of $\widehat b_{m,\ell}$ implies that, for
every $b\in\mathcal B_{m,\ell}$,
\[
\begin{aligned}
    L_m[\widehat b_{m,\ell}]
    \le
    \widehat L_{m,n,q}[\widehat b_{m,\ell}]
    +
    \Delta_{m,\ell} \le
    \widehat L_{m,n,q}[b]
    +
    \Delta_{m,\ell} \le
    L_m[b]
    +
    2\Delta_{m,\ell}.
\end{aligned}
\]
Taking the infimum over $b\in\mathcal B_{m,\ell}$, we get
\[
    L_m[\widehat b_{m,\ell}]
    \le
    \inf_{b\in\mathcal B_{m,\ell}}L_m[b]
    +
    2\Delta_{m,\ell}.
\]
Thus
\[
    L_m[\widehat\Gamma_{m,\ell}]
    \le
    \varepsilon_{\rm imp}^m
    +
    \inf_{b\in\mathcal B_{m,\ell}}L_m[b]
    +
    2\Delta_{m,\ell}.
\]
By Lemma~\ref{lem:picard_error2},
\[
    \inf_{b\in\mathcal B_{m,\ell}}L_m[b]
    \le
    \frac{2M^2\delta^{2\ell}}{(1-\delta)^2}
    +
    \frac{2C_S^2}{(1-\delta)^2}
    \varepsilon_{\rm rec}^2(m).
\]
Therefore,
\[
    L_m[\widehat\Gamma_{m,\ell}]
    \le
    \varepsilon_{\rm imp}^m
    +
    \frac{2M^2\delta^{2\ell}}{(1-\delta)^2}
    +
    \frac{2C_S^2}{(1-\delta)^2}
    \varepsilon_{\rm rec}^2(m)
    +
    C M\widehat{\mathfrak R}^{\Omega_{\rm eq}}_m
    +
    C M^2\sqrt{\frac{\log(1/\rho)}{n}} .
\]
This proves part {\rm (ii)}.

If one wants the two estimates to hold simultaneously, one applies the above
two deviation estimates with confidence parameter $\rho/2$ and absorbs the
resulting harmless factor into the universal constant $C$.
\end{proof}

\begin{proof}[Proof of Corollary~\ref{cor:practical-scaling}]
We first consider the full-information case. By the entropy bound stated in
Section~\ref{sec:5}, we have
\[
    \widehat{\mathfrak R}^{\Omega_{\rm eq}}
    \le
    \frac{C}{\sqrt n}\mathcal I_\Omega,
\]
where
\[
    \mathcal I_\Omega
    :=
    \int_0^{2M}
    \sqrt{\mathcal H_\Omega(c(1-\delta)\varepsilon)}
    \,d\varepsilon
    <\infty .
\]
Let
\[
    \ell_n
    :=
    \left\lceil
    \frac{\log n}{4|\log\delta|}
    \right\rceil .
\]
Since $0<\delta<1$, we have $|\log\delta|=-\log\delta$. Hence
\[
    \delta^{2\ell_n}
    =
    \exp(-2\ell_n|\log\delta|)
    \le
    \exp\left(-\frac12\log n\right)
    =
    n^{-1/2}.
\]
Therefore,
\[
    \frac{M^2\delta^{2\ell_n}}{(1-\delta)^2}
    \le
    C\frac{M^2}{\sqrt n},
\]
where the constant $C$ may depend on $\delta$, but not on $n$. Substituting
these estimates into Theorem~\ref{thm:generalization} gives
\[
    L[\widehat\Gamma_{\ell_n}]
    \le
    \varepsilon_{\rm imp}
    +
    C
    \left(
    \frac{M^2}{\sqrt n}
    +
    \frac{M\mathcal I_\Omega}{\sqrt n}
    +
    M^2\sqrt{\frac{\log(1/\rho)}{n}}
    \right).
\]
This proves the full-information estimate.

We next consider the finite-observation case. Assume that
\[
    \varepsilon_{\rm rec}^2(m)\le C_{\rm rec}^2 m^{-2\beta}
\]
and
\[
    \mathcal I_{\Omega,m}\le C_I m^\alpha .
\]
Using the entropy estimate
$\widehat{\mathfrak R}^{\Omega_{\rm eq}}_m
\le
C\mathcal I_{\Omega,m}/\sqrt n$, Theorem~\ref{thm:generalization} gives
\[
\begin{aligned}
    L_m[\widehat\Gamma_{m,\ell}]
    \le\;&
    \varepsilon_{\rm imp}^{m}
    +
    \frac{2M^2\delta^{2\ell}}{(1-\delta)^2}
    +
    \frac{2C_S^2C_{\rm rec}^2}{(1-\delta)^2}m^{-2\beta}
\\
    &\quad
    +
    C M C_I \frac{m^\alpha}{\sqrt n}
    +
    C M^2\sqrt{\frac{\log(1/\rho)}{n}} .
\end{aligned}
\]
We now take $\ell=\ell_n$ and $m=m_n$, where
\[
    m_n\asymp n^{1/(2(2\beta+\alpha))}.
\]
The choice of $\ell_n$ gives
\[
    \delta^{2\ell_n}\le n^{-1/2}.
\]
Moreover, by the choice of $m_n$,
\[
    m_n^{-2\beta}
    \asymp
    n^{-\beta/(2\beta+\alpha)}
\]
and
\[
    \frac{m_n^\alpha}{\sqrt n}
    \asymp
    n^{\alpha/(2(2\beta+\alpha))}n^{-1/2}
    =
    n^{-\beta/(2\beta+\alpha)}.
\]
Since $\alpha\ge0$, we have
$\beta/(2\beta+\alpha)\le 1/2$, and hence
$n^{-1/2}\le n^{-\beta/(2\beta+\alpha)}$ for $n\ge1$. Therefore the Picard
truncation term is also bounded by a constant multiple of
$n^{-\beta/(2\beta+\alpha)}$. Consequently,
\[
    L_{m_n}[\widehat\Gamma_{m_n,\ell_n}]
    \le
    \varepsilon_{\rm imp}^{m_n}
    +
    C
    \left(
    n^{-\beta/(2\beta+\alpha)}
    +
    M^2\sqrt{\frac{\log(1/\rho)}{n}}
    \right),
\]
where $C$ may depend on $M,C_S,\delta,C_{\rm rec},C_I$, but not on $n$.
This proves the finite-observation estimate.

For quasi-uniform sensors and stable reconstruction on
$U_0\subset H^{s_0}(D)$ with $s_0>d/2$, the usual Sobolev reconstruction rate
gives
\[
    \beta=\frac{s_0-d/2}{d}.
\]
This completes the proof of Corollary~\ref{cor:practical-scaling}.
\end{proof}

\section{Long-Time Prediction: Proofs of Theorem \ref{thm:blockwise-long-time}}\label{app:E}

In this appendix, we consider long-time prediction based on the learned local model.

The idea is based on the standard argument for extending local-in-time
solutions. First, let us briefly recall the local-in-time solution theory.
Suppose that $\Omega_{\rm eq}:=\mathcal A\times\mathcal F$ is an admissible
equation class satisfying the assumptions of Proposition~\ref{prop:ass_app}.
Then, for any $R,M$ with $0<R<M$ and any $\delta\in(0,1)$, we can choose
$T=T(R,M,\delta)>0$ sufficiently small so that \eqref{con-map_key} holds.
Hence, by Proposition~\ref{prop:ass_app}, Assumption~\ref{ass:contraction}
holds for this choice of $R,M,\delta$ and $T$. Consequently, the local
well-posedness result in Proposition~\ref{prop:LWP} follows from Banach's fixed
point theorem. In other words, for each $\omega\in\Omega_{\rm eq}$, there
exists a unique solution operator $\mathcal G_\omega$ which maps an initial
datum $u_0\in U_{0,R}$ to the corresponding solution $u$ on $[0,T]$ of
\eqref{P'}.

If the terminal value $u(T)$ belongs to $U_{0,R}$, that is,
$\|u(T)\|_{L^\infty(D)}\le R$, then $u(T)$ can be used as a new initial data.
Thus the solution can be extended to $[T,2T]$ by applying the same local
solution operator $\mathcal G_\omega$ again. More precisely, after the time
shift $s=t-T$, the solution on $[T,2T]$ is given by
\[
    u(T+s)=\mathcal G_\omega[u(T)](s),
    \qquad 0\le s\le T.
\]
Equivalently, if we define the terminal map
$\psi(u):=u(T)$ and $\Phi_\omega:=\psi\circ\mathcal G_\omega$, then the
second-step initial data is $\Phi_\omega(u_0)$, and the solution on the second
time block is
\[
    \mathcal G_\omega[\Phi_\omega(u_0)].
\]
Repeating this procedure as long as the terminal values remain in $U_{0,R}$,
we can extend the solution successively over longer time intervals. The idea of
the long-time prediction result is to use this standard continuation mechanism
at the level of the learned local solution operator.

This idea has three main advantages. First, the same local solution operator is
used at every time block. Therefore, once the solution operator has been learned
well on the first time interval $[0,T]$, no additional learning is needed for
later time intervals. Long-time prediction can be obtained by repeatedly using
the short-time learned model.

Second, it is enough to check the size of the terminal value. As long as the
terminal value remains in $U_{0,R}$, or equivalently its $L^\infty(D)$ norm
remains bounded by $R$, the same local solution operator can be applied again. Thus one can
predict the solution with a theoretical guarantee, although the accumulated
error of the short-time learned model must be taken into account.

Third, for PDEs whose solutions are uniformly bounded, the exact solution does
not exceed the threshold $R$ if $R$ is chosen appropriately. Moreover, one can
often show that a short-time learned model, such as a neural operator, can
approximate the local solution operator well while keeping the output uniformly
bounded by $R$ (see Remark \ref{rem_clip} and \citet{Furuya2025}). In such a case, the rollout remains in the admissible state set,
and the same local error estimate can be iterated over successive time blocks.

\subsection{Formulation and Result}

Assume that every trajectory considered below admits a terminal trace. In our
present setting, this is automatic, since the solution space is defined as
$U_M:=\{u\in C([0,T];C(D)):\|u\|_{L^\infty(Q_T)}\le M\}$.
Define $\psi:U_M\to L^\infty(D)$ by $\psi(u):=u(\cdot,T)$, and assume that 
\[
\|\psi(u)-\psi(v)\|_{L^\infty(D)}\le L_\psi\|u-v\|_{L^\infty(Q_T)}.
\]
Let $\Phi_\ast:=\psi\circ\mathcal G_\ast$. For $\kappa\in\mathbb N$, define $U_0^{[\kappa]}:=\{u_0\in U_0:\Phi_\ast^j(u_0)\in U_{0,R}\text{ for }j=0,\ldots,\kappa\}$. This condition ensures that the same local solution theory can be applied on each time block. We assume $\Pi(U_0^{[\kappa]})>0$ and write $\Pi_\kappa:=\Pi(\cdot\mid U_0^{[\kappa]})$.

For $u_0\in U_0^{[\kappa]}$, define the exact states by $v_0=u_0$ and $v_{j+1}=\Phi_\ast(v_j)$. The exact solution on the $j$-th block is $\mathcal G_\ast(v_j)$ after the time shift $t=jT+s$, $s\in[0,T]$. Next, define the approximate transition by $\widehat\Phi_\ell:=\mathcal P_R\circ\psi\circ\widehat\Gamma_\ell$, where $\mathcal P_R:L^\infty(D)\to U_{0,R}$ is the pointwise clipping map. The approximate states are $\widehat v_0=u_0$ and $\widehat v_{j+1}=\widehat\Phi_\ell(\widehat v_j)$. Thus the model prediction on the $j$-th block is $\widehat\Gamma_\ell[\widehat v_j]$. For $j=0,\ldots,\kappa-1$, define the block-wise risk by
\begin{equation*}
L_j^{\rm block}[\widehat\Gamma_\ell]:=\mathbb E_{u_0\sim\Pi_\kappa,\ z\sim\mu}\left[\left|\widehat\Gamma_\ell[\widehat v_j](z)-\mathcal G_\ast[v_j](z)\right|^2\right].
\end{equation*}
Let $\mathcal V_\kappa(\widehat\Gamma_\ell):=\{v_j(u_0),\widehat v_j(u_0):u_0\in U_0^{[\kappa]},\ j=0,\ldots,\kappa\}\subset U_{0,R}$, and define the rollout-relevant local error by
\begin{equation}\label{local_error}
\varepsilon_{\rm loc,\ell}^{[\kappa]}:=\sup_{v\in\mathcal V_\kappa(\widehat\Gamma_\ell)}\left\|\widehat\Gamma_\ell[v]-\mathcal G_\ast[v]\right\|_{L^\infty(Q_T)} .
\end{equation}

\begin{theorem}[Theorem \ref{thm:blockwise-long-time}, Block-wise long-time error propagation]\label{thm:blockwise-long-time_app}
Assume Assumption~\ref{ass:contraction} and the terminal trace condition above. Then, for every $j=0,\ldots,\kappa-1$,
\begin{equation*}
L_j^{\rm block}[\widehat\Gamma_\ell]\le \left(\sum_{r=0}^{j}\left(L_\psi\frac{C_S}{1-\delta}\right)^r\right)^2\left(\varepsilon_{\rm loc,\ell}^{[\kappa]}\right)^2 .
\end{equation*}
\end{theorem}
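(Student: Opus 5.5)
The plan is to obtain a deterministic, pointwise-in-$u_0$ bound on the $L^\infty(Q_T)$ discrepancy on each block and then square and integrate. Fix $u_0\in U_0^{[\kappa]}$ and write $e_j:=\|\widehat v_j-v_j\|_{L^\infty(D)}$ for the state error at the start of block $j$, with $e_0=0$. Set $K:=L_\psi C_S/(1-\delta)$ and $\varepsilon:=\varepsilon_{\rm loc,\ell}^{[\kappa]}$. The block prediction error splits by the triangle inequality:
\[
\|\widehat\Gamma_\ell[\widehat v_j]-\mathcal G_\ast[v_j]\|_{L^\infty(Q_T)}\le \|\widehat\Gamma_\ell[\widehat v_j]-\mathcal G_\ast[\widehat v_j]\|_{L^\infty(Q_T)}+\|\mathcal G_\ast[\widehat v_j]-\mathcal G_\ast[v_j]\|_{L^\infty(Q_T)}.
\]
The first term is at most $\varepsilon$, since $\widehat v_j\in\mathcal V_\kappa(\widehat\Gamma_\ell)$. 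The second term is at most $\frac{C_S}{1-\delta}e_j$ by the stability estimate \eqref{CDoI} of Proposition~\ref{prop:LWP}. Applying \eqref{CDoI} is legitimate because both $v_j$ and $\widehat v_j$ lie in $U_{0,R}$: for $v_j$ this is the definition of $U_0^{[\kappa]}$, and for $\widehat v_j$ it holds because $\mathcal P_R$ maps into $U_{0,R}$.

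Next I derive a recursion for $e_j$. Since $v_j\in U_{0,R}$ and $\|\cdot\|\le R$ pointwise, clipping leaves it fixed, so $v_{j+1}=\Phi_\ast(v_j)=\mathcal P_R\psi\mathcal G_\ast[v_j]$. Because pointwise clipping is $1$-Lipschitz in $L^\infty(D)$,
\[
e_{j+1}\le \|\psi(\widehat\Gamma_\ell[\widehat v_j])-\psi(\mathcal G_\ast[v_j])\|_{L^\infty(D)}\le L_\psi\Bigl(\varepsilon+\tfrac{C_S}{1-\delta}e_j\Bigr).
\]
Here the same two-term split as above is used, with $\psi$ and its Lipschitz bound applied on top of it. This yields $e_{j+1}\le L_\psi\varepsilon+Ke_j$.

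Unrolling the recursion from $e_0=0$ gives $e_j\le L_\psi\varepsilon\sum_{r=0}^{j-1}K^r$. Substituting into the block bound,
\[
\|\widehat\Gamma_\ell[\widehat v_j]-\mathcal G_\ast[v_j]\|_{L^\infty(Q_T)}\le\varepsilon+\tfrac{C_S}{1-\delta}L_\psi\varepsilon\sum_{r=0}^{j-1}K^r=\varepsilon\sum_{r=0}^{j}K^r.
\]
The pointwise value at $z$ is bounded by this sup norm; the elements of $U_M$ are continuous, so the pointwise evaluation makes sense. Squaring and taking expectation over $u_0\sim\Pi_\kappa$ and $z\sim\mu$ gives the claimed bound on $L_j^{\rm block}[\widehat\Gamma_\ell]$.

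The computations themselves are routine. The step that needs care is the bookkeeping of where each state lives. Stability \eqref{CDoI} is only available on $U_{0,R}$, and $\varepsilon_{\rm loc,\ell}^{[\kappa]}$ only controls the states in $\mathcal V_\kappa(\widehat\Gamma_\ell)$. Both of these are exactly provided by the definitions of $U_0^{[\kappa]}$ and $\mathcal V_\kappa$, together with the clipping $\mathcal P_R$. I would also check that the exact update coincides with its clipped version, so that the nonexpansiveness of $\mathcal P_R$ can be used for both trajectories simultaneously.
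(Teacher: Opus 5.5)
Your proposal is correct and follows the paper's proof essentially step for step. You use the same recursion $e_{j+1}\le L_\psi\varepsilon+Ke_j$, obtained from the nonexpansiveness of $\mathcal P_R$ and the identity $\mathcal P_R v_{j+1}=v_{j+1}$, then the same unrolling from $e_0=0$, and the same final split $\varepsilon+\frac{C_S}{1-\delta}e_j$ before squaring and integrating. One small slip: the fact that makes clipping act as the identity on the exact update is $v_{j+1}\in U_{0,R}$, which holds because $j+1\le\kappa$. It is not $v_j\in U_{0,R}$, as you wrote.
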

This result shows that long-time prediction is obtained by deterministic propagation of the local error. Since no additional empirical risk minimization is performed after $\widehat\Gamma_\ell$ has been learned, the rollout step does not introduce a new Rademacher complexity term. The block index $j$ enters only through the stability factor in Theorem~\ref{thm:blockwise-long-time_app}. The proof iterates the local stability estimate \eqref{CDoI} along the exact and approximate trajectories.

\subsection{Proof of Theorem \ref{thm:blockwise-long-time_app}}
For simplicity, 
we write
\[
    L_G:=\frac{C_S}{1-\delta},
    \quad
    A_G:=L_\psi L_G
    =
    L_\psi\frac{C_S}{1-\delta},
    \quad \varepsilon:=\varepsilon_{\rm loc,\ell}^{[\kappa]}.
\]
Recall that
\[
    v_{j+1}=\Phi_\ast(v_j)
    =
    \psi\bigl(\mathcal G_\ast[v_j]\bigr),
    \quad
    \widehat v_{j+1}
    =
    \widehat\Phi_\ell(\widehat v_j)
    =
    \mathcal P_R\psi\bigl(\widehat\Gamma_\ell[\widehat v_j]\bigr).
\]
Since $v_j,\widehat v_j\in\mathcal V_\kappa(\widehat\Gamma_\ell)$ for
$j=0,\ldots,\kappa$, the definition of $\varepsilon$ gives
\[
    \left\|
    \widehat\Gamma_\ell[w]-\mathcal G_\ast[w]
    \right\|_{L^\infty(Q_T)}
    \le
    \varepsilon
\]
for every $w\in\{v_j,\widehat v_j:j=0,\ldots,\kappa\}$.
We also use that the pointwise clipping map $\mathcal P_R$ is nonexpansive in
$L^\infty(D)$, namely
\[
    \|\mathcal P_R f-\mathcal P_R g\|_{L^\infty(D)}
    \le
    \|f-g\|_{L^\infty(D)}.
\]
Moreover, since $v_{j+1}\in U_{0,R}$, we have
$\mathcal P_R v_{j+1}=v_{j+1}$.
Define the state error
\[
    e_j:=\|\widehat v_j-v_j\|_{L^\infty(D)}.
\]
Then $e_0=0$. We first prove the recursive estimate
\[
    e_{j+1}
    \le
    L_\psi\varepsilon
    +
    A_G e_j,
    \quad j=0,\ldots,\kappa-1.
\]
In fact, using $v_{j+1}=\psi(\mathcal G_\ast[v_j])$ and
$\mathcal P_R v_{j+1}=v_{j+1}$, we have
\[
\begin{aligned}
    e_{j+1}
    &=
    \left\|
    \widehat v_{j+1}-v_{j+1}
    \right\|_{L^\infty(D)}
    =
    \left\|
    \mathcal P_R\psi\bigl(\widehat\Gamma_\ell[\widehat v_j]\bigr)
    -
    \mathcal P_R\psi\bigl(\mathcal G_\ast[v_j]\bigr)
    \right\|_{L^\infty(D)}
    \\
    &\le
    \left\|
    \psi\bigl(\widehat\Gamma_\ell[\widehat v_j]\bigr)
    -
    \psi\bigl(\mathcal G_\ast[v_j]\bigr)
    \right\|_{L^\infty(D)}
    \le
    L_\psi
    \left\|
    \widehat\Gamma_\ell[\widehat v_j]
    -
    \mathcal G_\ast[v_j]
    \right\|_{L^\infty(Q_T)} .
\end{aligned}
\]
By adding and subtracting $\mathcal G_\ast[\widehat v_j]$, we obtain
\[
    \left\|
    \widehat\Gamma_\ell[\widehat v_j]
    -
    \mathcal G_\ast[v_j]
    \right\|_{L^\infty(Q_T)}
    \le
    \left\|
    \widehat\Gamma_\ell[\widehat v_j]
    -
    \mathcal G_\ast[\widehat v_j]
    \right\|_{L^\infty(Q_T)}
    +
    \left\|
    \mathcal G_\ast[\widehat v_j]
    -
    \mathcal G_\ast[v_j]
    \right\|_{L^\infty(Q_T)} .
\]
The first term is bounded by $\varepsilon$ by the definition of
$\varepsilon_{\rm loc,\ell}^{[\kappa]}$. For the second term, since
$\mathcal G_\ast=\mathcal G_{\omega_\ast}$ and both $v_j$ and $\widehat v_j$
belong to $U_{0,R}$, the estimate \eqref{CDoI} gives
\[
    \left\|
    \mathcal G_\ast[\widehat v_j]
    -
    \mathcal G_\ast[v_j]
    \right\|_{L^\infty(Q_T)}
    \le
    L_G
    \|\widehat v_j-v_j\|_{L^\infty(D)}
    =
    L_G e_j .
\]
Therefore,
\[
    e_{j+1}
    \le
    L_\psi(\varepsilon+L_G e_j)
    =
    L_\psi\varepsilon + A_G e_j.
\]
This proves the recursive estimate.
We now solve this recursion. Since $e_0=0$, induction gives
\[
    e_j
    \le
    L_\psi\varepsilon
    \sum_{r=0}^{j-1} A_G^r,
    \quad j=1,\ldots,\kappa.
\]
For $j=0$, this estimate is understood as $e_0=0$.

Next, for the prediction error on the $j$-th block, we estimate
\[
\begin{aligned}
    \left\|
    \widehat\Gamma_\ell[\widehat v_j]
    -
    \mathcal G_\ast[v_j]
    \right\|_{L^\infty(Q_T)}
    &\quad\le
    \left\|
    \widehat\Gamma_\ell[\widehat v_j]
    -
    \mathcal G_\ast[\widehat v_j]
    \right\|_{L^\infty(Q_T)}
    +
    \left\|
    \mathcal G_\ast[\widehat v_j]
    -
    \mathcal G_\ast[v_j]
    \right\|_{L^\infty(Q_T)}
    \\
    &\quad\le
    \varepsilon
    +
    L_G e_j.
\end{aligned}
\]
Using the bound on $e_j$, we obtain
\[
    \left\|
    \widehat\Gamma_\ell[\widehat v_j]
    -
    \mathcal G_\ast[v_j]
    \right\|_{L^\infty(Q_T)}
    \le
    \varepsilon
    +
    L_G L_\psi\varepsilon
    \sum_{r=0}^{j-1} A_G^r
=
    \varepsilon
    \left(
    1+
    A_G
    \sum_{r=0}^{j-1} A_G^r
    \right)
=
    \varepsilon
    \sum_{r=0}^{j} A_G^r .
\]
Hence, for every $z\in Q_T$,
\[
    \left|
    \widehat\Gamma_\ell[\widehat v_j](z)
    -
    \mathcal G_\ast[v_j](z)
    \right|^2
    \le
    \left(
    \sum_{r=0}^{j} A_G^r
    \right)^2
    \varepsilon^2 .
\]
Taking expectation with respect to $u_0\sim\Pi_\kappa$ and $z\sim\mu$, we obtain
\[
    L_j^{\rm block}[\widehat\Gamma_\ell]
    \le
    \left(
    \sum_{r=0}^{j} A_G^r
    \right)^2
    \varepsilon^2 .
\]
Substituting $A_G=L_\psi C_S/(1-\delta)$ and
$\varepsilon=\varepsilon_{\rm loc,\ell}^{[\kappa]}$, we conclude Theorem~\ref{thm:blockwise-long-time_app}.

\section{Nonlinear Heat Equations on the Torus and Picard-Type FNOs}

\subsection{Nonlinear Heat Equations on the Torus and the Operator Learning Setting}
\label{app:F:torus-setting}

We specialize the abstract Picard-type operator learning framework of
Sections~\ref{sec:problem-setting}--\ref{sec:long-time} to nonlinear heat equations
on the torus. Let $\mathbb T^d:=\mathbb R^d/(2\pi\mathbb Z)^d$ and
$Q_T:=\mathbb T^d\times(0,T)$. We consider
\begin{equation}\label{nonlinearHE_torus}
        \partial_t u-\Delta_{\mathbb T^d}u=F(u),
    \quad
    u(0)=u_0 .
\end{equation}
The linear part is fixed and the unknown equation specification is the 
nonlinearity. For $R,M,L>0$, set
\[
    \mathcal F_{M,L}
    :=
    \left\{
    F:[-M,M]\to\mathbb R:
    F(0)=0,\ 
    {\rm Lip}_{[-M,M]}(F)\le L
    \right\},
\]
and define the admissible equation class by
\[
    \Omega_{\rm eq}
    :=
    \left\{
    \omega_F:=(-\Delta_{\mathbb T^d},F):
    F\in\mathcal F_{M,L}
    \right\}.
\]
We write $\mathcal G_F$ and $\mathcal T_{F,u_0}$ instead of
$\mathcal G_{\omega_F}$ and $\mathcal T_{\omega_F,u_0}$ for simplicity.
See Appendix \ref{appendix:B} for the basic facts of this admissible equation class. 
As in Section~\ref{sec:problem-setting}, let
\[
    U_{0,R}
    :=
    \left\{
    u_0\in C(\mathbb T^d):
    \|u_0\|_{L^\infty(\mathbb T^d)}\le R
    \right\},
    \quad
    U_M
    :=
    \left\{
    u\in C([0,T];C(\mathbb T^d)):
    \|u\|_{L^\infty(Q_T)}\le M
    \right\}.
\]
Recall the definition of the Picard map $\mathcal T_{F,u_0}$ for 
$F\in\mathcal F_{M,L}$ and $u_0\in U_{0,R}$: 
\begin{equation}\label{picard-torus}
        \mathcal T_{F,u_0}[v](t)
    :=
    e^{t\Delta_{\mathbb T^d}}u_0
    +
    \int_0^t
    e^{(t-\tau)\Delta_{\mathbb T^d}}
    F(v(\tau))\,d\tau .
\end{equation}
For any $R, M > 0$ with $R<M$ and for any $\delta \in (0,1)$, we can take $T$ sufficiently small so that
\begin{equation}
\label{eq:F:small-time-conditions}
    R+TLM\le M,
    \quad
    TL\le\delta .
\end{equation}
Then, by applying Proposition \ref{prop:ass_app} with $C_S=1$, we have
\[
    \|\mathcal T_{F,u_0}[v]\|_{L^\infty(Q_T)}
    \le R+TLM\le M,
    \quad
    \|\mathcal T_{F,u_0}[v]-\mathcal T_{F,u_0}[w]\|_{L^\infty(Q_T)}
    \le \delta\|v-w\|_{L^\infty(Q_T)} 
\]
for any $v,w\in U_M$, and 
\begin{equation}
\label{eq:F:initial-data-lipschitz}
    \|\mathcal T_{F,u_0}[v]-\mathcal T_{F,\widetilde u_0}[v]\|_{L^\infty(Q_T)}
    \le
    \|u_0-\widetilde u_0\|_{L^\infty(\mathbb T^d)} .
\end{equation}
for any $u_0, \tilde u_0 \in U_{0,R}$. 
Thus, Assumption~\ref{ass:contraction} holds with
$C_S=1$ in this setting. 
Consequently, by
Proposition~\ref{prop:LWP}, for each $F\in\mathcal F_{M,L}$, there exists a
unique solution operator $\mathcal G_F:U_{0,R}\to U_M$ to \eqref{nonlinearHE_torus}. 
Define 
\begin{equation}\label{notation_picardsol}
        u_{F,u_0}^{(0)}:=0,\quad 
    u_{F,u_0}^{(\ell+1)}
    :=
    \mathcal T_{F,u_0}\bigl[u_{F,u_0}^{(\ell)}\bigr],
    \quad \ell=0,1,2,\cdots .
\end{equation}
Then, by Lemma~\ref{lem:picard_error1} (Lemma \ref{lem:picard_error1-app}), we have the Picard truncation error bound: 
\begin{equation}
\label{eq:F:picard-truncation-error}
    \sup_{F\in\mathcal F_{M,L}}
    \sup_{u_0\in U_{0,R}}
    \left\|
    u_{F,u_0}^{(\ell)}-\mathcal G_F(u_0)
    \right\|_{L^\infty(Q_T)}
    \le
    \frac{M\delta^\ell}{1-\delta}.
\end{equation}

Next, we specify the learning setting. Let $s_0>d/2$ and
\[
    U_0
    :=
    \left\{
    u_0\in H^{s_0}(\mathbb T^d):
    \|u_0\|_{H^{s_0}(\mathbb T^d)}\le R_0
    \right\}
    \subset U_{0,R},
\]
where $R_0$ is chosen so that the inclusion holds by the Sobolev embedding. Let
$\Pi$ be a probability measure supported on $U_0$, and let $\mu$ be a
probability measure on $Q_T$. The target equation is generated by some
$F_\ast\in\mathcal F_{M,L}$, and we write
$\mathcal G_\ast:=\mathcal G_{F_\ast}$.

For a candidate operator $\Gamma:U_0\to U_M$, the ideal population
risk is
\[
    L[\Gamma]
    :=
    \mathbb E_{u_0\sim\Pi,\ z\sim\mu}
    \left[
    \left|
    \Gamma(u_0)(z)-\mathcal G_\ast(u_0)(z)
    \right|^2
    \right].
\]
The corresponding empirical risk $\widehat L_{n,q}$ is the one defined in
Section~\ref{subsec:risks}. Since the Picard iterates considered below are
$U_M$-valued, their pointwise values are uniformly bounded by $M$.

The abstract $\ell$-step Picard class in this torus setting is
\[
    \mathcal B_\ell^{\Omega_{\rm eq}}
    =
    \left\{
    b_{F,\ell}:F\in\mathcal F_{M,L}
    \right\},
    \quad
    b_{F,\ell}(u_0,z)
    :=
    \left(
    \mathcal T_{F,u_0}^{[\ell]}[0]
    \right)(z).
\]
By \eqref{eq:F:picard-truncation-error} and Lemma~\ref{lem:picard_error1} (Lemma \ref{lem:picard_error1-app}), we have
\[
    \inf_{b\in\mathcal B_\ell^{\Omega_{\rm eq}}}L[b]
    \le
    \frac{M^2\delta^{2\ell}}{(1-\delta)^2}.
\]

For the Rademacher complexity estimate, we use the equation metric
\[
    d_{\Omega_{\rm eq}}(F,G)
    :=
    \sup_{u_0\in U_{0,R}}
    \sup_{v\in U_M}
    \left\|
    \mathcal T_{F,u_0}[v]
    -
    \mathcal T_{G,u_0}[v]
    \right\|_{L^\infty(Q_T)} .
\]
Since the linear part is fixed, the initial-data terms cancel and
\[
    d_{\Omega_{\rm eq}}(F,G)
    \le
    T\|F-G\|_{L^\infty([-M,M])}.
\]
Thus, the entropy of $\Omega_{\rm eq}$ is controlled by the entropy of the
one-dimensional Lipschitz class $\mathcal F_{M,L}$.

Finally, we record the compactness facts used in the implementation theorem.
Since $s_0>d/2$, the Sobolev embedding is compact in the sense that
\[
    U_0\Subset C(\mathbb T^d).
\]
Moreover, by the Ascoli--Arzel\`a theorem, we have
\[
    \mathcal F_{M,L}\Subset C([-M,M]).
\]
These two compactness properties will be used to obtain uniform convergence of
the $L^\infty$-stable Fourier cutoffs on the compact families generated by
finitely many Picard steps.

\subsection{Picard-Type Fourier Neural Operators}
\label{app:F:picard-type-fno}

We now define the concrete Picard-type FNO implementation used in
Theorem~\ref{thm:torus-implementation-error}. The construction follows the
same Picard-aligned principle as in \citet{Furuya2025}, but we use the Fourier
diagonalization of the heat semigroup on $\mathbb T^d$ 
and an
$L^\infty$-stable Fej\'er cutoff (i.e. the Ces\`aro
mean of rectangular Fourier partial sums).

Let
\[
    e_\xi(x):=(2\pi)^{-d/2}e^{i\xi\cdot x},
    \quad
    \xi\in\mathbb Z^d .
\]
For $f\in C(\mathbb T^d)$, we define the Fourier coefficient of $f$ by
\[
    \widehat f(\xi)
    :=
    \langle \overline e_\xi,f\rangle_{\mathbb T^d}
    =
    \int_{\mathbb T^d} f(x)\overline{e_\xi(x)}\,dx .
\]
Recall that 
\[
    e^{t\Delta_{\mathbb T^d}}f
    =
    \sum_{\xi\in\mathbb Z^d}
    e^{-t|\xi|^2}\widehat f(\xi)e_\xi,
\]
For $N\in\mathbb N$, define 
\[
    \mathfrak m_N(\xi)
    :=
    \prod_{r=1}^d
    \left(
    1-\frac{|\xi_r|}{N+1}
    \right)_+,
    \quad
    \xi=(\xi_1,\cdots,\xi_d)\in\mathbb Z^d .
\]
The symbol $\mathfrak m_N$ is fixed and non-trainable. Let
\[
    \Lambda_N
    :=
    \left\{
    \xi\in\mathbb Z^d:
    |\xi_r|\le N,\ r=1,\cdots,d
    \right\}.
\]
The Fej\'er projection is
\begin{equation}\label{Fejer_proj}
        P_N f
    :=
    \sum_{\xi\in\Lambda_N}
    \mathfrak m_N(\xi)\widehat f(\xi)e_\xi .
\end{equation}
Equivalently, $P_N$ is the tensor-product Fej\'er mean, i.e. the Ces\`aro
mean of rectangular Fourier partial sums. In particular, 
\[
    \|P_N f\|_{L^\infty(\mathbb T^d)}
    \le
    \|f\|_{L^\infty(\mathbb T^d)},
    \quad
    P_N f\to f
    \quad
    \text{in } C(\mathbb T^d)
\]
hold for any $f\in C(\mathbb T^d)$ (see, e.g., \citet[Chapter 3]{Grafakos2014ClassicalFourierAnalysis} and 
\citet[Chapter~I]{Katznelson2004}).
We define the truncated heat semigroup by 
\begin{equation}\label{truncated_heat_semigroup}
        S_N(t)f
    :=
    P_Ne^{t\Delta_{\mathbb T^d}}f
    =
    \sum_{\xi\in\Lambda_N}
    \mathfrak m_N(\xi)
    e^{-t|\xi|^2}
    \langle \overline e_\xi,f\rangle_{\mathbb T^d}
    e_\xi .
\end{equation}
For $g\in C([0,T];C(\mathbb T^d))$, define 
\[
    \mathcal K_N[g](t)
    :=
    \int_0^t S_N(t-\tau)g(\tau)\,d\tau .
\]
Equivalently, we write
\[
    \mathcal K_N[g](t,x)
    =
    \sum_{\xi\in\Lambda_N}
    \mathfrak m_N(\xi)
    \langle \Psi_{\xi,t},g\rangle_{Q_T}
    e_\xi(x),
\]
where
\[
    \Psi_{\xi,t}(\tau,y)
    :=
    \mathbf 1_{\{0<\tau<t\}}
    e^{-(t-\tau)|\xi|^2}
    \overline{e_\xi(y)}
\]
and
\[
    \langle \Psi_{\xi,t},g\rangle_{Q_T}
    :=
    \int_0^T
    \int_{\mathbb T^d}
    \Psi_{\xi,t}(\tau,y)g(\tau,y)\,dy\,d\tau .
\]
Thus, $\mathcal K_N$ is a finite-rank operator.
Note that the operators $P_N$, $S_N(t)$, and
$\mathcal K_N$ preserve real-valued functions, 
although we use the complex-valued Fourier basis. 

Let $H\in\mathbb N$. We denote by $\mathcal N_{H,M,L}^{\rm ReLU}$ the class of
one-dimensional ReLU networks $\rho:\mathbb R\to\mathbb R$ with size at most
$H$ such that
\[
    \rho(0)=0,
    \quad
    {\rm Lip}_{[-M,M]}(\rho)\le L .
\]
Only the network $\rho$ 
is trainable in the Picard-type FNO below; $\mathfrak m_N$ and 
$e^{-t|\xi|^2}$ are fixed by the PDE structure.

For $\rho\in\mathcal N_{H,M,L}^{\rm ReLU}$ and $u_0\in U_{0,R}$, define the
approximate Picard map by
\begin{equation}
\label{eq:F:approx-picard-map}
    \mathcal T^N_{\rho,u_0}[v](t)
    :=
    S_N(t)u_0
    +
    \mathcal K_N[\rho(v)](t),
\end{equation}
where $\rho(v)(\tau,y):=\rho(v(\tau,y))$. Equivalently,
we write 
\[
    \mathcal T^N_{\rho,u_0}[v](t,x)
    =
    \sum_{\xi\in\Lambda_N}
    \mathfrak m_N(\xi)
    e^{-t|\xi|^2}
    \langle \overline e_\xi,u_0\rangle_{\mathbb T^d}
    e_\xi(x)
+
    \sum_{\xi\in\Lambda_N}
    \mathfrak m_N(\xi)
    \langle \Psi_{\xi,t},\rho(v)\rangle_{Q_T}
    e_\xi(x).
\]

\begin{definition}[Picard-type Fourier neural operator]
\label{def:F:picard-type-fno}
Let $N,\ell,H\in\mathbb N$ and
$\rho\in\mathcal N_{H,M,L}^{\rm ReLU}$. For $u_0\in U_{0,R}$, define
\[
u^{(0)}_{N,\rho}[u_0]:=0,\quad 
    u^{(j+1)}_{N,\rho}[u_0]
    :=
    \mathcal T^N_{\rho,u_0}
    \left[
    u^{(j)}_{N,\rho}[u_0]
    \right],\quad j=0,\cdots,\ell-1.
\]
That is,
\[
    u^{(j+1)}_{N,\rho}[u_0](t)
    =
    S_N(t)u_0
    +
    \mathcal K_N
    \left[
    \rho\left(u^{(j)}_{N,\rho}[u_0]\right)
    \right](t).
\]
The $\ell$-step Picard-type Fourier neural operator 
$\Gamma^{\rm FNO}_{N,\ell,\rho} : U_0\to U_M$ 
is defined by 
\begin{equation}\label{PFNO}
      \Gamma^{\rm FNO}_{N,\ell,\rho}(u_0)
    :=
    u^{(\ell)}_{N,\rho}[u_0].  
\end{equation}
The corresponding implementation class is
\[
    \mathfrak N^{\rm FNO}_{N,\ell,H}(M,L)
    :=
    \left\{
    \Gamma^{\rm FNO}_{N,\ell,\rho} : 
    \rho\in\mathcal N_{H,M,L}^{\rm ReLU}
    \right\}
\]
as a class of maps from $U_0$ to $U_M$.
\end{definition}

\begin{remark}\label{rem_clip}
Assume that the parameters $R,M,L,T,\delta$ satisfy
\eqref{eq:F:small-time-conditions}. Then the following elementary stability
property holds, which shows that no additional clipping is required in
Definition~\ref{def:F:picard-type-fno}. In fact, since both
$e^{t\Delta_{\mathbb T^d}}$ and $P_N$ are $L^\infty$-contractive, we have
\begin{equation}\label{eq:K_N}
        \|S_N(t)f\|_{L^\infty(\mathbb T^d)}
    \le
    \|f\|_{L^\infty(\mathbb T^d)},
    \quad
    \|\mathcal K_N[g]\|_{L^\infty(Q_T)}
    \le
    T\|g\|_{L^\infty(Q_T)} .
\end{equation}
Hence, for any $u_0\in U_{0,R}$ and $v\in U_M$, 
\begin{equation}
\label{eq:F:approx-picard-self-map}
    \|\mathcal T^N_{\rho,u_0}[v]\|_{L^\infty(Q_T)}
    \le
    R+TLM
    \le M .
\end{equation}
Moreover, for $v,w\in U_M$,
\begin{equation}\label{eq:F:approx-picard-con}
 \begin{aligned}
    \|\mathcal T^N_{\rho,u_0}[v]
    -
    \mathcal T^N_{\rho,u_0}[w]\|_{L^\infty(Q_T)}
    &\le
    T\,{\rm Lip}_{[-M,M]}(\rho)
    \|v-w\|_{L^\infty(Q_T)}
\\
    &\le
    \delta
    \|v-w\|_{L^\infty(Q_T)} .
\end{aligned}   
\end{equation}
Therefore $\mathcal T^N_{\rho,u_0}:U_M\to U_M$ is a $\delta$-contraction,
uniformly in $N$, $\rho\in\mathcal N_{H,M,L}^{\rm ReLU}$, and
$u_0\in U_{0,R}$. In particular,
$\Gamma^{\rm FNO}_{N,\ell,\rho}(u_0)\in U_M$ for any
$u_0\in U_{0,R}$.   
\end{remark}

\begin{remark}[Trainable parameters and computational interpretation]
\label{rem:F:trainable-and-computational-interpretation}
We clarify the implementation meaning of Definition~\ref{def:F:picard-type-fno}.

\begin{enumerate}
\item[\rm (a)]
The trainable part of the Picard-type FNO is only the scalar ReLU network
$\rho\in\mathcal N_{H,M,L}^{\rm ReLU}$. The parameters $N,\ell,H$ are
architecture or approximation parameters, not trainable weights. The Fourier
multiplier mask $\mathfrak m_N(\xi)$, the Fourier basis $e_\xi$, and the heat
factors $e^{-t|\xi|^2}$ are fixed by the PDE structure.

\item[\rm (b)]
The dual-pair terms are not additional trainable parameters. For example,
$\langle \overline e_\xi,u_0\rangle_{\mathbb T^d}$ is the Fourier coefficient
of the input, and $\langle \Psi_{\xi,t},g\rangle_{Q_T}$ is the coefficient of
the fixed Fourier-heat kernel applied to $g$. These quantities are computed in
the forward pass. They depend on the current iterate through
$g=\rho(u^{(j)}_{N,\rho}[u_0])$, and therefore gradients flow through them to
the parameters of $\rho$, but the kernels themselves are not learned.

\item[\rm (c)]
On a periodic spatial grid, the spatial dual pairs can be computed by FFT. The
Duhamel term is then evaluated mode by mode as an exponential convolution in
time,
\[
    \int_0^t
    e^{-(t-\tau)|\xi|^2}
    \widehat g(\tau,\xi)\,d\tau ,
\]
which can be approximated by quadrature or by a time-stepping recurrence on an
equally spaced time grid. Thus the dual-pair notation is an analytic way of
writing a computable Fourier-heat convolution.

\item[\rm (d)]
The numerical cost is not dimension-free. If $m^d$ is the number of spatial grid
points, $n_t$ is the number of time points, and $\ell$ is the Picard depth, then
a standard grid implementation has a leading spatial FFT cost of order
\[
    O\!\left(\ell\, n_t\, m^d\log(m^d)\right),
\]
up to the additional cost of the time convolution. Equivalently, if $N$ denotes the cutoff level in each coordinate direction,
then the tensor-product Fourier index set satisfies
$|\Lambda_N|=(2N+1)^d\asymp N^d$. Thus the usual numerical cost of
high-dimensional spectral discretization remains.

\item[\rm (e)]
The role of the theory is different from eliminating all numerical
dimension-dependence. The Fourier rank $N$ enters the error bound only through
the implementation error, while the Rademacher complexity is controlled by the
abstract Picard class and, in the torus example, by the scalar nonlinearity
class $\mathcal F_{M,L}$. Hence increasing the Fourier rank improves the
implementation error without enlarging the statistical complexity term.

\item[\rm (f)]
The constraints $\rho(0)=0$ and ${\rm Lip}_{[-M,M]}(\rho)\le L$ are part of the
theoretical architecture class. They ensure that the approximate Picard map
remains a self-map on $U_M$ and is a $\delta$-contraction under
\eqref{eq:F:small-time-conditions}. In practical training, one should enforce
or approximate these constraints, for instance by using a slope-constrained
piecewise affine parameterization, projection, or normalization. Without these
constraints, the no-clipping stability argument need not apply.
\end{enumerate}
\end{remark}

\subsection{Proof of Theorem~\ref{thm:torus-implementation-error}}
\label{app:F:proof-implementation-error}

In this appendix, we prove the implementation error estimate for the Picard-type FNO. The
argument follows the Picard-iteration strategy of \citet{Furuya2025}, but the
rank-$N$ part is handled differently. Instead of postulating an explicit
rank-$N$ approximation rate for the Green kernel, we use the compactness of
$U_0$ in $C(\mathbb T^d)$ and the compactness of $\mathcal F_{M,L}$ in
$C([-M,M])$ to obtain uniform convergence of the $L^\infty$-stable Fej\'er
cutoffs on the compact families generated by finitely many Picard steps.

\begin{theorem}[Theorem~\ref{thm:torus-implementation-error}, Implementation error for the Picard-type FNO]
\label{thm:F:implementation-error-restated}
Let $s_0>d/2$ and let
$U_0=\{u_0\in H^{s_0}(\mathbb T^d):\|u_0\|_{H^{s_0}}\le R_0\}\subset U_{0,R}$.
Assume \eqref{eq:F:small-time-conditions}. Then, for every
$\eta\in(0,1)$ and every $F\in\mathcal F_{M,L}$, there exists a
one-dimensional ReLU network $\rho_F$ such that
\begin{equation}
\label{eq:F:rho-F-approx}
    \|F-\rho_F\|_{L^\infty([-M,M])}\le\eta,
    \quad
    \rho_F(0)=0,
    \quad
    {\rm Lip}_{[-M,M]}(\rho_F)\le L,
\end{equation}
and
\begin{equation}
\label{eq:F:rho-F-size}
    H(\rho_F)\le C(1+ML\eta^{-1}),
\end{equation}
where $C>0$ is a universal constant.
Moreover, for each fixed $\ell\in\mathbb N$, there exists a sequence
$a_N(\ell)\to0$ as $N\to\infty$ such that
\begin{equation}
\label{eq:F:implementation-sup-error}
    \sup_{F\in\mathcal F_{M,L}}
    \sup_{u_0\in U_0}
    \left\|
    \Gamma^{\rm FNO}_{N,\ell,\rho_F}(u_0)
    -
    \mathcal T_{F,u_0}^{[\ell]}[0]
    \right\|_{L^\infty(Q_T)}
    \le
    \frac{1-\delta^\ell}{1-\delta}\,T\eta
    +
    a_N(\ell).
\end{equation}
Here, the Fourier truncation error $a_N(\ell)$ can be chosen
independently of $\eta$.
Consequently, if
\[
    \operatorname{embed}_{N,\eta}
    \bigl(b_{F,\ell}\bigr)
    :=
    \Gamma^{\rm FNO}_{N,\ell,\rho_F},
    \quad
    b_{F,\ell}(u_0,z)
    :=
    \left(
    \mathcal T_{F,u_0}^{[\ell]}[0]
    \right)(z),
\]
then the implementation error 
\[
    \varepsilon_{\rm imp}^{\rm FNO}(N,\ell,\eta)
    =
    \sup_{F\in\mathcal F_{M,L}}
    \left|
    L\!\left[\Gamma^{\rm FNO}_{N,\ell,\rho_F}\right]
    -
    L[b_{F,\ell}]
    \right|
\]
satisfies
\begin{equation}
\label{eq:F:fno-implementation-error}
    \varepsilon_{\rm imp}^{\rm FNO}(N,\ell,\eta)
    \le
    4M
    \left\{
    \frac{1-\delta^\ell}{1-\delta}\,T\eta
    +
    a_N(\ell)
    \right\}.
\end{equation}
\end{theorem}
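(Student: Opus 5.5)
The argument has three steps: build $\rho_F$, propagate its error through $\ell$ Picard steps, and then control the Fourier truncation uniformly.

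\textbf{Step 1: constructing $\rho_F$.} Take a uniform grid $-M=x_0<x_1<\cdots<x_K=M$ that contains $0$ as a node, with mesh $h\le \eta/(2L)$, so $K\lesssim 1+ML\eta^{-1}$. Let $\rho_F$ be the piecewise affine interpolant of $F$ at these nodes, extended constantly outside $[-M,M]$. On each cell the slope is a difference quotient of $F$, so ${\rm Lip}_{[-M,M]}(\rho_F)\le L$. Since $0$ is a node, $\rho_F(0)=F(0)=0$. Both $F$ and $\rho_F$ are $L$-Lipschitz and agree at the nodes, so $|F-\rho_F|\le 2L\cdot h/2\le\eta$. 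A continuous piecewise affine function with $K$ breakpoints is an exact one-hidden-layer ReLU network of size $O(K)$ (Yarotsky), which gives \eqref{eq:F:rho-F-size}.

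\textbf{Step 2: splitting the error.} Write the error as $\Gamma^{\rm FNO}_{N,\ell,\rho_F}(u_0)-\mathcal T^{[\ell]}_{F,u_0}[0]=(I)+(II)$, where the intermediate object is the exact-semigroup Picard iterate $w^{(\ell)}:=\mathcal T^{[\ell]}_{\rho_F,u_0}[0]$, i.e. \eqref{picard-torus} with $F$ replaced by $\rho_F$.
\begin{itemize}
\item[\emph{(II)}] For $w^{(\ell)}-u^{(\ell)}_{F,u_0}$, set $e_j:=\|w^{(j)}-u^{(j)}_{F,u_0}\|_{L^\infty(Q_T)}$. Both iterates stay in $U_M$, so by $L^\infty$-contractivity of the heat semigroup,
\[
e_{j+1}\le T\|\rho_F(w^{(j)})-F(w^{(j)})\|_{L^\infty(Q_T)}+TL\,e_j\le T\eta+\delta e_j .
\]
Since $e_0=0$, this gives $e_\ell\le T\eta\,(1-\delta^\ell)/(1-\delta)$.
\item[\emph{(I)}] For $u^{(\ell)}_{N,\rho_F}-w^{(\ell)}$, set $f_j$ as the analogous sup-norm difference. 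By Remark~\ref{rem_clip},
\[
f_{j+1}\le \|(S_N-S)u_0\|_{L^\infty(Q_T)}+\|(\mathcal K_N-\mathcal K)[\rho_F(w^{(j)})]\|_{L^\infty(Q_T)}+\delta f_j .
\]
Hence $f_\ell\le\sum_{j<\ell}\delta^{\ell-1-j}\beta_{N,j}$, where $\beta_{N,j}$ is the supremum over $F$ and $u_0$ of the two truncation terms in the recursion.
\end{itemize}

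\textbf{Step 3: uniform truncation (the main obstacle).} One must show $\beta_{N,j}\to0$ uniformly over $F\in\mathcal F_{M,L}$ and $u_0\in U_0$. The estimate must also not depend on $\eta$, since $\rho_F$ changes with $\eta$. To get this, I would bound by a supremum over all $\rho$ in the compact set $\mathcal F_{M,L}$: every $\rho_F$ is $L$-Lipschitz with $\rho_F(0)=0$, so it lies in $\mathcal F_{M,L}$ after restriction to $[-M,M]$. I then need the family
\[
\mathcal W_j:=\{\rho(\mathcal T^{[j]}_{\rho,u_0}[0]):\rho\in\mathcal F_{M,L},\ u_0\in U_0\}
\]
to be relatively compact in $C(\overline{Q_T})$.

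First, the map $(\rho,u_0)\mapsto \mathcal T^{[j]}_{\rho,u_0}[0]$ is continuous from $C([-M,M])\times C(\mathbb T^d)$ into $C([0,T];C(\mathbb T^d))$. This follows from the same Lipschitz recursion as in Step 2, now perturbing both $\rho$ and $u_0$. The domain $\mathcal F_{M,L}\times U_0$ is compact by the compactness facts recorded in Appendix~\ref{app:F:torus-setting}, so $\mathcal W_j$ is compact.

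Next, $P_N$ is a uniformly bounded family converging strongly to the identity on $C(\mathbb T^d)$. By a standard equicontinuity ($\varepsilon/3$) argument it therefore converges uniformly on compact sets. The pieces are handled as follows.
\begin{itemize}
\item For $(S_N-S)u_0=(P_N-I)e^{t\Delta}u_0$: the set $\{e^{t\Delta}u_0: t\in[0,T],\ u_0\in U_0\}$ is compact, being the continuous image of $[0,T]\times U_0$.
\item For $\mathcal K_N-\mathcal K$: apply $P_N-I$ inside the integral to the compact set $\{e^{s\Delta}g(\tau): s,\tau\in[0,T],\ g\in \mathcal W_j\}$. The pointwise convergence of $P_N$ then becomes uniform.
\end{itemize}
Setting $a_N(\ell):=\sum_{j<\ell}\beta_{N,j}\to0$ gives \eqref{eq:F:implementation-sup-error}.

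\textbf{Risk bound.} Finally, $|a^2-b^2|=|a-b||a+b|$ with all values in $[-M,M]$, and $\mathcal G_\ast(u_0)(z)\in[-M,M]$. This yields
\[
\bigl|L[\Gamma]-L[b]\bigr|\le \mathbb E\bigl[|\Gamma-b|\cdot 4M\bigr]\le 4M\sup\|\Gamma-b\|_{L^\infty(Q_T)},
\]
which gives \eqref{eq:F:fno-implementation-error}.
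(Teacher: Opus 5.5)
Your proposal is correct and follows essentially the same route as the paper. The three ingredients match: piecewise-affine interpolation of $F$ on a grid containing $0$ (realized as a ReLU network via Yarotsky), an intermediate Picard iterate with the exact heat semigroup and $\rho_F$ producing the $T\eta(1-\delta^\ell)/(1-\delta)$ term, and uniform convergence of the Fej\'er cutoff on compact families generated by $U_0\Subset C(\mathbb T^d)$ and $\mathcal F_{M,L}\Subset C([-M,M])$. You also take the supremum over all of $\mathcal F_{M,L}$, so $a_N(\ell)$ is independent of $\eta$, exactly as the paper does.

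The differences are cosmetic. Your compact set $\mathcal W_j$ ties the $\rho$ in the iterate to the $\rho$ in the nonlinearity, whereas the paper uses the product family $\mathcal D_j$. You also drop the harmless factors $\delta^{\ell-1-j}$. One small correction: the continuity of $(\rho,u_0)\mapsto\mathcal T^{[j]}_{\rho,u_0}[0]$ should be asserted on $\mathcal F_{M,L}\times U_0$, where the iterates stay in $U_M$, rather than on all of $C([-M,M])\times C(\mathbb T^d)$.
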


\begin{remark}
    From the above, the implementation error $\varepsilon_{\rm imp}^{\rm FNO}(N,\ell,\eta)$ 
    can be made arbitrarily small by first choosing
$\eta>0$ small and then choosing $\ell, N$ large.
\end{remark}

\begin{proof}
The proof of \eqref{eq:F:implementation-sup-error} is divided into three steps.

\paragraph{Step 1: Approximation of the nonlinearity.}
Fix $F\in\mathcal F_{M,L}$. If $L=0$, then, since $F(0)=0$,
we have $F\equiv0$ on $[-M,M]$, and we simply take
$\rho_F\equiv0$.

Suppose $L>0$. Choose a partition
\[
    -M=x_0<x_1<\cdots<x_K=M
\]
which contains $0$ and satisfies
\[
    h:=\max_{0\le k\le K-1}|x_{k+1}-x_k|\le \frac{\eta}{L},
    \quad
    K\le C(1+ML\eta^{-1})
\]
with a universal constant $C$. Let $\rho_F$ be the continuous
piecewise affine interpolation of $F$ on this partition.
The $\rho_F$ satisfies \eqref{eq:F:rho-F-approx}. In fact, we have $\rho_F(0)=0$ 
since the
partition contains $0$ and $F(0)=0$, and ${\rm Lip}_{[-M,M]}(\rho_F)\le L$ since 
every secant slope of $F$ on the partition has absolute
value at most $L$.
Moreover, for $x\in[x_k,x_{k+1}]$, writing
$x=(1-\theta)x_k+\theta x_{k+1}$ with $0\le\theta\le1$, we have
\[
\begin{aligned}
|F(x)-\rho_F(x)|
\le (1-\theta)|F(x)-F(x_k)|
     +\theta |F(x)-F(x_{k+1})| \le Lh.
\end{aligned}
\]
Therefore,
\[
    \|F-\rho_F\|_{L^\infty([-M,M])}\le Lh\le\eta .
\]
Thus, \eqref{eq:F:rho-F-approx} is shown. 

By the ReLU network representation of continuous 
piecewise linear functions, 
it is seen that 
$\rho_F$ is represented by a ReLU network with 
\eqref{eq:F:rho-F-size} (see the proof of
Proposition~1 in \citet{Yarotsky2017}).

Since $\rho_F(0)=0$ and
${\rm Lip}_{[-M,M]}(\rho_F)\le L$, the estimates used for the
Picard map remain valid with $F$ replaced by $\rho_F$. Consequently,
the approximate Picard map $\mathcal T^N_{\rho_F,u_0}$ defined in
\eqref{eq:F:approx-picard-map} satisfies
\eqref{eq:F:approx-picard-self-map} and
\eqref{eq:F:approx-picard-con} for all $v,w\in U_M$.

To prove \eqref{eq:F:implementation-sup-error}, we decompose 
\begin{equation}\label{eq:F:decomposition}
\begin{split}
      &\left\|
    \Gamma^{\rm FNO}_{N,\ell,\rho_F}(u_0)
    -
    \mathcal T_{F,u_0}^{[\ell]}[0]
    \right\|_{L^\infty(Q_T)}  
    = \left\|
    u^{(\ell)}_{N,\rho}[u_0]
    -
    u_{F,u_0}^{(\ell)}
    \right\|_{L^\infty(Q_T)}  \\
    & \le 
    \left\|
    u^{(\ell)}_{N,\rho}[u_0]
    -
    \widetilde u_{\rho_F,u_0}^{(\ell)}
    \right\|_{L^\infty(Q_T)}
    +
    \left\|
    \widetilde u_{\rho_F,u_0}^{(\ell)}
    -
    u_{F,u_0}^{(\ell)}
    \right\|_{L^\infty(Q_T)},
\end{split}
\end{equation}
where $u_{F,u_0}^{(\ell)}$ and $u^{(\ell)}_{N,\rho}[u_0]$ are defined by \eqref{notation_picardsol} and \eqref{PFNO}, respectively. 
Here, $\widetilde u_{\rho_F,u_0}^{(\ell)}$ is defined by 
\[
    \widetilde u_{\rho_F,u_0}^{(0)}:=0,
    \quad
    \widetilde u_{\rho_F,u_0}^{(j+1)}
    :=
    \mathcal T_{\rho_F,u_0}
    \bigl[
    \widetilde u_{\rho_F,u_0}^{(j)}
    \bigr],
\]
where
\[
    \mathcal T_{\rho_F,u_0}[v](t)
    :=
    e^{t\Delta_{\mathbb T^d}}u_0
    +
    \int_0^t e^{(t-\tau)\Delta_{\mathbb T^d}}
    \rho_F(v(\tau))\,d\tau .
\]

\paragraph{Step 2: Error caused by replacing $F$ with $\rho_F$.}
In this step, we prove 
\begin{equation}
\label{eq:F:rho-F-picard-error}
    \sup_{F\in\mathcal F_{M,L}}
    \sup_{u_0\in U_0}    \left\|
    \widetilde u_{\rho_F,u_0}^{(\ell)}
    -
    u_{F,u_0}^{(\ell)}
    \right\|_{L^\infty(Q_T)}
    \le
    \frac{1-\delta^\ell}{1-\delta}\,T\eta .
\end{equation}
Since $e^{t\Delta_{\mathbb T^d}}$ is
$L^\infty$-contractive and \eqref{eq:F:rho-F-approx} holds, we estimate
\[
\begin{aligned}
    &
    \left\|
    \widetilde u_{\rho_F,u_0}^{(j+1)}
    -
    u_{F,u_0}^{(j+1)}
    \right\|_{L^\infty(Q_T)}
\\
    &\quad\le
    T
    \left\|
    \rho_F
    \left(
    \widetilde u_{\rho_F,u_0}^{(j)}
    \right)
    -
    F
    \left(
    u_{F,u_0}^{(j)}
    \right)
    \right\|_{L^\infty(Q_T)}
\\
    &\quad\le
    T
    \left\|
    \rho_F
    \left(
    \widetilde u_{\rho_F,u_0}^{(j)}
    \right)
    -
    \rho_F
    \left(
    u_{F,u_0}^{(j)}
    \right)
    \right\|_{L^\infty(Q_T)}
    +
    T
    \left\|
    \rho_F
    \left(
    u_{F,u_0}^{(j)}
    \right)
    -
    F
    \left(
    u_{F,u_0}^{(j)}
    \right)
    \right\|_{L^\infty(Q_T)}
\\
    &\quad\le
    \delta
    \left\|
    \widetilde u_{\rho_F,u_0}^{(j)}
    -
    u_{F,u_0}^{(j)}
    \right\|_{L^\infty(Q_T)}
    +
    T\eta .
\end{aligned}
\]
Since the zeroth iterates coincide, the induction argument yields
\eqref{eq:F:rho-F-picard-error}. 

\paragraph{Step 3: Uniform Fourier truncation error by compactness.}
In this step, 
we prove that for each fixed $\ell\in\mathbb N$, there exists a sequence
$a_N(\ell)\to0$ as $N\to\infty$ such that
\begin{equation}\label{aaa}
\sup_{F\in\mathcal F_{M,L}}
    \sup_{u_0\in U_0}
    \left\|
    u_{N,\rho_F}^{(\ell)}[u_0]
    -
    \widetilde u_{\rho_F,u_0}^{(\ell)}
    \right\|_{L^\infty(Q_T)}
    \le
    a_N(\ell).
\end{equation}
For $G\in\mathcal F_{M,L}$, we define 
$\{u_{G,u_0}^{(j)}\}_j$ by \eqref{notation_picardsol}, and the Fej\'er-truncated Picard iterates by
\begin{equation}
\label{eq:F:truncated-picard-G}
    u_{N,G,u_0}^{(0)}:=0,
    \quad
    u_{N,G,u_0}^{(j+1)}
    :=
    S_N(\cdot)u_0
    +
    \mathcal K_N
    \left[
    G
    \left(
    u_{N,G,u_0}^{(j)}
    \right)
    \right]
\end{equation}
(note that $u_{N,G}^{(\ell)}[u_0] = u_{N,G,u_0}^{(\ell)}$). 
Since $G(0)=0$ and ${\rm Lip}_{[-M,M]}(G)\le L$, the same estimate as
\eqref{eq:F:approx-picard-self-map} and 
\eqref{eq:F:approx-picard-con} shows that the truncated
Picard map is a $\delta$-contraction from $U_M$ into itself. Thus all iterates
in \eqref{eq:F:truncated-picard-G} are $U_M$-valued.

First, we prove that 
the sets 
\[
    \mathcal C_j
    :=
    \left\{
    u_{G,u_0}^{(j)}:
    G\in\mathcal F_{M,L},\ u_0\in U_0
    \right\}
    \subset C([0,T];C(\mathbb T^d))
\]
are compact for $j=0,1,\ldots, \ell$ by induction on $j$. For $j=0$, we have
$\mathcal C_0=\{0\}$, and hence $\mathcal C_0$ is compact. Suppose that
$\mathcal C_j$ is compact for some fixed $j$. Since
$U_0\Subset C(\mathbb T^d)$ and
$\mathcal F_{M,L}\Subset C([-M,M])$, the product
$U_0\times\mathcal F_{M,L}\times\mathcal C_j$ is compact.
The map
\[
    (u_0,G,u)
    \mapsto
    e^{t\Delta_{\mathbb T^d}}u_0
    +
    \int_0^t e^{(t-\tau)\Delta_{\mathbb T^d}}G(u(\tau))\,d\tau
\]
is continuous from this product into $C([0,T];C(\mathbb T^d))$. Indeed,
$u_0\mapsto e^{t\Delta_{\mathbb T^d}}u_0$ is continuous by the
$L^\infty$-contractivity and strong continuity of the heat semigroup, and
\begin{equation}
\label{eq:F:composition-continuity}
    \|G(u)-\widetilde G(\widetilde u)\|_{L^\infty(Q_T)}
    \le
    \|G-\widetilde G\|_{L^\infty([-M,M])}
    +
    L\|u-\widetilde u\|_{L^\infty(Q_T)} .
\end{equation}
Therefore, $\mathcal C_{j+1}$ is contained in the continuous image of a compact
set, and hence is compact. This proves the claim by induction.

Next, we define
\[
    \mathcal D_j
    :=
    \left\{
    G(u):
    G\in\mathcal F_{M,L},\ u\in\mathcal C_j
    \right\}
    \subset C([0,T];C(\mathbb T^d)).
\]
Then 
$\mathcal D_j$ is compact for every fixed $j$ by the same continuity estimate \eqref{eq:F:composition-continuity}.

Since $P_N$ is the Fej\'er projection 
\eqref{Fejer_proj}, the convergence 
$P_N f\to f$ in $C(\mathbb T^d)$ holds for each
$f\in C(\mathbb T^d)$. Hence, by the standard compactness argument, we have
\begin{equation}
\label{eq:F:uniform-fejer-on-compact}
    \sup_{f\in\mathcal K}
    \|P_Nf-f\|_{L^\infty(\mathbb T^d)}
    \to0
    \quad
    \text{for every compact }
    \mathcal K\subset C(\mathbb T^d).
\end{equation}
Applying \eqref{eq:F:uniform-fejer-on-compact} with
$\mathcal K=\mathcal H_0$ given by 
\[
    \mathcal H_0
    :=
    \left\{
    e^{t\Delta_{\mathbb T^d}}u_0:
    u_0\in U_0,\ t\in[0,T]
    \right\},
\]
where $\mathcal H_0$ is compact because $U_0$ is compact in
$C(\mathbb T^d)$ and the map
$(u_0,t)\mapsto e^{t\Delta_{\mathbb T^d}}u_0$ is continuous from
$U_0\times[0,T]$ into $C(\mathbb T^d)$, 
we obtain 
\[
    \alpha_N
    :=
    \sup_{u_0\in U_0}
    \left\|
    S_N(\cdot)u_0
    -
    e^{(\cdot)\Delta_{\mathbb T^d}}u_0
    \right\|_{L^\infty(Q_T)}
    \to0 
\]
as $N\to \infty$.
For each $j$, apply \eqref{eq:F:uniform-fejer-on-compact} to the compact set
\[
    \mathcal H_j
    :=
    \left\{
    e^{(t-\tau)\Delta_{\mathbb T^d}}g(\tau):
    g\in\mathcal D_j,\ 0\le\tau\le t\le T
    \right\}
    \subset C(\mathbb T^d).
\]
This yields
\[
    \beta_{N,j}
    :=
    \sup_{g\in\mathcal D_j}
    \left\|
    \mathcal K_N[g]-\mathcal K[g]
    \right\|_{L^\infty(Q_T)}
\le
    T
    \sup_{h\in\mathcal H_j}
    \|P_Nh-h\|_{L^\infty(\mathbb T^d)}
    \to0 
\]
as $N\to \infty$ for each $j$, 
where
\[
    \mathcal K[g](t)
    :=
    \int_0^t e^{(t-\tau)\Delta_{\mathbb T^d}}g(\tau)\,d\tau .
\]

Let
\[
    d_{N,j}
    :=
    \sup_{G\in\mathcal F_{M,L}}
    \sup_{u_0\in U_0}
    \left\|
    u_{N,G,u_0}^{(j)}
    -
    u_{G,u_0}^{(j)}
    \right\|_{L^\infty(Q_T)} .
\]
Then $d_{N,0}=0$. Using \eqref{eq:F:small-time-conditions}, \eqref{notation_picardsol}, \eqref{eq:K_N}, 
 \eqref{eq:F:truncated-picard-G}, and \eqref{eq:F:composition-continuity},
we obtain
\begin{equation}
\label{eq:F:fourier-error-recursion}
    d_{N,j+1}
    \le
    \alpha_N
    +
    \beta_{N,j}
    +
    T L\,d_{N,j}
\le
    \alpha_N
    +
    \beta_{N,j}
    +
    \delta d_{N,j}.
\end{equation}
Indeed, the term $\beta_{N,j}$ controls
$(\mathcal K_N-\mathcal K)[G(u_{G,u_0}^{(j)})]$, and the last term controls
$\mathcal K_N[G(u_{N,G,u_0}^{(j)})-G(u_{G,u_0}^{(j)})]$.
Iterating \eqref{eq:F:fourier-error-recursion} gives
\begin{equation}
\label{eq:F:a-N-ell-definition}
    d_{N,\ell}
    \le
    \sum_{j=0}^{\ell-1}
    \delta^{\ell-1-j}
    \left(
    \alpha_N+\beta_{N,j}
    \right)
    =:
    a_N(\ell).
\end{equation}
Since $\ell$ is fixed, $\alpha_N\to0$ and $\beta_{N,j}\to0$ for each
$j=0,\ldots,\ell-1$, we have
\[
    a_N(\ell)\to0
    \qquad
    \text{as } N\to\infty .
\]
Taking $G=\rho_F$, and recalling that
$\rho_F\in\mathcal F_{M,L}$ on $[-M,M]$, we obtain \eqref{aaa}, where $u_{N,\rho_F,u_0}^{(\ell)}
=
\Gamma^{\rm FNO}_{N,\ell,\rho_F}(u_0)$ by
Definition~\ref{def:F:picard-type-fno}.

Combining \eqref{eq:F:decomposition}, \eqref{eq:F:rho-F-picard-error} and
\eqref{aaa}, we conclude
\[
    \sup_{F\in\mathcal F_{M,L}}
    \sup_{u_0\in U_0}
    \left\|
    \Gamma^{\rm FNO}_{N,\ell,\rho_F}(u_0)
    -
    \mathcal T_{F,u_0}^{[\ell]}[0]
    \right\|_{L^\infty(Q_T)}
    \le
    \frac{1-\delta^\ell}{1-\delta}\,T\eta
    +
    a_N(\ell).
\]
This proves \eqref{eq:F:implementation-sup-error}.

Finally, let $b_{F,\ell}(u_0,z)=
(\mathcal T_{F,u_0}^{[\ell]}[0])(z)$ and 
$\Gamma_F:=\Gamma^{\rm FNO}_{N,\ell,\rho_F}$. Since both
$b_{F,\ell}$ and $\Gamma_F$ take values in $[-M,M]$, and since
$\mathcal G_\ast(u_0)(z)\in[-M,M]$, we have
\[
\begin{aligned}
    &
    \left|
    \left|
    \Gamma_F(u_0)(z)-\mathcal G_\ast(u_0)(z)
    \right|^2
    -
    \left|
    b_{F,\ell}(u_0,z)-\mathcal G_\ast(u_0)(z)
    \right|^2
    \right|
\\
    &\qquad\le
    4M
    \left|
    \Gamma_F(u_0)(z)-b_{F,\ell}(u_0,z)
    \right|.
\end{aligned}
\]
Taking expectation and then the supremum over $F\in\mathcal F_{M,L}$ gives
\eqref{eq:F:fno-implementation-error}. The proof is complete.
\end{proof}

\begin{remark}[Where compactness is used]
\label{rem:F:compactness-vs-rank-rate}
The above proof differs from the Green-kernel approximation argument of
\citet{Furuya2025}. There, the rank-$N$ error is controlled by explicit
quantities such as the approximation errors of a chosen kernel expansion. In the
present setting, we instead use the compactness of $U_0$ and
$\mathcal F_{M,L}$ and the
Fej\'er cutoffs converge uniformly on these compact families. Thus, the Fourier
rank $N$ is needed only to reduce the implementation error $a_N(\ell)$; it does
not enter the Rademacher complexity of the abstract Picard class. This is the
sense in which the rank-dependent part is separated from the statistical
complexity and the curse of parametric complexity is avoided.
\end{remark}

\subsection{Proof of Lemma~\ref{lem:rademacher-torus}}
\label{app:F:proof-rademacher-torus}

We next estimate the Rademacher complexity of the abstract Picard class in the
torus setting. The key point is that the entropy is measured at the level of the
scalar nonlinearity class $\mathcal F_{M,L}$, and not at the level of the
Fourier-rank implementation.

\begin{lemma}[Lemma~\ref{lem:rademacher-torus}, Rademacher complexity of the Picard class]
\label{lem:F:rademacher-torus-restated}
Let $\mathcal S_{n,q}^{U_0}
=
\{(u_{0,i},\{z_{i,j}\}_{j=1}^q)\}_{i=1}^n$ be a trajectory-level sample as in
Section~\ref{subsec:risks}. Assume \eqref{eq:F:small-time-conditions}. Then, for every $\ell\in\mathbb N$, the
trajectory-normalized empirical Rademacher complexity of the 
Picard class satisfies
\begin{equation}
\label{eq:F:rademacher-torus-explicit}
    \widehat{\mathfrak R}^{\Omega_{\rm eq}}
    =
    \widehat{\mathfrak R}_{\mathcal S_{n,q}^{U_0}}
    \bigl(\mathcal B^{\Omega_{\rm eq}}_{\ell}\bigr)
    \le
    C M
    \sqrt{
    \frac{LT(1-\delta^\ell)}{(1-\delta)n}
    },
\end{equation}
where $C>0$ is a universal constant. In particular,
\begin{equation}
\label{eq:F:rademacher-torus-uniform}
    \widehat{\mathfrak R}^{\Omega_{\rm eq}}
    \le
    C M
    \sqrt{
    \frac{LT}{(1-\delta)n}
    },
\end{equation}
so the Rademacher term is uniformly bounded in the Picard depth $\ell$ and is
independent of the Fourier rank $N$ of the Picard-type FNO implementation.
\end{lemma}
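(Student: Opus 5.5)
The plan is to bound the trajectory-normalized Rademacher complexity by Dudley's entropy integral. The covering numbers of $\mathcal B^{\Omega_{\rm eq}}_\ell$ are transferred to those of the scalar class $\mathcal F_{M,L}$ through a Lipschitz estimate for the map $F\mapsto b_{F,\ell}$. Throughout I write $K_\ell:=T(1-\delta^\ell)/(1-\delta)$.

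\emph{Step 1: Lipschitz dependence on the nonlinearity.} For $F,G\in\mathcal F_{M,L}$ and $u_0\in U_0$, set $e_j:=\|u^{(j)}_{F,u_0}-u^{(j)}_{G,u_0}\|_{L^\infty(Q_T)}$. This is the same recursion as in Step~2 of the proof of Theorem~\ref{thm:F:implementation-error-restated}, with $\rho_F$ replaced by $G$. By $L^\infty$-contractivity of the heat semigroup and ${\rm Lip}_{[-M,M]}(G)\le L$,
\[
e_{j+1}\le TL\,e_j+T\|F-G\|_{L^\infty([-M,M])}\le \delta e_j+T\|F-G\|_{L^\infty([-M,M])},\qquad e_0=0 .
\]
Hence $\sup_{u_0,z}|b_{F,\ell}(u_0,z)-b_{G,\ell}(u_0,z)|\le K_\ell\|F-G\|_{L^\infty([-M,M])}$. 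Since $d_{\mathcal S,2}$ is dominated by the sup norm, this gives
\[
N(\mathcal B^{\Omega_{\rm eq}}_\ell,d_{\mathcal S,2},\varepsilon)\le N(\mathcal F_{M,L},\|\cdot\|_\infty,\varepsilon/K_\ell).
\]
The same bound with $G=0$, together with $\|F\|_\infty\le LM$, shows that $\sup|b_{F,\ell}|\le \min(M,K_\ell LM)$. So the class has diameter at most $2\min(M,K_\ell LM)$, and the Dudley integral may be cut off there.

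\emph{Step 2: Entropy of the Lipschitz class.} I use the classical Kolmogorov--Tikhomirov bound $\log N(\mathcal F_{M,L},\|\cdot\|_\infty,\eta)\le C(1+ML/\eta)$. To prove it, take a grid of mesh $\eta/L$ on $[-M,M]$ that contains $0$, and quantize the values at the nodes at scale $\eta$. Starting from $F(0)=0$, each successive node value can move only to one of $O(1)$ neighbouring quantization levels, and this gives $\exp(C(1+ML/\eta))$ covering elements. Combining with Step~1 yields $\log N(\mathcal B^{\Omega_{\rm eq}}_\ell,d_{\mathcal S,2},\varepsilon)\le C(1+K_\ell LM/\varepsilon)$.

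\emph{Step 3: Dudley integral and normalization.} Dudley's bound for the pointwise complexity over the $nq$ points gives a factor $(nq)^{-1/2}$. Multiplying by $\sqrt q$ for the trajectory normalization leaves $n^{-1/2}$, so that
\[
\widehat{\mathfrak R}^{\Omega_{\rm eq}}\le \frac{C}{\sqrt n}\int_0^{2\min(M,K_\ell LM)}\Bigl(1+\sqrt{K_\ell LM/\varepsilon}\Bigr)\,d\varepsilon .
\]
The singular part integrates to at most $C\sqrt{K_\ell LM}\sqrt{M}=CM\sqrt{LK_\ell}$. The constant part contributes at most $2\min(M,K_\ell LM)=2M\min(1,LK_\ell)\le 2M\sqrt{LK_\ell}$, using $\min(1,x)\le\sqrt x$. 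Together these give \eqref{eq:F:rademacher-torus-explicit}, and bounding $1-\delta^\ell\le1$ gives \eqref{eq:F:rademacher-torus-uniform}. No Fourier cutoff enters, because the class is the abstract Picard class.

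The main obstacle is the bookkeeping in Steps~2--3. The additive constant in the entropy bound must not produce a stray $M/\sqrt n$ term that lacks the factor $\sqrt{LK_\ell}$; this is handled by the diameter cutoff in Step~1. One must also check that the $\sqrt q$ normalization exactly cancels the $\sqrt q$ gained from having $nq$ points in the empirical metric. The covering construction in Step~2 is standard, so I would only sketch it.
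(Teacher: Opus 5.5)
Your proposal is correct and is essentially the paper's argument: the trajectory-normalized Dudley bound, the Lipschitz reduction $\sup|b_{F,\ell}-b_{G,\ell}|\le T\frac{1-\delta^\ell}{1-\delta}\|F-G\|_{L^\infty([-M,M])}$, and the $ML/r$ entropy of $\mathcal F_{M,L}$. The paper uses $\log N\le CML/r$ for all $r>0$ (valid since $\mathcal F_{M,L}$ lies in the $LM$-ball about $0\in\mathcal F_{M,L}$), so your diameter cutoff is harmless extra bookkeeping. One slip to fix: $b_{0,\ell}(u_0,\cdot)=e^{\cdot\Delta_{\mathbb T^d}}u_0\neq0$, so your claim $\sup|b_{F,\ell}|\le\min(M,K_\ell LM)$ is false, since only $\sup|b_{F,\ell}-b_{0,\ell}|\le K_\ell LM$ holds; the diameter bound $2\min(M,K_\ell LM)$ that you actually use still follows directly from Step~1 with $\|F-G\|_{L^\infty([-M,M])}\le 2LM$.
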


\begin{proof}
The case $\ell=0$ is trivial, since $\mathcal B_0^{\Omega_{\rm eq}}$ contains
only the zero predictor. We therefore assume $\ell\ge1$.
Let
\[
    d_{\mathcal S,2}(b,b')
    :=
    \left(
    \frac1{nq}
    \sum_{i=1}^n
    \sum_{j=1}^q
    \left|
    b(u_{0,i},z_{i,j})
    -
    b'(u_{0,i},z_{i,j})
    \right|^2
    \right)^{1/2}
\]
be the empirical pointwise metric on predictors. By the
trajectory-normalized Dudley entropy bound used in Section~\ref{sec:5},
\begin{equation}
\label{eq:F:dudley-picard-corrected}
    \widehat{\mathfrak R}_{\mathcal S_{n,q}^{U_0}}
    \bigl(\mathcal B_\ell^{\Omega_{\rm eq}}\bigr)
    \le
    \frac{C}{\sqrt n}
    \int_0^{2M}
    \sqrt{
    \log N
    \left(
    \mathcal B_\ell^{\Omega_{\rm eq}},
    d_{\mathcal S,2},
    \varepsilon
    \right)
    }
    \,d\varepsilon .
\end{equation}
Here, every element of $\mathcal B_\ell^{\Omega_{\rm eq}}$ is
$[-M,M]$-valued, since the Picard iterates stay in $U_M$. The factor
$n^{-1/2}$ comes from the trajectory-normalized complexity defined in
Section~\ref{sec:5}.

We first reduce the covering number of the Picard class to that of the scalar
nonlinearity class. Let $F,G\in\mathcal F_{M,L}$ and fix
$u_0\in U_{0,R}$. Define
\[
    u_F^{(0)}=u_G^{(0)}:=0,
    \quad
    u_F^{(j+1)}
    :=
    \mathcal T_{F,u_0}
    \bigl[
    u_F^{(j)}
    \bigr],
    \quad
    u_G^{(j+1)}
    :=
    \mathcal T_{G,u_0}
    \bigl[
    u_G^{(j)}
    \bigr].
\]
Both sequences are $U_M$-valued. Since the heat semigroup is
$L^\infty$-contractive and $F$ is $L$-Lipschitz on $[-M,M]$, we have
\[
\begin{aligned}
    \|u_F^{(j+1)}-u_G^{(j+1)}\|_{L^\infty(Q_T)}
    &\le
    T
    \left\|
    F(u_F^{(j)})-G(u_G^{(j)})
    \right\|_{L^\infty(Q_T)}
\\
    &\le
    \delta
    \|u_F^{(j)}-u_G^{(j)}\|_{L^\infty(Q_T)}
    +
    T
    \|F-G\|_{L^\infty([-M,M])} .
\end{aligned}
\]
Since the zeroth iterates coincide, induction gives
\[
    \left\|
    \mathcal T_{F,u_0}^{[\ell]}[0]
    -
    \mathcal T_{G,u_0}^{[\ell]}[0]
    \right\|_{L^\infty(Q_T)}
    \le
    T
    \frac{1-\delta^\ell}{1-\delta}
    \|F-G\|_{L^\infty([-M,M])} .
\]
Therefore, for every sample $\mathcal S_{n,q}^{U_0}$,
\begin{equation}
\label{eq:F:covering-reduction-corrected}
    N
    \left(
    \mathcal B_\ell^{\Omega_{\rm eq}},
    d_{\mathcal S,2},
    \varepsilon
    \right)
    \le
    N
    \left(
    \mathcal F_{M,L},
    \|\cdot\|_{L^\infty([-M,M])},
    \frac{(1-\delta)\varepsilon}
    {T(1-\delta^\ell)}
    \right).
\end{equation}

We next use the elementary entropy estimate for the one-dimensional Lipschitz
class:
\begin{equation}
\label{eq:F:lipschitz-class-entropy}
    \log N
    \left(
    \mathcal F_{M,L},
    \|\cdot\|_{L^\infty([-M,M])},
    r
    \right)
    \le
    C\frac{ML}{r},
    \quad r>0 .
\end{equation}
For completeness, we recall the short proof. If $L=0$, then
$\mathcal F_{M,L}=\{0\}$. Assume $L>0$. Choose a grid on $[-M,M]$ containing
$0$ with mesh size $h\asymp r/L$. The number of grid intervals is
$K\le CML/r$. Since $F(0)=0$ and ${\rm Lip}_{[-M,M]}(F)\le L$, the values at
adjacent grid points differ by at most $Lh\asymp r$. Quantizing the grid values
with mesh comparable to $r$ and using the adjacent-increment constraint yields
at most $C^K$ admissible quantized sequences. The corresponding piecewise affine
interpolants form an $r$-net in $L^\infty([-M,M])$. Hence
$\log N\le CK\le CML/r$.

Combining \eqref{eq:F:covering-reduction-corrected} and
\eqref{eq:F:lipschitz-class-entropy}, we obtain
\[
    \log N
    \left(
    \mathcal B_\ell^{\Omega_{\rm eq}},
    d_{\mathcal S,2},
    \varepsilon
    \right)
    \le
    C
    \frac{
    ML\,T(1-\delta^\ell)
    }{
    (1-\delta)\varepsilon
    } .
\]
Substituting this into \eqref{eq:F:dudley-picard-corrected} gives
\[
    \widehat{\mathfrak R}^{\Omega_{\rm eq}}
    \le
    \frac{C}{\sqrt n}
    \int_0^{2M}
    \sqrt{
    \frac{
    ML\,T(1-\delta^\ell)
    }{
    (1-\delta)\varepsilon
    }}
    \,d\varepsilon
\le
    C M
    \sqrt{
    \frac{LT(1-\delta^\ell)}{(1-\delta)n}
    } .
\]
This proves \eqref{eq:F:rademacher-torus-explicit}, and hence 
\eqref{eq:F:rademacher-torus-uniform} follows 
since
$1-\delta^\ell\le1$. The proof of Lemma \ref{lem:F:rademacher-torus-restated} is finished.
\end{proof}


\subsection{Proof of Theorem~\ref{thm:torus-fno-generalization}}
\label{app:F:proof-torus-fno-generalization}

We next combine the implementation estimate for the Picard-type FNO with the
implementation-agnostic generalization error bound of Theorem~\ref{thm:generalization}.

\begin{theorem}[Theorem~\ref{thm:torus-fno-generalization}, Generalization error bound for the Picard-type FNO]
\label{thm:F:torus-fno-generalization-restated}
Assume the setting of Appendix~\ref{app:F:torus-setting} and Theorem \ref{thm:F:implementation-error-restated}. 
Let $\eta\in(0,1)$ and $\ell,N\in\mathbb N$. For each
$F\in\mathcal F_{M,L}$, choose a ReLU network $\rho_F$ as in
Theorem~\ref{thm:F:implementation-error-restated}, and define
\begin{equation}
\label{eq:F:embed-N-eta-thm74-corrected}
    \operatorname{embed}_{N,\eta}
    \left(
    b_{F,\ell}
    \right)
    :=
    \Gamma^{\rm FNO}_{N,\ell,\rho_F} .
\end{equation}
Let
\[
    \widehat b_\ell
    \in
    \operatorname*{argmin}_{b\in\mathcal B_\ell^{\Omega_{\rm eq}}}
    \widehat L_{n,q}[b],
\]
and set
\[
    \widehat\Gamma^{\rm FNO}_{N,\ell,\eta}
    :=
    \operatorname{embed}_{N,\eta}(\widehat b_\ell).
\]
Then there exists a universal constant $C>0$ such that, for every
$0<\rho<1$, with probability at least $1-\rho$ over the trajectory-level
sample $\mathcal S_{n,q}^{U_0}$,
\begin{equation}
\label{eq:F:torus-fno-generalization-bound-corrected}
\begin{aligned}
    L
    \left[
    \widehat\Gamma^{\rm FNO}_{N,\ell,\eta}
    \right]
    \le\;&
    4M
    \left\{
    \frac{1-\delta^\ell}{1-\delta}\,T\eta
    +
    a_N(\ell,\eta)
    \right\}
    +
    \frac{M^2\delta^{2\ell}}{(1-\delta)^2}
\\
    &\quad
    +
    C M^2
    \sqrt{
    \frac{LT(1-\delta^\ell)}{(1-\delta)n}
    }
    +
    C M^2
    \sqrt{
    \frac{\log(1/\rho)}{n}
    } .
\end{aligned}
\end{equation}
Here $a_N(\ell,\eta)\to0$ as $N\to\infty$ is the Fourier implementation error
from Theorem~\ref{thm:torus-implementation-error}. In particular, the
statistical term is uniformly bounded in the Picard depth $\ell$ and is
independent of the Fourier rank $N$.
\end{theorem}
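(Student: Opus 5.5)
The plan is to combine Theorem~\ref{thm:generalization}(i) with the two torus-specific estimates already established: Theorem~\ref{thm:F:implementation-error-restated} for the implementation error and Lemma~\ref{lem:F:rademacher-torus-restated} for the Rademacher term. No new probabilistic argument should be needed. The ERM step is performed over the abstract class $\mathcal B_\ell^{\Omega_{\rm eq}}=\{b_{F,\ell}:F\in\mathcal F_{M,L}\}$, and $\widehat\Gamma^{\rm FNO}_{N,\ell,\eta}$ is exactly $\operatorname{embed}(\widehat b_\ell)$ with $\operatorname{embed}=\operatorname{embed}_{N,\eta}$. This is precisely the structure assumed in Theorem~\ref{thm:generalization}(i).

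First I check the hypotheses of Theorem~\ref{thm:generalization}. Under \eqref{eq:F:small-time-conditions}, Assumption~\ref{ass:contraction} holds with $C_S=1$ by Proposition~\ref{prop:ass_app} (as recorded in Appendix~\ref{app:F:torus-setting}). The clipping hypothesis is automatic: every $b_{F,\ell}$ is $U_M$-valued, every $\Gamma^{\rm FNO}_{N,\ell,\rho_F}$ is $U_M$-valued by Remark~\ref{rem_clip}, and $\mathcal G_\ast(u_0)\in U_M$. One subtlety is well-definedness of $\operatorname{embed}_{N,\eta}$: different $F$ could in principle give the same $b_{F,\ell}$. I would fix one $F$ for each element of the class (a choice function), so the embedding is a genuine map. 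The sup bound \eqref{eq:F:implementation-sup-error} is uniform in $F$, so this choice is harmless.

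Applying Theorem~\ref{thm:generalization}(i), with probability at least $1-\rho$ we get
\[
L[\widehat\Gamma^{\rm FNO}_{N,\ell,\eta}]\le \varepsilon_{\rm imp}+\frac{M^2\delta^{2\ell}}{(1-\delta)^2}+CM\widehat{\mathfrak R}^{\Omega_{\rm eq}}+CM^2\sqrt{\frac{\log(1/\rho)}{n}}.
\]
I then substitute the two torus estimates. The implementation error $\varepsilon_{\rm imp}$ is bounded by \eqref{eq:F:fno-implementation-error}, which gives the term $4M\{\frac{1-\delta^\ell}{1-\delta}T\eta+a_N(\ell)\}$. Since $a_N(\ell)$ is independent of $\eta$, it may be written $a_N(\ell,\eta)$. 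The Rademacher term is bounded by \eqref{eq:F:rademacher-torus-explicit}, which holds for every sample, so it can be plugged in on the same high-probability event. This turns $CM\widehat{\mathfrak R}^{\Omega_{\rm eq}}$ into $CM^2\sqrt{LT(1-\delta^\ell)/((1-\delta)n)}$, absorbing the product of universal constants into $C$.

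The only step needing care is that $\varepsilon_{\rm imp}$ in Theorem~\ref{thm:generalization} is a supremum over the abstract class. It must match the quantity $\sup_F|L[\Gamma^{\rm FNO}_{N,\ell,\rho_F}]-L[b_{F,\ell}]|$ bounded in Theorem~\ref{thm:F:implementation-error-restated}. Because that bound is uniform over all $F\in\mathcal F_{M,L}$, it in particular covers the chosen representatives. The final claims, uniformity in $\ell$ and independence of $N$, then follow from $1-\delta^\ell\le1$ and from the fact that the Rademacher bound involves only $\mathcal F_{M,L}$.
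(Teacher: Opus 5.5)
Your proposal is correct and matches the paper's proof: it applies Theorem~\ref{thm:generalization}(i) with $\operatorname{embed}=\operatorname{embed}_{N,\eta}$, uses the $U_M$-valuedness from Remark~\ref{rem_clip} in place of explicit clipping, and substitutes the uniform-in-$F$ implementation bound of Theorem~\ref{thm:F:implementation-error-restated} and the sample-independent Rademacher bound of Lemma~\ref{lem:F:rademacher-torus-restated}. Your extra point about fixing a representative $F$ so that $\operatorname{embed}_{N,\eta}$ is a genuine map is a harmless refinement that the paper leaves implicit.
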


\begin{proof}
The proof is a direct specialization of Theorem~\ref{thm:generalization} to the torus heat equation class.

First, by Theorem \ref{thm:F:implementation-error-restated}  (Theorem~\ref{thm:torus-implementation-error}), the map
$\operatorname{embed}_{N,\eta}$ defined in
\eqref{eq:F:embed-N-eta-thm74-corrected} implements the abstract Picard class
$\mathcal B_\ell^{\Omega_{\rm eq}}$ by Picard-type FNOs. More precisely, for
every $F\in\mathcal F_{M,L}$,
\[
    \sup_{u_0\in U_0}
    \left\|
    \operatorname{embed}_{N,\eta}(b_{F,\ell})(u_0)
    -
    b_{F,\ell}(u_0,\cdot)
    \right\|_{L^\infty(Q_T)}
    \le
    \frac{1-\delta^\ell}{1-\delta}\,T\eta
    +
    a_N(\ell,\eta).
\]
Since both $b_{F,\ell}$ and
$\operatorname{embed}_{N,\eta}(b_{F,\ell})$ are $[-M,M]$-valued, and since
$\mathcal G_\ast(u_0)(z)\in[-M,M]$, the squared loss is $4M$-Lipschitz in the
prediction variable on this range. Therefore,
\begin{equation}
\label{eq:F:implementation-risk-thm74-corrected}
    \varepsilon_{\rm imp}^{\rm FNO}(N,\eta)
    :=
    \sup_{b\in\mathcal B_\ell^{\Omega_{\rm eq}}}
    \left|
    L[\operatorname{embed}_{N,\eta}(b)]
    -
    L[b]
    \right|
    \le
    4M
    \left\{
    \frac{1-\delta^\ell}{1-\delta}\,T\eta
    +
    a_N(\ell,\eta)
    \right\}.
\end{equation}

Next, apply Theorem~\ref{thm:generalization}{\rm (i)} with
$\operatorname{embed}=\operatorname{embed}_{N,\eta}$. The approximate Picard
maps defining the Picard-type FNO are $U_M$-valued by
\eqref{eq:F:approx-picard-self-map}; hence the boundedness condition required
in Theorem~\ref{thm:generalization} is satisfied without adding an extra
clipping layer. Thus, with probability at least $1-\rho$,
\begin{equation}
\label{eq:F:generalization-before-substitution-corrected}
    L
    \left[
    \widehat\Gamma^{\rm FNO}_{N,\ell,\eta}
    \right]
    \le
    \varepsilon_{\rm imp}^{\rm FNO}(N,\eta)
    +
    \frac{M^2\delta^{2\ell}}{(1-\delta)^2}
    +
    CM\widehat{\mathfrak R}^{\Omega_{\rm eq}}
    +
    CM^2
    \sqrt{
    \frac{\log(1/\rho)}{n}
    } .
\end{equation}
Here the term $M^2\delta^{2\ell}/(1-\delta)^2$ is the oracle error coming from
the Picard truncation estimate in Lemma~\ref{lem:picard_error1}.
By combining \eqref{eq:F:generalization-before-substitution-corrected} with \eqref{eq:F:implementation-risk-thm74-corrected} 
and 
Lemma \ref{lem:F:rademacher-torus-restated}
(Lemma~\ref{lem:rademacher-torus}), we obtain \eqref{eq:F:torus-fno-generalization-bound-corrected}.
This proves the theorem.
\end{proof}


\begin{remark}[Rank, depth, and the curse of parametric complexity]
\label{rem:F:rank-depth-pc}
The bound \eqref{eq:F:torus-fno-generalization-bound-corrected} separates the
three roles of the approximation parameters. The Picard depth $\ell$ appears in
the geometric truncation term $M^2\delta^{2\ell}/(1-\delta)^2$. The scalar
network accuracy $\eta$ appears in the nonlinear implementation term
$T\eta(1-\delta^\ell)/(1-\delta)$. The Fourier rank $N$ appears only through
$a_N(\ell,\eta)$. In contrast, the statistical term is controlled by the
one-dimensional entropy of $\mathcal F_{M,L}$ and does not depend on the FNO
rank. This is the precise sense in which the Picard-type FNO implementation
avoids the curse of parametric complexity at the statistical level, although the
numerical cost of evaluating the Fourier representation may still depend on the
spatial dimension through the number of active modes.
\end{remark}

\subsection{Long-Time Prediction by Rolling Out the Picard-Type FNO}
\label{app:F:long-time-fno}

We finally record the long-time consequence of the local Picard-type FNO
construction. This is a direct specialization of
Theorem~\ref{thm:blockwise-long-time} to the torus setting. The terminal trace map
\[
    \psi:U_M\to C(\mathbb T^d),
    \quad
    \psi(u):=u(T)
\]
is $1$-Lipschitz with respect to the $L^\infty$ norms:
\begin{equation}
\label{eq:F:terminal-trace-lipschitz}
    \|\psi(u)-\psi(v)\|_{L^\infty(\mathbb T^d)}
    \le
    \|u-v\|_{L^\infty(Q_T)} .
\end{equation}
Thus, in the notation of Theorem \ref{thm:blockwise-long-time_app} (Theorem~\ref{thm:blockwise-long-time}), we have
$L_\psi=1$ and $C_S=1$.
Let $\widehat\Gamma^{\rm FNO}_{N,\ell,\eta}:U_{0,R}\to U_M$ be a local
Picard-type FNO model. For example, one may take the estimator constructed in
Theorem~\ref{thm:torus-fno-generalization}, extended from $U_0$ to
$U_{0,R}$ by the same Picard-type FNO formula. Define the exact local
time-$T$ transition by
\[
    \Phi_\ast:=\psi\circ \mathcal G_\ast .
\]
For $\kappa\in\mathbb N$, set
\[
    U_0^{[\kappa]}
    :=
    \left\{
    u_0\in U_0:
    \Phi_\ast^j(u_0)\in U_{0,R}
    \text{ for } j=0,\ldots,\kappa
    \right\}.
\]
We assume $\Pi(U_0^{[\kappa]})>0$ and write
$\Pi_\kappa:=\Pi(\cdot\mid U_0^{[\kappa]})$.

For $u_0\in U_0^{[\kappa]}$, define the exact rollout states by
\[
    v_0:=u_0,
    \quad
    v_{j+1}:=\Phi_\ast(v_j),
    \qquad
    j=0,\ldots,\kappa-1 .
\]
Let $\mathcal P_R:C(\mathbb T^d)\to U_{0,R}$ be the pointwise clipping map
\[
    (\mathcal P_R f)(x)
    :=
    \max\{-R,\min\{f(x),R\}\}.
\]
The approximate rollout transition induced by the Picard-type FNO is
\begin{equation}
\label{eq:F:approx-rollout-transition}
    \widehat\Phi^{\rm FNO}_{N,\ell,\eta}
    :=
    \mathcal P_R\circ \psi\circ
    \widehat\Gamma^{\rm FNO}_{N,\ell,\eta}.
\end{equation}
The approximate states are defined by
\[
    \widehat v_0:=u_0,
    \quad
    \widehat v_{j+1}
    :=
    \widehat\Phi^{\rm FNO}_{N,\ell,\eta}(\widehat v_j),
    \quad
    j=0,\ldots,\kappa-1 .
\]
The clipping in \eqref{eq:F:approx-rollout-transition} is used only to keep the
next input state inside $U_{0,R}$. It is not an output clipping layer in the
local Picard-type FNO; the local FNO itself is already $U_M$-valued by
\eqref{eq:F:approx-picard-self-map}.

For $j=0,\ldots,\kappa-1$, define the block-wise long-time risk by
\[
    L_j^{\rm block}
    \left[
    \widehat\Gamma^{\rm FNO}_{N,\ell,\eta}
    \right]
    :=
    \mathbb E_{u_0\sim\Pi_\kappa,\ z\sim\mu}
    \left[
    \left|
    \widehat\Gamma^{\rm FNO}_{N,\ell,\eta}(\widehat v_j)(z)
    -
    \mathcal G_\ast(v_j)(z)
    \right|^2
    \right].
\]
Here $z=(x,s)\in Q_T$ is the local space-time coordinate on the $j$-th block,
corresponding to the global time $t=jT+s$.

Define the rollout-relevant state set by
\[
    \mathcal V_\kappa
    \left(
    \widehat\Gamma^{\rm FNO}_{N,\ell,\eta}
    \right)
    :=
    \left\{
    v_j(u_0),\widehat v_j(u_0):
    u_0\in U_0^{[\kappa]},
    \ j=0,\ldots,\kappa
    \right\}
    \subset U_{0,R}.
\]
The corresponding rollout-relevant local error is
\[
    \varepsilon_{{\rm loc},N,\ell,\eta}^{[\kappa]}
    :=
    \sup_{v\in
    \mathcal V_\kappa
    (
    \widehat\Gamma^{\rm FNO}_{N,\ell,\eta}
    )}
    \left\|
    \widehat\Gamma^{\rm FNO}_{N,\ell,\eta}(v)
    -
    \mathcal G_\ast(v)
    \right\|_{L^\infty(Q_T)} .
\]

\begin{theorem}[Long-time rollout of the Picard-type FNO]
\label{thm:F:long-time-fno}
Assume the setting of Appendix~\ref{app:F:torus-setting} and let
$\widehat\Gamma^{\rm FNO}_{N,\ell,\eta}:U_{0,R}\to U_M$ be a Picard-type FNO
local model. Then, for every $\kappa\in\mathbb N$ and every
$j=0,\ldots,\kappa-1$,
\begin{equation}
\label{eq:F:long-time-fno-bound}
    L_j^{\rm block}
    \left[
    \widehat\Gamma^{\rm FNO}_{N,\ell,\eta}
    \right]
    \le
    \left(
    \sum_{r=0}^j
    \left(
    \frac1{1-\delta}
    \right)^r
    \right)^2
    \left(
    \varepsilon_{{\rm loc},N,\ell,\eta}^{[\kappa]}
    \right)^2 .
\end{equation}
In particular, the rollout step introduces no additional Rademacher complexity
term. The block index $j$ appears only through the deterministic stability
factor in \eqref{eq:F:long-time-fno-bound}.
\end{theorem}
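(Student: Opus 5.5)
This theorem is a direct specialization of Theorem~\ref{thm:blockwise-long-time_app}, so the plan is to check its hypotheses in the torus setting and read off the constants. First, Assumption~\ref{ass:contraction} holds with $C_S=1$ under \eqref{eq:F:small-time-conditions}, as recorded in Appendix~\ref{app:F:torus-setting} via Proposition~\ref{prop:ass_app} and the $L^\infty$-contractivity \eqref{L^infty-contractive} of $e^{t\Delta_{\mathbb T^d}}$. Hence Proposition~\ref{prop:LWP}(i) gives the stability estimate
\[
\|\mathcal G_\ast(u_0)-\mathcal G_\ast(v_0)\|_{L^\infty(Q_T)}\le \frac{1}{1-\delta}\|u_0-v_0\|_{L^\infty(\mathbb T^d)}
\]
on $U_{0,R}$. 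Second, the terminal trace condition holds because $U_M\subset C([0,T];C(\mathbb T^d))$. By \eqref{eq:F:terminal-trace-lipschitz}, $\psi$ is Lipschitz with $L_\psi=1$, since $\sup_x|u(x,T)-v(x,T)|\le\sup_{Q_T}|u-v|$ by continuity in time.

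Next I check the structural requirements of the abstract rollout. The local model $\widehat\Gamma^{\rm FNO}_{N,\ell,\eta}$ is defined on $U_{0,R}$ and is $U_M$-valued by \eqref{eq:F:approx-picard-self-map}, so $\psi\circ\widehat\Gamma^{\rm FNO}_{N,\ell,\eta}$ makes sense. The clipping $\mathcal P_R$ is pointwise, so it is $1$-Lipschitz in $L^\infty$ and acts as the identity on $U_{0,R}$. Consequently every $\widehat v_j$ lies in $U_{0,R}$. The exact states $v_j$ lie in $U_{0,R}$ for $j\le\kappa$ by the definition of $U_0^{[\kappa]}$. Thus $\mathcal V_\kappa\subset U_{0,R}$, and the local error $\varepsilon^{[\kappa]}_{{\rm loc},N,\ell,\eta}$ plays exactly the role of \eqref{local_error}.

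I would rerun the recursion from the proof of Theorem~\ref{thm:blockwise-long-time_app} verbatim rather than merely cite it, because the abstract statement is phrased for $\widehat\Gamma_\ell$ coming from ERM, while here any $U_M$-valued local model is allowed. That proof never uses how the model was obtained. Set $e_j=\|\widehat v_j-v_j\|_{L^\infty}$, so $e_0=0$. Splitting through $\mathcal G_\ast(\widehat v_j)$ gives
\[
e_{j+1}\le \varepsilon+\frac{1}{1-\delta}\,e_j.
\]
Writing $\varepsilon=\varepsilon^{[\kappa]}_{{\rm loc},N,\ell,\eta}$, the same split gives the block error bound $\varepsilon+(1-\delta)^{-1}e_j\le\varepsilon\sum_{r=0}^j(1-\delta)^{-r}$ uniformly in $z\in Q_T$. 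Squaring and taking the expectation over $\Pi_\kappa\otimes\mu$ yields \eqref{eq:F:long-time-fno-bound}.

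The only delicate point is the measurability and well-definedness of the conditioning and expectations, and this is inherited from the abstract setting. There is no statistical step, which is why no Rademacher term appears.
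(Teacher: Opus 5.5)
Your proposal is correct and matches the paper's proof, which simply invokes Theorem~\ref{thm:blockwise-long-time} after noting $C_S=1$ (from the initial-data Lipschitz bound for the torus Picard map) and $L_\psi=1$, so the stability factor becomes $1/(1-\delta)$. Rerunning the recursion $e_{j+1}\le\varepsilon+(1-\delta)^{-1}e_j$ explicitly is the same argument unrolled; it also makes visible that the proof uses only that the local model is defined on $U_{0,R}$ and is $U_M$-valued, not how it was trained.
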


\begin{proof}
This is Theorem~\ref{thm:blockwise-long-time} applied to the present torus
setting. Indeed, by \eqref{eq:F:initial-data-lipschitz}, the local solution
operator satisfies
\begin{equation}
\label{eq:F:torus-solution-lipschitz-for-rollout}
    \|\mathcal G_\ast(u_0)-\mathcal G_\ast(v_0)\|_{L^\infty(Q_T)}
    \le
    \frac1{1-\delta}
    \|u_0-v_0\|_{L^\infty(\mathbb T^d)}
\end{equation}
for all $u_0,v_0\in U_{0,R}$. This is the estimate
\eqref{CDoI} with $C_S=1$. Moreover, the terminal trace map satisfies
$L_\psi=1$ by \eqref{eq:F:terminal-trace-lipschitz}. Hence the stability
factor in Theorem~\ref{thm:blockwise-long-time} becomes
\[
    L_\psi\frac{C_S}{1-\delta}
    =
    \frac1{1-\delta}.
\]
Substituting this value into Theorem~\ref{thm:blockwise-long-time} gives
\eqref{eq:F:long-time-fno-bound}.
\end{proof}


\begin{remark}[Oracle local-error specialization]
\label{rem:F:oracle-long-time-local-error}
If one considers the target-specific oracle implementation
$\Gamma^{\rm FNO}_{N,\ell,\rho_{F_\ast}}$ and if the rollout-relevant states
are contained in a fixed compact set
$\mathcal K_\kappa\Subset C(\mathbb T^d)\cap U_{0,R}$, then the compactness
argument in the proof of Theorem~\ref{thm:torus-implementation-error}, with
$U_0$ replaced by $\mathcal K_\kappa$, yields a Fourier error
$a_{N,\mathcal K_\kappa}(\ell)\to0$ such that
\[
    \sup_{v\in\mathcal K_\kappa}
    \left\|
    \Gamma^{\rm FNO}_{N,\ell,\rho_{F_\ast}}(v)
    -
    \mathcal G_\ast(v)
    \right\|_{L^\infty(Q_T)}
    \le
    \frac{M\delta^\ell}{1-\delta}
    +
    \frac{1-\delta^\ell}{1-\delta}\,T\eta
    +
    a_{N,\mathcal K_\kappa}(\ell).
\]
Consequently, for this oracle Picard-type FNO,
\[
    L_j^{\rm block}
    \left[
    \Gamma^{\rm FNO}_{N,\ell,\rho_{F_\ast}}
    \right]
    \le
    \left(
    \sum_{r=0}^j
    \left(
    \frac1{1-\delta}
    \right)^r
    \right)^2
    \left(
    \frac{M\delta^\ell}{1-\delta}
    +
    \frac{1-\delta^\ell}{1-\delta}\,T\eta
    +
    a_{N,\mathcal K_\kappa}(\ell)
    \right)^2 .
\]
This illustrates how the local Picard truncation error, scalar-network
approximation error, and Fourier implementation error propagate over long
time horizons. The Fourier rank still appears only through the local
implementation error, and the rollout itself introduces no new statistical
complexity term.
\end{remark}

\begin{remark}[The case of Defocusing nonlinearities]
\label{rem:F:defocusing-strictly-dissipative-rollout}
We record two useful refinements of the long-time rollout discussion for
defocusing nonlinearities.

\begin{enumerate}
    \item[\rm (a)] Define the defocusing (or absorbing)  subclass by
\[
    \mathcal F^{\rm def}_{M,L}
    :=
    \left\{
    F\in\mathcal F_{M,L}:
    aF(a)\le0
    \text{ for all }a\in[-M,M]
    \right\}.
\]
Equivalently, $F(a)\le0$ for $a>0$ and $F(a)\ge0$ for $a<0$. If
$F_\ast\in\mathcal F^{\rm def}_{M,L}$, then the comparison principle gives
\[
    |u(t)|
    \le
    e^{t\Delta_{\mathbb T^d}}|u_0|,
    \quad
    \|u(t)\|_{L^\infty(\mathbb T^d)}
    \le
    \|u_0\|_{L^\infty(\mathbb T^d)}
\]
for the exact solution of
$\partial_tu-\Delta_{\mathbb T^d}u=F_\ast(u)$. Hence the local mild solution
extends globally in time. In particular, if $u_0\in U_{0,R}$, then
\[
    \Phi_\ast^j(u_0)\in U_{0,R}
    \quad
    \text{for every }j\ge0 .
\]
Therefore, for every finite rollout horizon $\kappa\in\mathbb N$, the
admissible exact-rollout set satisfies
\[
    U_0^{[\kappa]}=U_0 .
\]
Thus, Theorem~\ref{thm:F:long-time-fno} applies for arbitrarily long finite
time horizons without imposing an additional admissibility assumption on the
exact trajectory. This should not be confused with a uniform-in-time error
bound: the deterministic factor in \eqref{eq:F:long-time-fno-bound} may still
grow with the block index $j$, and the rollout-relevant local error
$\varepsilon_{{\rm loc},N,\ell,\eta}^{[\kappa]}$ may depend on $\kappa$.

    \item[\rm (b)] 
    
A stronger conclusion is available under strict dissipativity. For
$\lambda>0$, define
\[
    \mathcal F^{\rm sd}_{M,L,\lambda}
    :=
    \left\{
    F\in\mathcal F_{M,L}:
    (F(a)-F(b))(a-b)
    \le
    -\lambda |a-b|^2
    \text{ for all }a,b\in[-M,M]
    \right\}.
\]
If $F$ is $C^1$, this condition is equivalent to
$F'(a)\le-\lambda$ on $[-M,M]$. In particular,
$\mathcal F^{\rm sd}_{M,L,\lambda}\subset\mathcal F^{\rm def}_{M,L}$.
Typical examples are
\[
    F(u)=-\alpha u-\beta |u|^{p-1}u,
    \quad
    \alpha\ge\lambda,\ \beta\ge0,\ p\ge1 
\]
(when $\alpha=0$, $F$ is defocusing, but not strict dissipative). 
Assume now that $F_\ast\in\mathcal F^{\rm sd}_{M,L,\lambda}$. Then the exact
local solution operator is nonexpansive on each time block, and the exact
terminal transition is a contraction:
\begin{equation}
\label{eq:F:strict-nonexpansive-block}
    \|\mathcal G_\ast(u_0)-\mathcal G_\ast(v_0)\|_{L^\infty(Q_T)}
    \le
    \|u_0-v_0\|_{L^\infty(\mathbb T^d)},
\end{equation}
and
\begin{equation}
\label{eq:F:strict-terminal-contraction}
    \|\Phi_\ast(u_0)-\Phi_\ast(v_0)\|_{L^\infty(\mathbb T^d)}
    \le
    q
    \|u_0-v_0\|_{L^\infty(\mathbb T^d)},
    \quad
    q:=e^{-\lambda T}<1 .
\end{equation}
Indeed, for two exact solutions $u$ and $v$, the difference $w=u-v$ satisfies a
linear equation with a zeroth-order coefficient bounded above by $-\lambda$,
and the maximum principle yields the above estimates.

Consequently, the proof of Theorem~\ref{thm:F:long-time-fno} can be sharpened
as follows. Let
\[
    e_j
    :=
    \|\widehat v_j-v_j\|_{L^\infty(\mathbb T^d)} .
\]
Using \eqref{eq:F:strict-terminal-contraction} instead of the crude Lipschitz
bound \eqref{eq:F:torus-solution-lipschitz-for-rollout}, we obtain
\[
    e_{j+1}
    \le
    \varepsilon_{{\rm loc},N,\ell,\eta}^{[\kappa]}
    +
    q e_j,
    \quad
    e_0=0 .
\]
Hence
\[
    e_j
    \le
    \frac{1-q^j}{1-q}
    \varepsilon_{{\rm loc},N,\ell,\eta}^{[\kappa]} .
\]
Combining this with the block-wise nonexpansiveness
\eqref{eq:F:strict-nonexpansive-block}, we get
\[
    L_j^{\rm block}
    \left[
    \widehat\Gamma^{\rm FNO}_{N,\ell,\eta}
    \right]
    \le
    \left(
    1+
    \frac{1-q^j}{1-q}
    \right)^2
    \left(
    \varepsilon_{{\rm loc},N,\ell,\eta}^{[\kappa]}
    \right)^2 .
\]
In particular,
\[
    L_j^{\rm block}
    \left[
    \widehat\Gamma^{\rm FNO}_{N,\ell,\eta}
    \right]
    \le
    \left(
    1+\frac1{1-e^{-\lambda T}}
    \right)^2
    \left(
    \varepsilon_{{\rm loc},N,\ell,\eta}^{[\kappa]}
    \right)^2
    \quad
    \text{for all }j=0,\ldots,\kappa-1 .
\]
Thus, strict dissipativity replaces the growing stability factor in
\eqref{eq:F:long-time-fno-bound} by a factor uniformly bounded in the block
index. 
    
\end{enumerate}
\end{remark}

\section{Additional Remarks}\label{app:G}

\paragraph{Potential extensions.}
Although this paper mainly treats nonlinear parabolic PDEs, the essence of the framework lies not in parabolicity itself, but in the Picard-type fixed-point structure and in the ability to estimate the statistical complexity of the corresponding solution class. Picard iteration based on Banach's fixed point theorem is a standard method for constructing solutions to nonlinear evolution equations \citep{Banach1922,Henry1981,Pazy1983,Zeidler,CazenaveHaraux1998}, and it also appears in the analysis of the Navier--Stokes equations, nonlinear dispersive equations, wave equations, and related problems \citep{GigaMiyakawa1985,Cannone2004,Cazenave2003,Tao2006}. Therefore, the framework developed in this paper may be extendable to a broader class of nonlinear evolution equations that admit a Picard-type fixed-point formulation.

Furthermore, even for problems where Picard iteration is not directly applicable, many iterative solvers are used in PDEs and inverse problems, such as Newton methods, monotone iteration methods, and iterative regularization methods \citep{Deuflhard2011,Pao1992,KaltenbacherHuynh2022,Nguyen2023}. From this viewpoint, the idea of the present paper is not necessarily limited to Picard iteration, but may be extended to more general solver-aligned iterative frameworks. This direction is also consistent with recent work on neural general operator networks based on Banach fixed point iterations \citep{FeischlSchwabZehetgruber2025}, and such extensions are an important direction for future research.

\paragraph{Equation identification.}
The present paper uses the equation class $\Omega_{\rm eq}$ as structural prior
information, but it does not aim to identify the target equation specification
$\omega_\ast$. Although each $\omega\in\Omega_{\rm eq}$ determines a solution
operator $\mathcal G_\omega$, recovering $\omega_\ast$ from data or from
$\mathcal G_\ast$ is a separate inverse problem. This inverse direction raises
questions of identifiability, stability, and observation design, and is beyond
the scope of this paper. Extending the present framework toward equation
identification is an interesting direction for future work.

\paragraph{Weight-tied architectures.}
The Picard-type constructions in this paper can also be viewed as weight-tied
architectures. In Definition~\ref{def:F:picard-type-fno}, the same approximate
Picard transition $\mathcal T^N_{\rho,u_0}$ is repeatedly applied, and the same
scalar network $\rho$ is shared across all Picard steps. The Fourier--heat
operators $S_N$ and $\mathcal K_N$ are fixed by the PDE structure. Thus
increasing the Picard depth $\ell$ does not introduce independent layer-wise
parameters, but corresponds to iterating the same PDE-aligned transition.

This is closely related to the weight-tied neural operators discussed in
\citet[Remark~3]{Furuya2025} and to fixed-point or equilibrium-type models. The
present paper adds a statistical interpretation: because the Rademacher term is
controlled by the abstract Picard transition class, increasing the number of
Picard steps does not cause an unbounded growth of the entropy-based
statistical term. This suggests that weight tying is not only memory-efficient,
but also naturally aligned with the generalization theory of solver-aligned
operator learning. A systematic empirical study of such weight-tied
Picard-type neural operators is left for future work.

\paragraph{Beyond purely deterministic rollout.}
The long-time prediction result in this paper uses the learned local model
without retraining it after the first training on short-time interval. This is consistent with
the standard extension argument for nonlinear PDEs: once a local solution
operator is available, the solution is extended by using the terminal value of
one time block as the initial data for the next block. In this setting, the
rollout step is deterministic and introduces no additional Rademacher
complexity term.

From a practical viewpoint, however, this is only one possible strategy for
long-time prediction. It is natural to consider correction mechanisms that
reduce the accumulation of rollout errors. For example, one may introduce a
learned correction term at each time block, retrain or fine-tune the local model
along the rollout trajectory, or combine the learned model with data
assimilation when partial observations are available. Such approaches may
improve long-time stability in practice, especially for weakly stable or
unstable dynamics.

These extensions require a different theoretical treatment. Once correction,
retraining, or data assimilation is introduced during the rollout, the error is
no longer obtained only by deterministic propagation of a fixed local model. One
must also control the statistical error, optimization error, or assimilation
error generated at each correction step. Developing a generalization theory for
such corrected or data-assimilated rollouts is an important direction for future
work.

\paragraph{Further PDE-specific refinements.}
In Section~\ref{sec:torus-example} and Appendix~\ref{app:F:torus-setting}, we
treated semilinear heat equations on the torus as one concrete example of the
general theory. This example was chosen because the heat semigroup is explicitly
diagonalized by the Fourier basis and the Fej\'er cutoff gives an
$L^\infty$-stable implementation. However, the same strategy is not limited to
the torus setting. Similar results should be obtainable for other domains,
boundary conditions, linear operators, or basis expansions, provided that the
corresponding semigroup estimates, compactness properties, and implementation
errors can be controlled.

Moreover, different PDE classes carry different structural properties, and
these properties may lead to sharper or more specialized bounds. For example,
maximum principles, comparison principles, dissipativity, smoothing effects,
spectral decay, or conservation laws can be incorporated into the rollout or
implementation analysis when they are available. As a simple illustration, in
Remark~\ref{rem:F:defocusing-strictly-dissipative-rollout} we discussed
defocusing and strictly dissipative nonlinearities. In the defocusing case, the
comparison principle gives global admissibility of the exact rollout, while in
the strictly dissipative case the stability factor in the long-time error
propagation can be replaced by a uniformly bounded one. These examples suggest
that PDE-specific structure can be used not only to construct Picard-type
implementations, but also to improve the resulting long-time prediction bounds.
A systematic study of such PDE-specific refinements is left for future work.

\end{document}